\definecolor{orange}{rgb}{1,0.4,0.0}
\newcommand{\rev}[1]{{\textcolor{black}{{#1}}}}
\DeclarePairedDelimiterXPP{\KL}[2]{D_\textnormal{KL}}{(}{)}{}{%
#1\:\delimsize\|\:#2% 
}
\DeclarePairedDelimiterXPP{\RD}[2]{D_{\alpha}}{(}{)}{}{%
#1\:\delimsize\|\:#2%
}
\DeclarePairedDelimiterXPP\Prob[1]{\mathbb{P}}{\lbrace}{\rbrace}{}{

#1}
\DeclarePairedDelimiterXPP{\lnorm}[2]{}{\lVert}{\rVert}{_{#2}}{#1}
\newcommand{\bE}{\ensuremath{\mathbb{E}}}
\newcommand{\bR}{\ensuremath{\mathbb{R}}}
\newcommand{\bI}{\ensuremath{\mathbbm{1}}}
\newcommand{\bW}{\ensuremath{\mathbb{W}}}
\newcommand{\cW}{\ensuremath{\mathcal{W}}}
\newcommand{\cX}{\ensuremath{\mathcal{X}}}
\newcommand{\cY}{\ensuremath{\mathcal{Y}}}
\newcommand{\cZ}{\ensuremath{\mathcal{Z}}}
\newcommand{\cN}{\ensuremath{\mathcal{N}}}
\newcommand{\cL}{\ensuremath{\mathscr{L}}}
\newcommand{\stack}[2]{\stackrel{\mathclap{(#1)}}{#2}}
\newtheorem{definition}{Definition}
\newtheorem{lemma}{Lemma}
\newtheorem{example}{Example}
\newtheorem{claim}{Claim}
\newmdtheoremenv{proposition}{Proposition}
\newmdtheoremenv{theorem}{Theorem}
\newmdtheoremenv{corollary}{Corollary}
\newtheorem{remark}{Remark}
\newmdtheoremenv{assumption}{Assumption}
\title{Tighter Expected Generalization Error Bounds via Wasserstein Distance}
\author{%
  Borja Rodríguez-Gálvez\\
  KTH Royal Institute of Technology\\
  Stockholm, Sweden \\
  \texttt{borjarg@kth.se} \\
  \And
  Germán Bassi\\
  Ericsson Research\\
  Stockholm, Sweden \\
  \texttt{german.bassi@ericsson.com}
  \And 
  Ragnar Thobaben\\
  KTH Royal Institute of Technology\\
  Stockholm, Sweden \\
  \texttt{ragnart@kth.se} \\
  \And 
  Mikael Skoglund\\
  KTH Royal Institute of Technology\\
  Stockholm, Sweden \\
  \texttt{skoglund@kth.se} \\
}
\begin{document}

\maketitle

\begin{abstract}
This work presents several expected generalization error bounds based on the Wasserstein distance. More specifically, it introduces full-dataset, single-letter, and random-subset bounds, and their \rev{analogues} in the randomized subsample setting from \citet{steinke2020reasoning}.
Moreover, when the loss function is bounded and the geometry of the space is ignored by the choice of the metric in the Wasserstein distance, these bounds recover from below (and thus, are tighter than) current bounds based on the relative entropy. In particular, they generate new, non-vacuous bounds based on the relative entropy. Therefore, these results can be seen as a bridge between works that account for the geometry of the hypothesis space and those based on the relative entropy, which is agnostic to such geometry. Furthermore, it is shown how to produce various new bounds based on different information measures (e.g., the lautum information or several $f$-divergences) based on these bounds and how to derive similar bounds with respect to the backward channel using the presented proof techniques.
 %In this work, several expected generalization error bounds based on the Wasserstein distance are presented.
 %More specifically, there are full-dataset, single-letter, and random-subset bounds on both the standard setting and the randomized-subsample setting from~\citet{steinke2020reasoning}. 
 %Moreover, when the loss function is bounded and the geometry of the hypothesis space is ignored, these bounds recover from below (and thus, are tighter than) current bounds based on the relative entropy.
 %In particular, for the standard setting, they generate new, non-vacuous bounds based on the relative entropy. 
 %Therefore, these results can be seen as a bridge between works that account for the geometry of the hypothesis space and those based on the relative entropy, which are agnostic to such geometry.
 %Furthermore, it is shown how to produce various new bounds based on different information measures (e.g., the lautum information or several $f$-divergences) based on these bounds and how to derive similar bounds with respect to the backward channel using the presented proof techniques.
\end{abstract}

%%%%%%%%%%%%%%%%%%%%%%%%%%%%%%%%%%%%%%%%%%%%%%%%%%%%%%%%%%%%
\section{Introduction}
\label{sec:introduction}

A \emph{learning algorithm} is a mechanism that takes a \emph{dataset} $s = (z_1, \ldots, z_n)$ of $n$ samples $z_i \in \cZ$ taken i.i.d.\ from a distribution $P_Z$ as an input, and produces a hypothesis $w \in \cW$ by means of the conditional probability distribution $P_{W|S}$.

The ability of a hypothesis $w$ to characterize a sample $z$ is described by the loss function $\ell(w,z) \in \bR$.
More precisely, a hypothesis $w$ describes well the samples from a population $P_Z$ when its \emph{population risk}, i.e., $\smash{\cL_{P_Z}(w) \triangleq \bE[\ell(w,Z)]}$, is low.
However, the distribution $P_Z$ is often not available and the \emph{empirical risk} on the dataset $s$, i.e., $\smash{\cL_s(w) \triangleq \frac{1}{n} \sum_{i=1}^n \ell(w,z_i)}$, is considered as a proxy.
Therefore, it is of interest to study the discrepancy between the population and empirical risks, which is defined as the \emph{generalization error}:
\begin{equation*}
    \smash{\textnormal{gen}(w,s) \triangleq \cL_{P_Z}(w) - \cL_s(w).}
\end{equation*}
Classical approaches bound the generalization error in expectation and in probability (PAC Bayes) either by studying the complexity and the geometry of the hypothesis' space $\cW$ or by exploring properties of the learning algorithm itself; see, e.g., \citep{shalev2014understanding, vapnik2013nature} for an overview. 

More recently, the relationship (or amount of information) between the generated hypothesis and the training dataset has been used as an indicator of the generalization performance.
In \citep{xu2017information}, based on \citep{russo2019much}, it is shown that the expected generalization error, i.e., $\smash{\overline{\textnormal{gen}}(W,S) \triangleq \bE[\textnormal{gen}(W,S)]}$, is bounded from above by a function that depends on the mutual information between the hypothesis $W$ and the dataset $S$ with which it is trained, i.e., $I(W;S)$.
However, this bound becomes vacuous when $I(W;S) \to \infty$, which occurs for example when \rev{$W$ and $S$ are separately continuous and $W$ is a deterministic function of $S$.} 
%$W$ is a deterministic function of $S$ and $W$ and $S$ are separately continuous.
To address this issue, it is shown in~\citep{bu2020tightening} that the generalization error is also bounded by a function on the dependency between the hypothesis and individual samples, $I(W;Z_i)$, which is usually finite due to the smoothing effect of marginalization.
Following this line of work, in \citep{negrea2019information}, the authors present data-dependent bounds based on the relationship between the hypothesis and random subsets of the data, i.e., $\KL{P_{W|s}}{P_{W|s_{j^c}}}$ \rev{where $j \subseteq [n]$}.

After that, a more structured setting is introduced in~\citep{steinke2020reasoning}, studying instead the relationship between the hypothesis and the \emph{identity} of the samples.
The authors consider a super-sample of $2n$ i.i.d.\ instances $\tilde{z}_i$ from $P_Z$, i.e., $\tilde{s} = (\tilde{z}_1, \ldots, \tilde{z}_{2n})$.
This super sample is used to construct the dataset $s$ by choosing between the samples $\tilde{z}_i$ and $\tilde{z}_{i+n}$ using a Bernoulli random variable $U_i$ with probability $\frac{1}{2}$, i.e., $z_i = \tilde{z}_{i+u_i n}$. In this paper, the two settings are referred to as the \emph{standard} and \emph{randomized-subsample} settings.\footnote{In \citep{hellstrm2020generalization} the latter is called the random-subset setting. However, this may cause confusion with the random-subset bounds in the present work.} 
In the randomized-subsample setting, the \emph{empirical generalization error} is defined as the difference between the empirical risk on the samples from $\tilde{s}$ not used to obtain the hypothesis, i.e., $\Bar{s} = \tilde{s} \setminus s$, and the empirical risk on the dataset $s$, i.e.,
\begin{equation*}
    %\smash{ % I removed this because it makes the limits of the sum overlap with the text
    \smash{\widehat{\textnormal{gen}}(w,\tilde{s},u) \triangleq \cL_{\Bar{s}}(w) - \cL_s(w) = \frac{1}{n} \sum\nolimits_{i=1}^n \big( \ell(w,\tilde{z}_{i+(1-u_i)n}) - \ell(w,\tilde{z}_{i+u_i n})\big)},
\end{equation*}
where $u$ is the sequence of $n$ i.i.d.\ Bernoulli trial outcomes $u_i$.
The expected value of the empirical and the (standard) generalization errors coincide, i.e., $\bE[\widehat{\textnormal{gen}}(W,\tilde{S},U)] = \overline{\textnormal{gen}}(W,S)$.
Also, the expected generalization error is controlled by the conditional mutual information between the hypothesis $W$ and the Bernoulli trials $U$, given the super sample $\tilde{S}$ \citep{steinke2020reasoning}, i.e., $I(W; U| \tilde{S})$, by the individual conditional mutual information \citep{rodriguez2020random}, i.e., $I(W; U_i| \tilde{Z}_i,\tilde{Z}_{i+n})$, and by the ``disintegrated'' mutual information with a subset $U_J$ of the Bernoulli trials~\citep{haghifam2020sharpened}, i.e., $\KL{P_{\smash{W|\tilde{S},U}}}{P_{\smash{W|\tilde{S},U_{J^c}}}}$.
A highlight of this setting is that these conditional notions of information are always finite~\citep{steinke2020reasoning} and smaller than their ``unconditional'' counterparts~\citep{haghifam2020sharpened}, e.g., $I(W;U|\tilde{S}) \leq I(W;S)$ and $I(W;U|\tilde{S}) \leq n \log(2)$.

Some steps towards unifying these results are taken in~\citep{hellstrm2020generalization}, where the authors develop a framework that makes it possible to recover the expected generalization error bounds based on the mutual information $I(W;S)$ and the conditional mutual information $I(W;U|\tilde{S})$.
Then, the aforementioned framework is further exploited in~\citep{rodriguez2020random} to recover the single-sample and the random-subsets bounds, which are based on $I(W;Z_i)$, $\KL{P_{W|S}}{P_{W|S_{J^c}}}$, and $\KL{P_{\smash{W|\tilde{S},U}}}{P_{\smash{W|\tilde{S},U_{J^c}}}}$, and to generate new individual conditional mutual information bounds, i.e., $\smash{I(W;U_i|\tilde{Z}_i,\tilde{Z}_{i+n})}$.
Finally, in \citep{hafez2020conditioning, zhou2020individually}, other systematic ways to recover some of the said bounds and obtain similar new ones are studied.

In parallel, there were some attempts to bridge the gap between employing the geometry and complexity of the hypothesis space and the relationship between the hypothesis and the training samples.
In \citep{asadi2018chaining}, the authors bound $\overline{\textnormal{gen}}(W,S)$ with a function of weighted dependencies between the dataset and increasingly finer quantizations of the hypothesis, i.e., $\smash{\lbrace 2^{-k/2}I([W]_k; S) \rbrace_k}$, which can be finite even if $I(W;S) \to \infty$. \rev{This result stems from a clever usage of the chaining technique~\citep[Theorem 5.24]{van2014probability}, and a comparison with this kind of approaches is given in Appendix~\ref{app:geometry_and_bounds}.}
Later, in~\citep{wang2019information} and~\citep{zhang2018optimal}, it is shown that the expected generalization error is bounded from above by a function of the Wasserstein distance between the hypothesis distribution after observing the dataset $P_{W|S}$ and its prior $P_W$, i.e., $\bW_p(P_{W|S},P_W)$, and by a function of the Wasserstein distance between the hypothesis distribution after observing a single sample $P_{W|Z_i}$ and its prior $P_W$, i.e., $\bW_p(P_{W|Z_i},P_W)$, which are finite when a suitable metric is chosen but are difficult to evaluate.
Concurrently, in \citep{lopez2018generalization} it is shown that a similar result holds, if the metric is the Minkowski distance, for the distribution of the data $P_S$ and the backward channel $P_{S|W}$, i.e., $\bW_{p,\| \cdot \|}^p(P_{S|W},P_S)$.

The main contributions of this paper are the following:
\begin{itemize}
    \item It introduces new, tighter single letter and random-subset Wasserstein distance bounds for the standard and randomized-subsample settings (Theorems~\ref{th:iwas_std}, \ref{th:rswas_std}, \ref{th:iwas_rs}, and~\ref{th:rswas_rs}).
    \item It shows that when the loss is bounded and the geometry of the space is ignored, these bounds recover from below (and thus are tighter than) the current relative entropy and mutual information bounds on both the standard and randomized-subsample settings. In fact, they are also tighter when the loss is additionally subgaussian or under certain milder conditions on the geometry. However, these results are deferred to Appendix~\ref{app:generality} to expose the main ideas more clearly. Moreover, Corollaries~\ref{cor:itv_std} and~\ref{cor:rstv_std} overcome the issue of potentially vacuous relative entropy bounds on the standard setting.
    \item It introduces new bounds based on the backward channel, which are analogous to those based on the forward channel and more general than previous results in~\citep{lopez2018generalization}.
    \item It shows how to generate new bounds based on a variety of information measures, e.g., the lautum information or several $f$-divergences like the Hellinger distance or the $\chi^2$-divergence, thus making the characterization of the generalization more flexible.
\end{itemize}

\begin{comment}
%%%%%%%%%%%%%%%%%%%%%%%%%%%%%%%%%%%%%%%%%%%%%%%%%%%%%%%%%%%%
\subsection{Paper contributions}
\label{subsec:contributions}

\begin{enumerate}
    \item Extends the work from~\citep{wang2019information} introducing new, tighter single-letter and random-subset Wasserstein distance-based bounds for the standard and randomized-subsample settings in Theorems~\ref{th:iwas_std}, \ref{th:rswas_std}, \ref{th:iwas_rs}, and~\ref{th:rswas_rs}.
    
    \item Shows that when the loss is bounded and the geometry of the hypothesis space is ignored, these bounds recover from below (and hence are tighter than) the current relative entropy and mutual information bounds on both the standard and the randomized-subsample settings. Moreover, Corollaries~\ref{cor:itv_std} and~\ref{cor:rstv_std} overcome the issue of potentially vacuous relative entropy bounds on the standard setting. The presented bounds are also tighter when the loss is both Lipschitz and subgaussian. For clarity of exposition, this is deferred to Appendix~\ref{app:generality}.
    
    \item Extends the work from~\citep{lopez2018generalization} showing new, more general bounds based on the backward channel analogous to those based on the forward channel.
    
    \item Shows how to generate new bounds based on a variety of information measures, e.g., the lautum information or several $f$-divergences like the Hellinger distance or the $\chi^2$-divergence, thus making the characterization of the generalization error more flexible.
\end{enumerate}
\end{comment}

%%%%%%%%%%%%%%%%%%%%%%%%%%%%%%%%%%%%%%%%%%%%%%%%%%%%%%%%%%%%
\section{Preliminaries}
\label{sec:preliminaries}

%%%%%%%%%%%%%%%%%%%%%%%%%%%%%%%%%%%%%%%%%%%%%%%%%%%%%%%%%%%%
\subsection{Notation}
\label{subsec:notation}

Random variables $X$ are written in capital letters, their realizations $x$ in lower-case letters, their set of outcomes $\cX$ in calligraphic letters, and their Borel $\sigma$-algebras $\mathscr{X}$ in script-style letters.
Moreover, the probability distribution of a random variable $X$ is written as $P_X: \mathscr{X} \to [0,1]$. Hence, the random variable $X$ or the probability distribution $P_X$ induce the probability space $(\cX, \mathscr{X}, P_X)$.
When more than one random variable is considered, e.g., $X$ and $Y$, their joint distribution is written as $P_{X,Y}: \mathscr{X} \otimes \mathscr{Y} \to [0,1]$ and their product distribution as $P_X \otimes P_Y: \mathscr{X} \otimes \mathscr{Y} \to [0,1]$.
Moreover, the conditional probability distribution of $Y$ given $X$ is written as $P_{Y|X}: \mathscr{Y} \otimes \cX \to [0,1]$ and defines a probability distribution $P_{Y|X=x}$ (or $P_{Y|x}$ for brevity) over $\cY$ for each element $x \in \cX$.
Finally, there is an abuse of notation writing $P_{X,Y} = P_{Y|X} \times P_X$ since $\smash{P_{X,Y}(B) =  \int \big( \int \chi_B \big((x,y) \big) dP_{Y|X=x}(y) \big) dP_X(x)}$ for all $B \in \mathscr{X} \otimes \mathscr{Y}$, where $\chi_B$ is the characteristic function of the set $B$. The natural logarithm is $\log$.

%%%%%%%%%%%%%%%%%%%%%%%%%%%%%%%%%%%%%%%%%%%%%%%%%%%%%%%%%%%%
\subsection{Necessary definitions, remarks, claims, and lemmas}
\label{subsec:defs_and_lemmas}

\begin{definition}
Let $\rho: \cX \times \cX \to \bR_+$ be a metric. A space $(\cX,\rho)$ is Polish if it is complete and separable. Throughout it is assumed that all Polish spaces $(\cX,\rho)$ are equipped with the Borel $\sigma$-algebra $\mathscr{X}$ generated by $\rho$. When there is no ambiguity, both the metric space $(\cX,\rho)$ and the generated measurable space $(\cX,\mathscr{X})$ are written as $\cX$.
\end{definition}

\begin{definition}
\label{def:was}
Let $(\cX,\rho)$ be a Polish metric space and let $p \in [1,\infty)$. Then, the \emph{Wasserstein distance} of order $p$ between two probability distributions $P$ and $Q$ on $\cX$ is
\begin{equation*}
    \bW_p(P,Q) \triangleq \Big( \inf_{R \in \Pi(P,Q)} \int_{\cX \times \cX} \rho(x,y)^p dR(x,y) \Big)^{1/p},
\end{equation*}
where $\Pi(P,Q)$ is the set of all couplings $R$ of $P$ and $Q$, i.e., all joint distributions on $\cX \times \cX$ with marginals $P$ and $Q$, that is, $P(B) = R(B,\cX)$ and $Q(B) = R(\cX,B)$ for all $B \in \mathscr{X}$.
\end{definition}

\begin{remark}
\label{remark:holder}
Hölder's inequality implies that $\bW_p \leq \bW_q$ for all $p \leq q$ \citep[Remark~6.6]{villani2008optimal}. Hence, since this work is centered on upper bounds the focus is on $\bW \triangleq \bW_1$.
\end{remark}

\begin{definition}
\label{def:lipschitz}
A function $f: \cX \to \bR$ is said to be $L$-Lipschitz under the metric $\rho$, or simply $f \in L\textnormal{-Lip}(\rho)$, if $|f(x) - f(y)| \leq L \rho(x,y)$ for all $x,y \in \cX$.
\end{definition}

\begin{lemma}[{Kantorovich-Rubinstein duality~\citep[Remark~6.5]{villani2008optimal}}]
\label{lemma:kr_duality}
Let $\mathcal{P}_1(\cX)$ be the space of probability distributions on $\cX$ with a finite first moment. Then, for any two distributions $P$ and $Q$ in $\mathcal{P}_1(\cX)$
\begin{equation*}
    \bW(P,Q) = \sup_{f \in 1\textnormal{-Lip}(\rho)} \Big\lbrace \int\nolimits_{\cX} f(x)dP(x) - \int\nolimits_{\cX} f(x) dQ(x) \Big\rbrace.
    \tag{KR duality}
\end{equation*}
\end{lemma}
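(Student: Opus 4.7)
The plan is to establish the two inequalities separately, with the easy direction using a direct coupling argument and the harder direction relying on linear programming duality.

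For the easy direction, $\bW(P,Q) \geq \sup_{f \in 1\textnormal{-Lip}(\rho)}\{ \int f\,dP - \int f\,dQ\}$, I would fix any 1-Lipschitz $f$ and any coupling $R \in \Pi(P,Q)$. Since $R$ has marginals $P$ and $Q$, one can rewrite $\int f\,dP - \int f\,dQ = \int_{\cX \times \cX} [f(x) - f(y)]\,dR(x,y)$, and by the Lipschitz property this is upper bounded by $\int \rho(x,y)\,dR(x,y)$. Taking the infimum over $R$ yields $\int f\,dP - \int f\,dQ \leq \bW(P,Q)$, and then the supremum over $f$ gives the inequality. This step is entirely routine.

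For the hard direction I would appeal to the general Kantorovich duality theorem for lower semicontinuous cost functions on Polish spaces (see, e.g., Villani, \emph{Optimal Transport}, Theorem~5.10), which states that
\begin{equation*}
    \inf_{R \in \Pi(P,Q)} \int c(x,y)\,dR(x,y) = \sup_{(\phi,\psi)} \Big\{ \int \phi(x)\,dP(x) + \int \psi(y)\,dQ(y) \Big\},
\end{equation*}
where the supremum is over integrable pairs $(\phi,\psi)$ with $\phi(x) + \psi(y) \leq c(x,y)$. Specializing to $c(x,y) = \rho(x,y)$, the key step is to reduce the class of admissible dual pairs to the form $(f,-f)$ with $f$ being 1-Lipschitz. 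This uses the $c$-transform construction: given any admissible $(\phi,\psi)$, one replaces $\psi$ by its $c$-conjugate $\psi^c(y) \triangleq \inf_x [\rho(x,y) - \phi(x)]$ and $\phi$ by $\phi^c(x) \triangleq \inf_y [\rho(x,y) - \psi^c(y)]$. When the cost is a metric, each $c$-transform produces a 1-Lipschitz function (thanks to the triangle inequality and symmetry of $\rho$), and iterating yields a pair of the form $(f, -f)$ with $f \in 1\textnormal{-Lip}(\rho)$, which can only increase the dual objective. Therefore the supremum in Kantorovich duality coincides with the 1-Lipschitz supremum, completing the proof.

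The main obstacle is the abstract Kantorovich duality itself, which is not elementary: it rests either on a Hahn-Banach/Fenchel-Rockafellar argument in the space of continuous bounded functions or on a minimax theorem applied to the Lagrangian of the transport linear program, together with tightness arguments to ensure that a minimizing coupling exists. In the spirit of the present paper I would simply cite the Villani reference for this step, as done by the authors, and carry out only the $c$-transform reduction to 1-Lipschitz functions in detail, since that is the part specific to the metric-cost setting.
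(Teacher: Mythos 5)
The paper does not prove Lemma~\ref{lemma:kr_duality}; it is quoted verbatim as a classical fact, with a pointer to~\citep[Remark~6.5]{villani2008optimal}. Your sketch is therefore not being compared against a paper proof but against the source the paper cites, and on that score it is correct and matches the standard textbook argument. The easy inequality via $\int f\,dP - \int f\,dQ = \int [f(x)-f(y)]\,dR(x,y) \leq \int \rho\,dR$ is exactly right. For the hard direction, invoking general Kantorovich duality (Villani's Theorem~5.10) and then performing the double $c$-transform to reduce to pairs $(f,-f)$ with $f \in 1\textnormal{-Lip}(\rho)$ is the standard specialization to metric costs; the key facts you use --- that an infimum $\inf_x[\rho(x,y)-\phi(x)]$ of 1-Lipschitz functions of $y$ is 1-Lipschitz, and that the $c$-transform of a 1-Lipschitz $g$ equals $-g$ --- are both correct and together give the $(f,-f)$ normalization while only increasing the dual objective. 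One small notational slip: what you call $\psi^c$ is really the $c$-transform of $\phi$, not of $\psi$; the construction you carry out is the correct one, just mislabeled. Since the paper itself delegates the whole statement to Villani, your decision to cite the abstract duality and only spell out the metric-cost reduction is exactly the right level of detail and does not introduce any gap.
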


\begin{definition}
\label{def:tv}
The total variation between two probability distributions $P$ and $Q$ on $\cX$ is
\begin{equation*}
    \smash{\textnormal{\texttt{TV}}(P,Q) \triangleq \sup\nolimits_{A \in \mathscr{X}} \big \lbrace P(A) - Q(A) \rbrace.}
\end{equation*}
\end{definition}

\begin{definition}
\label{def:discrete_metric}
The discrete metric %(or Hamming distortion) 
is $\rho_{\textnormal{H}}(x,y) \triangleq \bI[x \neq y]$, where $\bI$ is the indicator function.
\end{definition}

\begin{remark}
\label{remark:bounded_lipschitz}
A bounded function $f: \cX \to [a,b]$ is $(b-a)$-Lipschitz under the discrete metric $\rho_{\textnormal{H}}$.
\end{remark}

\begin{remark}
\label{remark:was_tv}
The Wasserstein distance of order 1 is dominated by the total variation.
For instance, if $P$ and $Q$ are two distributions on $\cX$ then $\bW(P,Q) \leq d_\rho(\cX) \textnormal{\texttt{TV}}(P,Q)$, where $d_\rho(\cX)$ is the diameter of $\cX$.
In particular, when the discrete metric is considered $\bW(P,Q) = \textnormal{\texttt{TV}}(P,Q)$ \citep[Theorem~6.15]{villani2008optimal}.
\end{remark}

\begin{lemma}[{Pinsker's and Bretagnolle--Huber's (BH) inequalities}]
\label{lemma:pinsker_bh_ineq}
Let $P$ and $Q$ be two probability distributions on $\cX$ and define $\Psi(x) \triangleq \sqrt{\min \lbrace x/2, 1-\exp(-x)\rbrace}$, then~\citep[Theorem~6.5]{polyanskiy2014lecture} and~\citep[Proof of Lemma~2.1]{bretagnolle1978estimation} state that 
\begin{align*}
    \smash{\textnormal{\texttt{TV}}(P,Q) \leq \Psi(\KL{P}{Q}).}
\end{align*}
\end{lemma}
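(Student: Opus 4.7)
The statement packages two classical inequalities—Pinsker's and Bretagnolle--Huber's—under a common envelope $\Psi$, so the plan is to establish each bound separately and then take the pointwise minimum.

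For Pinsker's bound $\textnormal{\texttt{TV}}(P,Q) \leq \sqrt{\KL{P}{Q}/2}$, I would use the standard reduction to the Bernoulli case. By definition, $\textnormal{\texttt{TV}}(P,Q)$ is the supremum over measurable sets $A \in \mathscr{X}$ of $P(A) - Q(A)$; passing to the binary partition $(A, A^c)$ and invoking the data-processing inequality for the relative entropy reduces the claim to the scalar inequality
\begin{equation*}
    2(p - q)^2 \leq p \log\tfrac{p}{q} + (1-p) \log\tfrac{1-p}{1-q}, \quad p,q \in [0,1].
\end{equation*}
This can be verified by fixing $p$ and studying the one-variable function of $q$ (or symmetrically): the second derivative in $q$ is nonnegative, the function and its first derivative vanish at $q=p$, and a Taylor expansion closes the gap.

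For Bretagnolle--Huber's bound $\textnormal{\texttt{TV}}(P,Q) \leq \sqrt{1 - \exp(-\KL{P}{Q})}$, I would route through the Hellinger affinity $\eta(P,Q) \triangleq \int \sqrt{p\, q}\, d\mu$, where $p,q$ are densities of $P,Q$ with respect to a common dominating measure $\mu$. Two ingredients suffice. First, write $\textnormal{\texttt{TV}}(P,Q) = 1 - \int \min(p,q)\, d\mu$ and $\int \max(p,q)\, d\mu = 1 + \textnormal{\texttt{TV}}(P,Q)$; then Cauchy--Schwarz gives
\begin{equation*}
    \eta(P,Q)^2 = \Big( \int \sqrt{\min(p,q) \cdot \max(p,q)}\, d\mu \Big)^2 \leq \big(1 - \textnormal{\texttt{TV}}(P,Q)\big)\big(1 + \textnormal{\texttt{TV}}(P,Q)\big),
\end{equation*}
i.e.\ $\textnormal{\texttt{TV}}(P,Q)^2 \leq 1 - \eta(P,Q)^2$. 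Second, Jensen's inequality applied to the convex function $-\log$ yields
\begin{equation*}
    -2\log \eta(P,Q) = -2\log \mathbb{E}_P\Big[\sqrt{q/p}\Big] \leq -2\, \mathbb{E}_P\Big[\log\sqrt{q/p}\Big] = \KL{P}{Q},
\end{equation*}
so $\eta(P,Q)^2 \geq \exp(-\KL{P}{Q})$. Chaining these two yields $\textnormal{\texttt{TV}}(P,Q)^2 \leq 1 - \exp(-\KL{P}{Q})$.

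Taking the minimum of the two upper bounds on $\textnormal{\texttt{TV}}(P,Q)^2$ and extracting a square root gives $\textnormal{\texttt{TV}}(P,Q) \leq \Psi(\KL{P}{Q})$, as claimed. The only non-routine step is the scalar Bernoulli inequality underlying Pinsker, which is a one-variable calculus exercise but is the main obstacle if one wants a self-contained argument; both halves can alternatively be imported verbatim from the cited sources.
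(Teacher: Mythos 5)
Your proof is correct, and note that the paper itself offers no proof of Lemma~\ref{lemma:pinsker_bh_ineq}: it packages two classical results by citation only, pointing to \citep[Theorem~6.5]{polyanskiy2014lecture} for Pinsker and to \citep[Proof of Lemma~2.1]{bretagnolle1978estimation} for Bretagnolle--Huber, so there is no in-paper argument to compare against. Your reconstruction follows the standard modern routes: for Pinsker, a data-processing reduction to the two-point case followed by the scalar inequality $2(p-q)^2 \leq p\log\tfrac{p}{q} + (1-p)\log\tfrac{1-p}{1-q}$; for Bretagnolle--Huber, the Hellinger-affinity chain, namely $\textnormal{\texttt{TV}}(P,Q)^2 \leq 1 - \eta(P,Q)^2$ via Cauchy--Schwarz on $\sqrt{\min(p,q)\max(p,q)}$, and $\eta(P,Q)^2 \geq \exp(-\KL{P}{Q})$ via Jensen applied to $-\log$. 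Both halves are sound, and taking the pointwise minimum of the resulting bounds on $\textnormal{\texttt{TV}}(P,Q)^2$ gives exactly $\Psi(\KL{P}{Q})$. Two minor remarks. First, the Jensen step implicitly assumes $P \ll Q$ so that $\log\sqrt{q/p}$ is $P$-a.s.\ finite; when $P \not\ll Q$ one has $\KL{P}{Q}=\infty$ and the claim is trivial since $\textnormal{\texttt{TV}}(P,Q)\leq 1$, so nothing is lost, but it is worth saying. Second, the original argument in \citet{bretagnolle1978estimation} works directly with the likelihood ratio and is most often quoted in the weaker form $1-\textnormal{\texttt{TV}}(P,Q)\geq\tfrac12\exp(-\KL{P}{Q})$; the sharper $\textnormal{\texttt{TV}}(P,Q)\leq\sqrt{1-\exp(-\KL{P}{Q})}$ that you establish (and that the lemma's $\Psi$ encodes) is the Hellinger-route refinement found in, e.g., \citet{polyanskiy2014lecture}. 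Your derivation is therefore closer in spirit to that reference than to the 1978 source, but the conclusion is the same and the proof is correct.
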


%%%%%%%%%%%%%%%%%%%%%%%%%%%%%%%%%%%%%%%%%%%%%%%%%%%%%%%%%%%%
\section{Expected Generalization Error Bounds}
\label{sec:ege_bounds}

This section presents our main results.
First, in \S\ref{subsec:ege_standard_setting} and \S\ref{subsec:ege_randomized_subsample_setting}, single-letter and random-subset bounds based on the Wasserstein distance are introduced for the studied settings.
These subsections also show how these bounds are tighter than current bounds based on the Wasserstein distance and the relative entropy.
Moreover, an example where these bounds outperform current bounds is provided.
Then, in \S\ref{subsec:ege_backward_channel} it is shown how to obtain analogous bounds to those in \S\ref{subsec:ege_standard_setting} and \S\ref{subsec:ege_randomized_subsample_setting} for the backward channel. Finally, \S\ref{subsec:other_info_measures} shows how the presented results lead to a rich set of new bounds based on different information measures.
All complete proofs and technical details are deferred to the appendix.

%%%%%%%%%%%%%%%%%%%%%%%%%%%%%%%%%%%%%%%%%%%%%%%%%%%%%%%%%%%%
\subsection{Standard setting}
\label{subsec:ege_standard_setting}

In~\citep[Theorem~2]{wang2019information}, the authors show that the expected generalization error is bounded from above by the Wasserstein distance between the forward channel distribution $P_{W|S}$ and the marginal distribution of the hypothesis $P_W$. More specifically, when the loss function $\ell$ is $L$-Lipschitz under a metric $\rho$ for all $z \in \cZ$ and the hypothesis space $\cW$ is Polish, then
\begin{equation}
    \smash{\big|\overline{\textnormal{gen}}(W,S)\big| \leq L \bE \big[ \bW(P_{W|S},P_W) \big] = L \int_{\cZ^n} \bW(P_{W|S=s},P_W) dP_Z^{\otimes{n}}(s)}.
    \label{eq:was_std}
\end{equation}

This bound considers both the geometry of the hypothesis space by means of the metric $\rho$ and the dependence between the hypothesis and the dataset via the discrepancy between the forward channel $P_{W|S}$ and the marginal $P_W$.
Nonetheless, it is not clear how it relates with other results agnostic to the geometry of the space.
For instance, when the loss function $\ell$ is bounded in $[a,b]$, if the geometry is ignored (i.e., the discrete metric is considered), then
\begin{equation*}
    \smash{\big|\overline{\textnormal{gen}}(W,S)\big| \leq (b-a) \bE \big[ \texttt{TV}(P_{W|S},P_W) \big] \leq  (b-a)} \Psi(I(W;S)),
\end{equation*}
where the inequalities follow from Remark~\ref{remark:bounded_lipschitz}, Lemma~\ref{lemma:pinsker_bh_ineq}, and Jensen's inequality (note $\Psi(x)$, defined in Lemma~\ref{lemma:pinsker_bh_ineq}, is concave on $x$).
%~\ref{remark:bounded_lipschitz} and Pinsker's inequality.
This result compares negatively with other results employing the mutual information, e.g., \citep[Theorem~1]{xu2017information}, where the bound has a decaying factor of $1/\sqrt{n}$.

Nonetheless, it is possible to find a single-letter version of~\citep[Theorem~2]{wang2019information} using a similar strategy to~\citep[Proposition~1]{bu2020tightening} and~\citep[Propositions~1 and~3]{rodriguez2020random}, which generalizes~\citep[Theorem~1]{zhang2018optimal} to algorithms that may consider the ordering of the samples.
More concretely, the expected generalization error is controlled by a function of the Wasserstein distance of the hypothesis' distribution before and after observing a \emph{single sample} $Z_i$, i.e., $\bW(P_{W|Z_i},P_W)$.

\begin{theorem}
\label{th:iwas_std}
Suppose that the loss function $\ell$ is $L$-Lipschitz for all $z \in \cZ$ and that the hypothesis space $\cW$ is Polish. Then,
\begin{equation*}
    \smash{\big|\overline{\textnormal{gen}}(W,S)\big| \leq \frac{L}{n} \sum_{i=1}^n \bE \big[ \bW(P_{W|Z_i},P_W) \big].}
\end{equation*}
\end{theorem}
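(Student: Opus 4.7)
The plan is to reduce the statement to a single-sample comparison of the joint distribution $P_{W,Z_i}$ with the product $P_W \otimes P_{Z_i}$ and then invoke the Kantorovich--Rubinstein duality (Lemma~\ref{lemma:kr_duality}) sample-wise, using the $L$-Lipschitzness of $\ell(\cdot,z)$ in the hypothesis argument.

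First I would write out the expected generalization error as
\begin{equation*}
\overline{\textnormal{gen}}(W,S) = \bE_{P_W \otimes P_Z}[\ell(W,Z)] - \frac{1}{n}\sum_{i=1}^n \bE_{P_{W,Z_i}}[\ell(W,Z_i)].
\end{equation*}
Since the samples are i.i.d.\ from $P_Z$, the marginal $P_{Z_i}$ coincides with $P_Z$, so the population risk term can be rewritten as $\frac{1}{n}\sum_{i=1}^n \bE_{P_W \otimes P_{Z_i}}[\ell(W,Z_i)]$, yielding the single-letter decomposition
\begin{equation*}
\overline{\textnormal{gen}}(W,S) = \frac{1}{n}\sum_{i=1}^n \Big( \bE_{P_W \otimes P_{Z_i}}[\ell(W,Z_i)] - \bE_{P_{W,Z_i}}[\ell(W,Z_i)] \Big).
\end{equation*}

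Next, I would factor the $i$-th joint expectation as $\int (\int \ell(w,z)\, dP_{W|Z_i=z}(w))\, dP_{Z_i}(z)$ and similarly for the product expectation, so that for each fixed $z \in \cZ$ the inner difference becomes $\int \ell(w,z)\, dP_W(w) - \int \ell(w,z)\, dP_{W|Z_i=z}(w)$. Because $w \mapsto \ell(w,z)$ is $L$-Lipschitz under $\rho$, applying KR duality to the function $\ell(\cdot,z)/L$ gives
\begin{equation*}
\Big| \int \ell(w,z)\, dP_W(w) - \int \ell(w,z)\, dP_{W|Z_i=z}(w) \Big| \leq L\,\bW(P_{W|Z_i=z},P_W),
\end{equation*}
where the absolute value is handled by noting that the bound from KR duality applies symmetrically (swapping $P$ and $Q$ amounts to replacing $f$ by $-f$, which is still $1$-Lipschitz).

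Finally, I would take the absolute value of the decomposed sum, push it inside using the triangle inequality (both over $i$ and over the integral against $dP_{Z_i}$), and recognize $\int \bW(P_{W|Z_i=z},P_W)\, dP_{Z_i}(z) = \bE[\bW(P_{W|Z_i},P_W)]$ to obtain the claim. The only subtle point—not really an obstacle—is verifying that $P_{W|Z_i}$ is well-defined as a regular conditional distribution (which is why $\cW$ is assumed Polish) and that $\ell(\cdot,z)$ having finite integrals under $P_W$ and $P_{W|Z_i=z}$ is ensured by the Lipschitz assumption together with $P_W, P_{W|Z_i=z} \in \mathcal{P}_1(\cW)$, so KR duality is applicable.
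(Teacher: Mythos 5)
Your proposal is correct and matches the paper's proof essentially step for step: the paper also rewrites $\overline{\textnormal{gen}}(W,S)$ as $\frac{1}{n}\sum_{i=1}^n \bE[\ell(W',Z_i) - \ell(W,Z_i)]$ with $W'$ an independent copy of $W$ satisfying $P_{W',Z_i}=P_W\otimes P_{Z_i}$, conditions on $Z_i$, and applies KR duality term by term using the $L$-Lipschitzness of $\ell(\cdot,z)$. Your explicit factoring of the expectations into conditional integrals and your remark about symmetry of the Wasserstein distance (handling the absolute value) are just more spelled-out versions of what the paper writes compactly, so there is no substantive difference.
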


Moreover, when the loss function is bounded and the geometry of the space is ignored by considering the discrete metric, this single-letter result can improve upon current relative entropy and mutual information bounds.

\begin{corollary}
\label{cor:itv_std}
Under the conditions of Theorem~\ref{th:iwas_std}, if the loss $\ell$ is bounded in $[a,b]$, then
\begin{equation*}
    \smash{\big|\overline{\textnormal{gen}}(W,S)\big| \leq \frac{b-a}{n} \sum\nolimits_{i=1}^n \bE \big[ \textnormal{\texttt{TV}}(P_{W|Z_i},P_W) \big] \leq \frac{b-a}{n} \sum\nolimits_{i=1}^n \bE \big[ \Psi\big( \KL{P_{W|Z_i}}{P_W}\big) \big]}
\end{equation*}
\end{corollary}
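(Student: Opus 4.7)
The plan is to simply chain together the ingredients already assembled in the preliminaries section and apply Theorem~\ref{th:iwas_std} with a carefully chosen metric. The observation driving the whole argument is that the Wasserstein bound in Theorem~\ref{th:iwas_std} holds for \emph{every} metric $\rho$ under which the loss is Lipschitz, so we may tailor $\rho$ to make the bound comparable to information-theoretic quantities.

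First, I would invoke Remark~\ref{remark:bounded_lipschitz}: a loss $\ell(\cdot,z)$ bounded in $[a,b]$ is automatically $(b-a)$-Lipschitz under the discrete metric $\rho_{\textnormal{H}}$, uniformly in $z \in \cZ$. This lets us apply Theorem~\ref{th:iwas_std} with $\rho=\rho_{\textnormal{H}}$ and $L=b-a$, yielding
\begin{equation*}
    \big|\overline{\textnormal{gen}}(W,S)\big| \leq \frac{b-a}{n} \sum_{i=1}^n \bE\big[ \bW(P_{W|Z_i},P_W) \big],
\end{equation*}
where $\bW$ is now computed under $\rho_{\textnormal{H}}$. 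Next, by Remark~\ref{remark:was_tv}, the order-$1$ Wasserstein distance under the discrete metric coincides with the total variation, i.e., $\bW(P_{W|Z_i},P_W) = \textnormal{\texttt{TV}}(P_{W|Z_i},P_W)$. Substituting gives the first inequality of the corollary.

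For the second inequality, I would apply the combined Pinsker/Bretagnolle--Huber bound stated in Lemma~\ref{lemma:pinsker_bh_ineq} pointwise (for each realization of $Z_i$) to the conditional distributions $P_{W|Z_i}$ and $P_W$, obtaining $\textnormal{\texttt{TV}}(P_{W|Z_i},P_W) \leq \Psi\big(\KL{P_{W|Z_i}}{P_W}\big)$. Taking expectations on both sides and summing over $i$ completes the chain.

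No step is really a hurdle here; the only subtlety worth flagging is that Theorem~\ref{th:iwas_std} is stated for a Polish hypothesis space, so one should verify that equipping $\cW$ with the discrete metric is compatible with the theorem. Since $\cW$ is already Polish with its original Borel structure, and the discrete metric only makes the space ``more separable/complete'' in a trivial sense while generating a (generally larger) $\sigma$-algebra, it is enough to note that the Lipschitz condition of Theorem~\ref{th:iwas_std} is used purely through the Kantorovich--Rubinstein duality (Lemma~\ref{lemma:kr_duality}), which is valid in this setting; alternatively, the bound can be applied directly through its total-variation form via Remark~\ref{remark:was_tv}, sidestepping any metric-space technicality.
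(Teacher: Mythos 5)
Your proof is correct and matches the paper's intended argument exactly: apply Theorem~\ref{th:iwas_std} with the discrete metric and $L=b-a$ (Remark~\ref{remark:bounded_lipschitz}), identify the Wasserstein distance with total variation (Remark~\ref{remark:was_tv}), and then apply the Pinsker/Bretagnolle--Huber bound of Lemma~\ref{lemma:pinsker_bh_ineq} pointwise before taking expectations. One small caveat about your aside: the discrete metric does \emph{not} make the space ``more separable''---an uncountable set with the discrete metric is complete but \emph{non}-separable, hence not Polish---so the honest way to sidestep the technicality is the one you gesture at last, namely to note that the proof of Theorem~\ref{th:iwas_std} only uses Kantorovich--Rubinstein duality to bound $\bE[\ell(W',Z_i)-\ell(W,Z_i)\mid Z_i]$, and that for a function $\ell(\cdot,z)$ taking values in $[a,b]$ this difference is bounded directly by $(b-a)\,\textnormal{\texttt{TV}}(P_{W|Z_i},P_W)$ without invoking any metric structure on $\cW$.
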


Corollary~\ref{cor:itv_std} improves upon~\citep[Proposition~1]{bu2020tightening} in two different ways.
First, it pulls the expectation with respect to the samples $P_{Z_i}$ outside of the concave square root, thus strengthening that result via Jensen's inequality.
Second, the addition of the BH inequality ensures that heavily influential samples (high $I(W;Z_i)$) do not contribute too negatively to the bound, which is ensured to be non-vacuous.
Moreover, contrarily to~\eqref{eq:was_std}, a further application of Jensen's inequality and~\citep[Proposition~2]{bu2020tightening} indicates that Corollary~\ref{cor:itv_std} compares positively to~\citep[Theorem~1]{xu2017information}, exhibiting the decaying factor of $1/\sqrt{n}$,
\begin{equation}
    \big|\overline{\textnormal{gen}}(W,S)\big| \leq \frac{b-a}{n} \smash{\sum_{i=1}^n \bE \big[ \bW(P_{W|Z_i},P_W) \big] \leq (b-a) \Psi\Big(\frac{I(W;S)}{n}\Big) \leq \sqrt{\frac{(b-a)^2 I(W;S)}{2n}}}.
    \label{eq:gen_increases_with_iws}
\end{equation}

It is also possible to obtain a random-subset version of~\citep[Theorem~2]{wang2019information} using a similar strategy to~\citep[Propositions~2 and~4]{rodriguez2020random}.
This kind of bounds, rather than looking at how knowing a \emph{single sample} $Z_i$ modifies the hypothesis distribution, i.e., $\bW(P_{W|Z_i},P_W)$, look at how the knowledge of a set of samples $S_J$ alters the hypothesis distribution when all the other samples, $S_{J^c}$, used to obtain the hypothesis are known too, i.e., $W(P_{W|S}, P_{W|S_{J^c}})$.

\begin{theorem}
\label{th:rswas_std}
Suppose that the loss function $\ell$ is $L$-Lipschitz for all $z \in \cZ$ and that the hypothesis space $\cW$ is Polish.
Let $J$ be a uniformly random subset of $[n]$ such that $|J|=m$, and that is independent of $W$ and $S$. Let also $R$ be a random variable independent of $S$ and $J$.
Then,
\begin{align*}
    \big|\overline{\textnormal{gen}}(W,S)\big| &\leq L \bE \big[ \bW(P_{W|S,R},P_{W|S_{J^c},R}) \big] \textnormal{ and}\\
    \label{}
    \big|\overline{\textnormal{gen}}(W,S)\big| &\leq \frac{L}{m} \bE \bigg[ \sum_{i \in J} \bE \big[ \bW(P_{W|S_{J^c} \cup Z_i, R},P_{W|S_{J^c},R}) \mid J \big] \bigg].
\end{align*}
\end{theorem}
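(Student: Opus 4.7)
The plan is to combine a subsample symmetrization identity with the Kantorovich--Rubinstein duality (Lemma~\ref{lemma:kr_duality}), in the same spirit as the proof sketched for Theorem~\ref{th:iwas_std}. First, I would rewrite the generalization error as
\[
    \overline{\textnormal{gen}}(W,S) = \bE\big[\cL_{P_Z}(W) - \cL_{S_J}(W)\big] = \bE\Big[\tfrac{1}{m}\sum\nolimits_{i\in J}\big(\cL_{P_Z}(W) - \ell(W, Z_i)\big)\Big],
\]
where $\cL_{S_J}(W) \triangleq \tfrac{1}{m}\sum_{i\in J}\ell(W, Z_i)$. This identity follows from $J \indep (W,S)$ and the i.i.d.\ assumption on the data: each index appears in $J$ with probability $m/n$, so $\bE[\cL_{S_J}(W)] = \tfrac{1}{n}\sum_{i=1}^n \bE[\ell(W,Z_i)] = \bE[\cL_S(W)]$.

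For the first bound, I would condition on $(S_{J^c}, J, R)$, under which the marginal of $W$ is $P_{W|S_{J^c}, R}$ and $S_J \sim P_Z^{\otimes m}$ remains independent of the conditioning variables. Introducing $S_J$ as an independent ``ghost'' copy and applying Fubini yields
\[
    \bE_{W \sim P_{W|S_{J^c}, R}}[\cL_{P_Z}(W)] = \bE_{S_J \sim P_Z^{\otimes m}}\!\bE_{W \sim P_{W|S_{J^c}, R}}[\cL_{S_J}(W)],
\]
while under the true joint the analogous quantity reads $\bE_{S_J}\!\bE_{W \sim P_{W|S, R}}[\cL_{S_J}(W)]$, since $P_{W|S_{J^c}, S_J, R} = P_{W|S, R}$. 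Because $w \mapsto \cL_{s_J}(w)$ is an average of $L$-Lipschitz maps and therefore itself $L$-Lipschitz, KR duality bounds the inner difference by $L\bW(P_{W|S, R}, P_{W|S_{J^c}, R})$; the triangle inequality and the outer expectation then close the argument.

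The second bound uses the same template but treats each $i \in J$ separately. Writing $\cL_{P_Z}(W) = \bE_{Z'_i \sim P_Z}[\ell(W, Z'_i)]$ with $Z'_i \indep W$ yields $\bE[\cL_{P_Z}(W) \mid S_{J^c}, J, R] = \bE_{Z_i \sim P_Z}\bE_{W \sim P_{W|S_{J^c}, R}}[\ell(W, Z_i)]$ after renaming, while further conditioning on $Z_i$ gives $\bE[\ell(W,Z_i)\mid S_{J^c}, J, R] = \bE_{Z_i \sim P_Z}\bE_{W \sim P_{W|S_{J^c}\cup Z_i, R}}[\ell(W, Z_i)]$. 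The $i$-th summand thus becomes $\bE_{Z_i}\!\big[\big(\bE_{P_{W|S_{J^c}, R}} - \bE_{P_{W|S_{J^c}\cup Z_i, R}}\big)[\ell(W, Z_i)]\big]$, and the $L$-Lipschitzness of $\ell(\cdot, z_i)$ together with KR duality produces the single-sample bound $L\bW(P_{W|S_{J^c}\cup Z_i, R}, P_{W|S_{J^c}, R})$. Summing over $i \in J$ and taking outer expectations delivers the second bound.

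The hard part is administrative rather than substantive: carefully tracking the independence structure of $(W, S, J, R)$ so that the Fubini and ghost-copy manipulations are legitimate. Concretely, one needs to verify that conditional on $(S_{J^c}, J, R)$ the subsample $S_J$ remains i.i.d.\ $P_Z^{\otimes m}$ and that $P_{W|S_{J^c}, R}$ coincides with the marginal of $P_{W|S, R}$ over $S_J$; both follow from $J \indep (W,S)$, $R \indep (S,J)$, and the i.i.d.\ assumption on the data.
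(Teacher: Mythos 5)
Your proposal is correct and follows essentially the same route as the paper's proof: rewrite $\overline{\textnormal{gen}}(W,S)$ via the random subset identity $\bE[\cL_{S_J}(w)] = \cL_s(w)$, condition on $(J, S_{J^c}, R)$, introduce an independent (``ghost'') copy to decouple $W$ from $S_J$ (or $Z_i$), and close with Kantorovich--Rubinstein duality followed by the outer expectation. The only cosmetic difference is that the paper phrases the decoupling by explicitly constructing an auxiliary hypothesis $W'$ with $P_{W',S_j|s_{j^c},r}=P_{W|s_{j^c},r}\otimes P_{S_j}$, whereas you decouple on the sample side; the two viewpoints yield the identical comparison of $P_{W|S,R}$ against $P_{W|S_{J^c},R}$.
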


In particular, when $m=1$, the two equations from Theorem~\ref{th:rswas_std} reduce \rev{to~\citep[Lemma~3]{raginsky2016information}}
\begin{equation*}
    \smash{\big|\overline{\textnormal{gen}}(W,S)\big| \leq L \bE \big[ \bW(P_{W|S,R},P_{W| S^{-J}, R}) \big]},
\end{equation*}
where $\smash{S^{-J} = S \setminus Z_J}$, i.e., the whole dataset except sample $Z_J$.
Moreover, if the loss is bounded and the geometry is ignored, Theorem~\ref{th:rswas_std} improves upon the tightest bounds in terms of the relative entropy of random subsets, cf. \citep[Theorem~2.5]{negrea2019information}.
\begin{corollary}
\label{cor:rstv_std}
In the conditions of Theorem~\ref{th:rswas_std}, if the loss is bounded in $[a,b]$, then
\begin{equation*}
    \smash{\big|\overline{\textnormal{gen}}(W,S)\big| \leq (b-a) \bE \big[ \textnormal{\texttt{TV}}(P_{W|S,R},P_{W|S^{-J},R}) \big] \leq \frac{b-a}{n} \sum_{j=1}^n \bE \big[ \Psi\big( \KL{P_{W|S,R}}{P_{W|S^{-j},R}}\big) \big].}
\end{equation*}
\end{corollary}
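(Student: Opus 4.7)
The plan is to derive this corollary as a direct specialization of Theorem~\ref{th:rswas_std} in the case $m=1$, by instantiating the metric on the hypothesis space to be the discrete metric and then applying the chain of inequalities already established in the preliminaries. The whole argument is a composition of prior results; no substantive new work is needed.

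First I would invoke Theorem~\ref{th:rswas_std} with $m=1$, which was explicitly stated in the excerpt to reduce to
\[
    \big|\overline{\textnormal{gen}}(W,S)\big| \leq L \bE\big[\bW(P_{W|S,R},P_{W|S^{-J},R})\big],
\]
where $J$ is now a uniform draw from $[n]$. Since $\ell(w,\cdot)$ takes values in $[a,b]$, Remark~\ref{remark:bounded_lipschitz} lets me take $L = b-a$ with $\rho = \rho_{\textnormal{H}}$ the discrete metric; and Remark~\ref{remark:was_tv} then identifies $\bW(\cdot,\cdot)$ with $\textnormal{\texttt{TV}}(\cdot,\cdot)$. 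This yields the first inequality of the corollary,
\[
    \big|\overline{\textnormal{gen}}(W,S)\big| \leq (b-a)\,\bE\big[\textnormal{\texttt{TV}}(P_{W|S,R},P_{W|S^{-J},R})\big].
\]

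Next I would apply Lemma~\ref{lemma:pinsker_bh_ineq} pointwise to the (random) distributions $P_{W|S,R}$ and $P_{W|S^{-J},R}$, giving $\textnormal{\texttt{TV}}(P_{W|S,R},P_{W|S^{-J},R}) \leq \Psi(\KL{P_{W|S,R}}{P_{W|S^{-J},R}})$. Taking expectation over $J, S, R$ and using that $J$ is uniform on $[n]$ independently of $(S,R)$, the tower property rewrites
\[
    \bE\big[\Psi\big(\KL{P_{W|S,R}}{P_{W|S^{-J},R}}\big)\big] = \frac{1}{n}\sum_{j=1}^{n}\bE\big[\Psi\big(\KL{P_{W|S,R}}{P_{W|S^{-j},R}}\big)\big],
\]
which gives the second inequality and completes the proof.

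There is no real obstacle here: every step is either a direct citation of a lemma/remark stated earlier or a bookkeeping application of linearity/Fubini. The only thing to double-check is that passing from $\bW$ to $\textnormal{\texttt{TV}}$ commutes with the expectation over $(S,R,J)$, which is immediate since the bound is pointwise before the expectation is applied.
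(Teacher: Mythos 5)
Your proposal is correct and matches the paper's intended derivation: the paper leaves this corollary without an explicit proof precisely because it is the routine composition of Theorem~\ref{th:rswas_std} (with $m=1$), Remark~\ref{remark:bounded_lipschitz}, Remark~\ref{remark:was_tv}, and Lemma~\ref{lemma:pinsker_bh_ineq}, followed by writing out the expectation over the uniform index $J$ as an average — exactly the chain you give. (One minor notational slip: you want $\ell(\cdot,z)$ viewed as a function of $w$ to be $(b-a)$-Lipschitz under the discrete metric, not $\ell(w,\cdot)$; this does not affect the argument since the loss is bounded jointly.)
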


These data-dependent bounds characterize well the expected generalization error of the Langevin dynamics (LD) and stochastic gradient Langevin dynamics (SGLD) algorithms~\citep[Theorems~3.1 and~3.3]{negrea2019information}, where $R$ is an artificial random variable used to encode some knowledge necessary to characterize the hypothesis distribution, such as the batch indices of SGLD.
In particular, Corollary~\ref{cor:rstv_std} improves upon~\citep[Theorem~2.5]{negrea2019information} tightening the elements of the expectation with respect to $J$ for which the divergence is large ($\gtrapprox 1.6$).

\begin{color}{black}
It is possible to prove that Theorem~\ref{th:iwas_std} is tighter than \citep[Theorem~1]{wang2019information}. This results by studying the KR dual representation of the Wasserstein distance and noting that the conditional distribution $P_{W|Z_i}$ is a smoothed version of the forward channel, i.e., $P_{W|Z_i} = \bE[P_{W|S}|Z_i]$. Comparisons with Theorem~2 are also possible using similar arguments and the triangle inequality. These results are informally summarized below and presented with more details and the proofs in Appendix~\ref{app:was_comp}.  

\begin{proposition}
Consider the standard setting. Then, for all $j \subseteq [n]$ and all $i \in j$:
\begin{align*}
     &\bE \big[ \bW(P_{W|Z_i},P_W) \big] \leq  \bE \big[ \bW(P_{W|S},P_W) \big], \textnormal{where $j = [n]$,} \tag{$\Longrightarrow$ Theorem~\ref{th:iwas_std} $\leq$ \citep[Theorem 1]{wang2019information}} \\
    &\bE \big[ \bW(P_{W|Z_i},P_W) \big] \leq  \bE \big[ \bW(P_{W|S},P_{W|S_{j^c}}) \big], \textnormal{and} \tag{$\Longrightarrow$ Theorem~\ref{th:iwas_std} $\leq$ Theorem~\ref{th:rswas_std}}\\
    &\bE \big[ \bW(P_{W|S},P_{W|S_{j^c}}) \big] \leq  2 \bE \big[ \bW(P_{W|S},P_{W}) \big]. \tag{$\Longrightarrow$ Theorem~\ref{th:rswas_std} $\leq$ $2 \cdot $\citep[Theorem 1]{wang2019information}}
\end{align*}
\end{proposition}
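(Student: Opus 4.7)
The plan is to derive all three inequalities from two ingredients: the Kantorovich--Rubinstein duality of Lemma~\ref{lemma:kr_duality}, and the elementary swap $\sup_f \bE[g_f] \leq \bE[\sup_f g_f]$, which is just the pointwise bound $g_f \leq \sup_{f'} g_{f'}$ followed by expectations. The preparatory observation is the marginalization identity $P_{W|Z_i} = \bE[P_{W|S} \mid Z_i]$, read as an identity of measures, which follows from the tower property applied to $\bP\{W \in A \mid Z_i\}$. The analogous identities $P_W = \bE[P_{W|S_{j^c}}]$ and $P_{W|S_{j^c}} = \bE[P_{W|S} \mid S_{j^c}]$ will be used as well.

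For the first inequality, I would start from KR duality and the identity above to rewrite
\begin{equation*}
\bW(P_{W|Z_i}, P_W) = \sup_{f \in 1\textnormal{-Lip}(\rho)} \bE\bigg[ \int f\, dP_{W|S} - \int f\, dP_W \,\bigg|\, Z_i \bigg],
\end{equation*}
where $P_W$ has been brought under the conditional expectation since it does not depend on $Z_i$. Exchanging the supremum with the conditional expectation and invoking KR duality in reverse identifies the resulting integrand as $\bW(P_{W|S}, P_W)$; an outer expectation over $Z_i$ closes the argument.

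For the second inequality the only twist is that $P_W$ is rewritten as $\bE[P_{W|S_{j^c}} \mid Z_i]$, which is valid because $i \in j$ makes $Z_i$ independent of $S_{j^c}$ under the i.i.d.\ assumption on the dataset. The same sup--expectation swap then yields $\bW(P_{W|Z_i}, P_W) \leq \bE[\bW(P_{W|S}, P_{W|S_{j^c}}) \mid Z_i]$, and an outer expectation finishes it. For the third inequality I would apply the triangle inequality
\begin{equation*}
\bW(P_{W|S}, P_{W|S_{j^c}}) \leq \bW(P_{W|S}, P_W) + \bW(P_W, P_{W|S_{j^c}})
\end{equation*}
and bound the second summand by reusing the argument of the first inequality, now conditioning on $S_{j^c}$ instead of $Z_i$, which gives $\bE[\bW(P_W, P_{W|S_{j^c}})] \leq \bE[\bW(P_{W|S}, P_W)]$ and the factor of~$2$.

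The main obstacle is measurability: rigorously justifying the interchange of the supremum over the uncountable family $1\textnormal{-Lip}(\rho)$ with the conditional expectation, together with the implicit joint measurability of the map $s \mapsto \bW(P_{W|s}, P_W)$. In the Polish setting this is handled in the standard way by restricting to a countable dense subfamily of bounded $1$-Lipschitz functions, which by a density/approximation argument still attains the KR supremum, so the swap $\sup \bE \leq \bE \sup$ can be applied term by term before passing to the limit.
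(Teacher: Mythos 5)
Your proposal is correct and takes essentially the same approach as the paper's proof in Appendix~\ref{app:was_comp}: KR duality, the smoothing identities $P_{W|Z_i} = \bE[P_{W|S} \mid Z_i]$ and $P_W = \bE[P_{W|S_{j^c}} \mid Z_i]$, the swap $\sup_f \bE[\cdot] \leq \bE[\sup_f \cdot]$, and the triangle inequality for the third claim. The only difference is cosmetic — you phrase the swap via conditional expectations while the paper writes out the Fubini--Tonelli step in explicit integral form — and your note on restricting to a countable dense subfamily of Lipschitz functions addresses a measurability detail the paper leaves implicit.
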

\end{color}

The following example showcases a situation where the presented bounds outperform the current known bounds based on the Wasserstein distance and the mutual information.

\begin{example}[Gaussian location model]
\label{ex:glm}
Consider the problem of estimating the mean $\mu$ of a $d$-dimensional Gaussian distribution with known covariance matrix $\sigma^2 I_d$.
Further consider that there are $n$ samples $S=(Z_1, \ldots, Z_n)$ available, the loss is measured with the Euclidean distance $\ell(w,z) = \lVert w - z\rVert_2$, and the estimation is their empirical mean $W = \frac{1}{n} \sum_{i=1}^n Z_i$.
\end{example}

\begin{figure}[ht]
\centering
\includegraphics[width=0.495\textwidth]{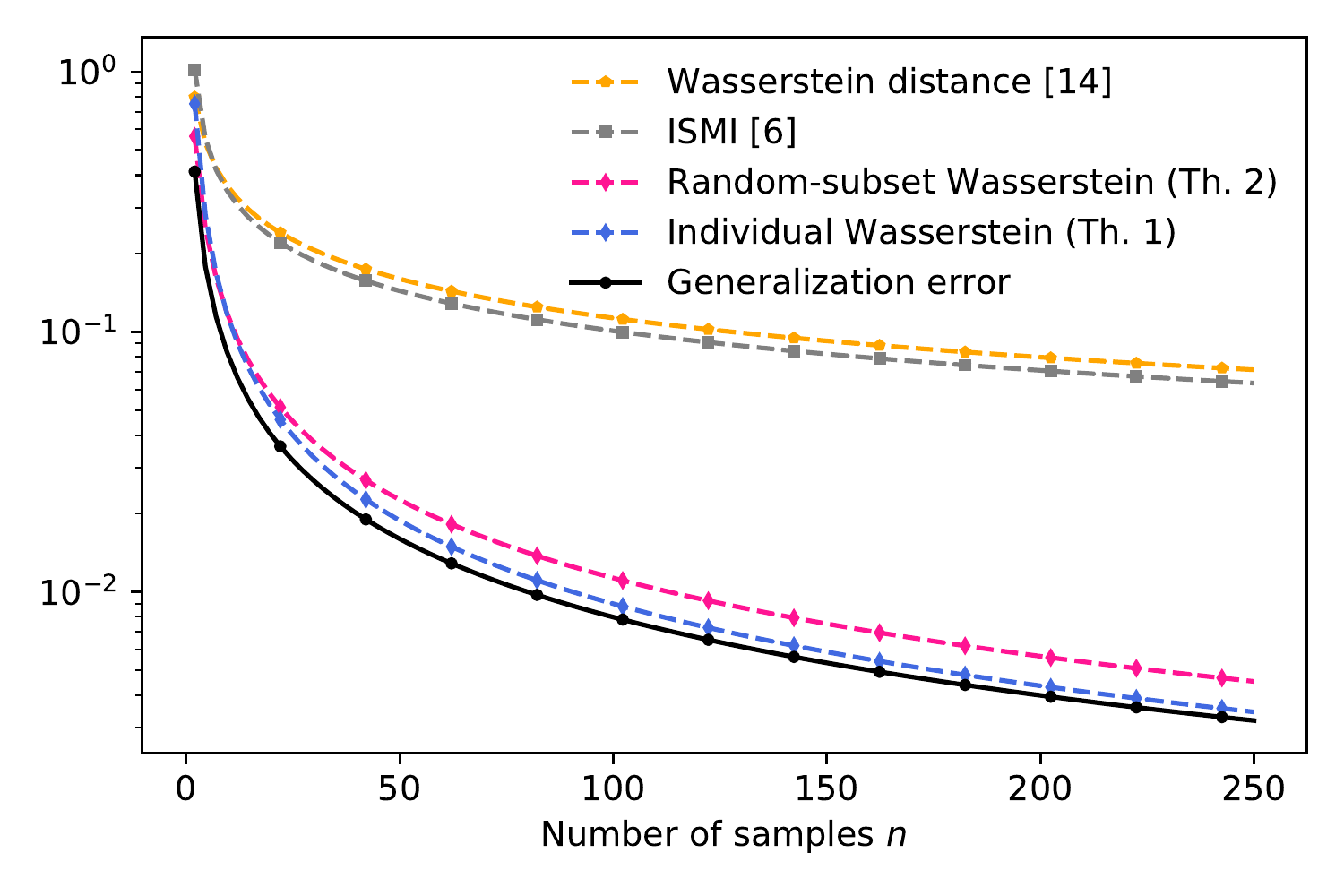}
\includegraphics[width=0.495\textwidth]{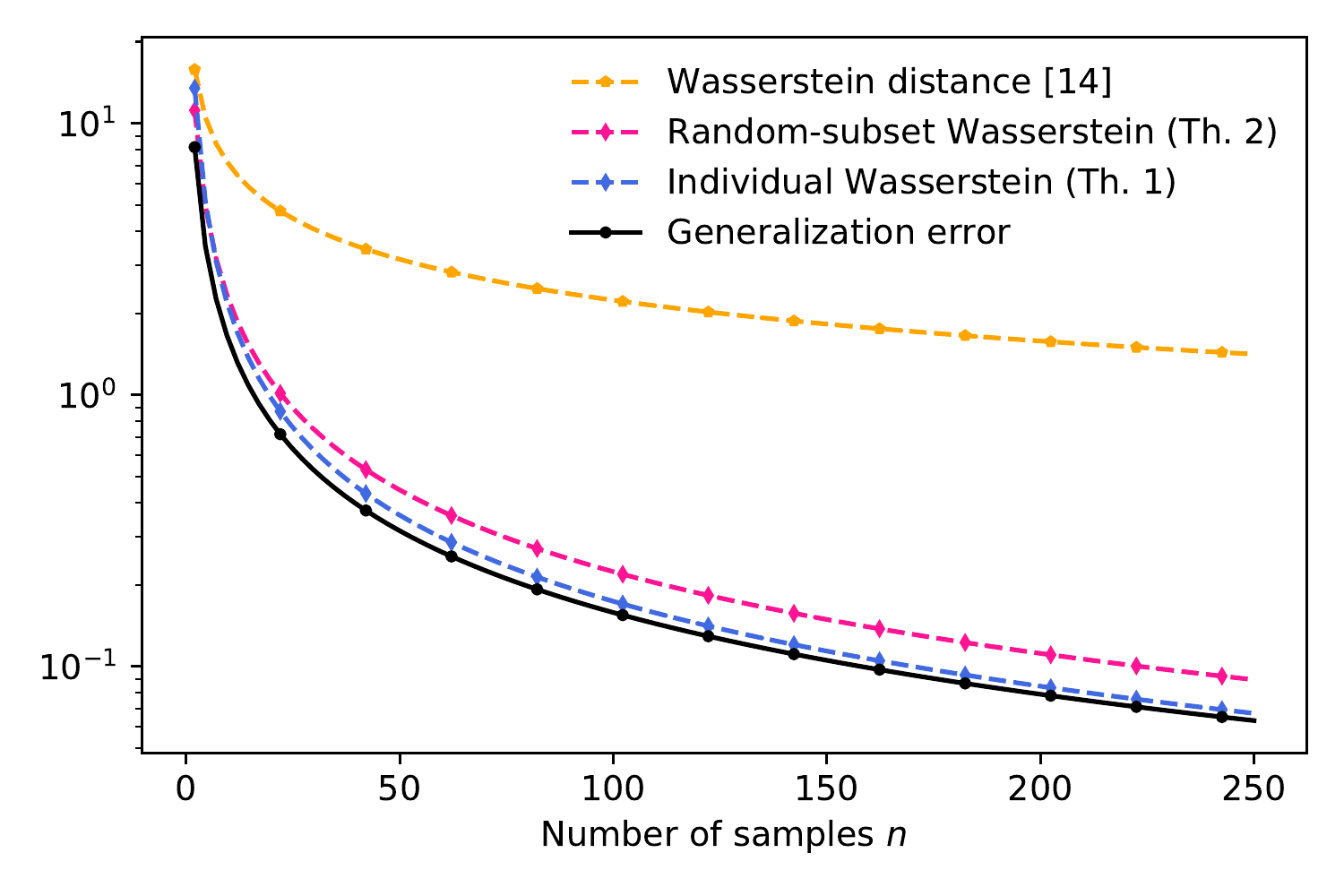}
\caption{Expected generalization error and generalization error bounds for the Gaussian location model with $\mathcal{N}(\mu,1)$ (left) and $\mathcal{N}(\mu,I_{250})$ (right). See Appendix~\ref{app:glm} for the details.}
\label{fig:glm}
\end{figure}

In this example, the expected generalization error can be calculated exactly (see Appendix~\ref{app:glm}):
\begin{equation*}
    \overline{\textnormal{gen}}(W,S) = \sqrt{\frac{2 \sigma^2}{n}} \Big( \sqrt{n+1} - \sqrt{n-1} \Big) \frac{\Gamma\big(\frac{d+1}{2}\big)}{\Gamma\big(\frac{d}{2})} \in \mathcal{O}\bigg(\frac{\sqrt{\sigma^2 d}}{n}\bigg).
\end{equation*}

As discussed in~\citep{bu2020tightening}, the bound from~\citep{xu2017information} is not applicable in this setting since $I(W;S) \to \infty$ and since $\ell(w,Z)$ is not subgaussian given that $\textnormal{Var}[\ell(w,Z)] \to \infty$ as $\lVert w \rVert_2 \to \infty$.
When $d=1$, the loss $\ell(W,Z)$ is $1$-subgaussian and the individual sample mutual information (ISMI) bound from~\citep{bu2020tightening} produces a bound in $\smash{\mathcal{O}\big(\sqrt{\sigma^2/n}\big)}$, which decreases slower than the true generalization error, \rev{see Figure~\ref{fig:glm}. This happens since the bound grows as the square root of $I(W;Z_i)$, which is in $\mathcal{O}(1/n)$.} 

In this scenario, the loss is 1-Lipschitz under $\smash{\rho(w,w') = \lVert w-w' \rVert_2}$, and thus the bounds based on the Wasserstein distance are applicable.
Applying the bound from~\citep{wang2019information} yields a bound in  $\smash{\mathcal{O}\big(\sqrt{\sigma^2 d/n}\big)}$, which decreases at the same sub-optimal rate as the ISMI bound.
However, both the individual and random-subset Wasserstein distance bounds from Theorems~\ref{th:iwas_std} and~\ref{th:rswas_std} produce bounds in $\smash{\mathcal{O}\big(\sqrt{\sigma^2 d}/n\big)}$,  which decrease at the same rate as the true generalization error (see Figure~\ref{fig:glm}). 

\subsubsection{Outline of the proofs}
\label{subsubsec:outline_proofs_std}

Similarly to~\citep{wang2019information,zhang2018optimal}, the proofs of the theorems in this section are based on operating with $\overline{\textnormal{gen}}(W,S)$ until an expression of the type $\bE[f(X',Y)-f(X,Y)]$ is reached, where $X'$ is an independent copy of $X$ such that $P_{X',Y} = P_X \otimes P_Y$, and then applying the KR duality.
For example, in Theorem~\ref{th:iwas_std} such an expression is achieved with $X=W$, $Y=Z_i$, and $f=\ell$. To arrive at these expressions, the proofs of Theorem~\ref{th:iwas_rs} and~\ref{th:rswas_rs} operate with $\overline{\textnormal{gen}}(W,S)$ in different forms. More precisely,
\begin{itemize}
    \item[(Th.~\ref{th:iwas_std})] Since the samples $Z_i$ are independent and the expectation is a linear operator, the proof follows working with the quantity $\overline{\textnormal{gen}}(W,S) = \frac{1}{n} \sum_{i=1}^n \bE[\ell(W',Z_i) - \ell(W,Z_i)]$.
    \item[(Th.~\ref{th:rswas_std})] Note that $\bE[\cL_{s_J}(w)] = \cL_s(w)$, where $J$ is a uniformly random subset of $[n]$ of size $m$ and $s_J$ is the subset of $s$ indexed by $J$.
    This equality follows since there are $\binom{n}{m}$ subsets of size $m$ and each sample $z_i$ belongs to only $\binom{n-1}{m-1}$ of them. Hence,
    \begin{equation*}
        \smash{\bE[\cL_{s_J}(w)] = \frac{1}{\binom{n}{m}} \sum\nolimits_{j \in \mathcal{J}} \frac{1}{m} \sum_{i \in j} \ell(w,z_i) = \frac{1}{n} \sum_{i=1}^n \ell(w,z_i) = \cL_s(w).}
    \end{equation*}
    
    Then, the proof follows working with the quantity $\overline{\textnormal{gen}}(W,S) = \bE[\cL_{S_J}(W') - \cL_{S_J}(W)]$.
\end{itemize}

%%%%%%%%%%%%%%%%%%%%%%%%%%%%%%%%%%%%%%%%%%%%%%%%%%%%%%%%%%%%
\subsection{Randomized-subsample setting}
\label{subsec:ege_randomized_subsample_setting}

In the randomized-subsample setting the focus shifts from studying the impact of the samples on the hypothesis distribution to the impact \rev{of} the samples' identities on the hypothesis distribution. For example, the analogous result to~\eqref{eq:was_std} is
\begin{equation}
    \smash{\big|\overline{\textnormal{gen}}(W,S)\big| \leq 2L \bE \big[ \bW(P_{\smash{W|\tilde{S},U}}, P_{\smash{W|\tilde{S}}}) \big].}
    \label{eq:was_rd}
\end{equation}

Similarly to the standard setting, considering the discrete metric and applying Pinsker's and Jensen's inequalities leads to a less favorable bound than current bounds based on the mutual information~\citep[Theorem~5.1]{steinke2020reasoning} since it does not explicitly decrease as $1/\sqrt{n}$. However, the bound still admits a tighter (see Appendix~\ref{app:was_comp}) single-letter version.

\begin{theorem}
\label{th:iwas_rs}
Suppose that the loss function $\ell$ is $L$-Lipschitz for all $z \in \cZ$ and that the hypothesis space $\cW$ is Polish and let $\tilde{S}_i \triangleq (\tilde{Z}_i,\tilde{Z}_{i+n})$. Then,
\begin{equation*}
    \big|\overline{\textnormal{gen}}(W,S)\big| \leq \frac{2L}{n} \smash{\sum_{i=1}^n \bE \big[ \bW(P_{\smash{W|\tilde{S}_i,U_i}}, P_{\smash{W|\tilde{S}_i}}) \big].}
\end{equation*}
\end{theorem}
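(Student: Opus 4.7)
\textbf{Plan of proof for Theorem~\ref{th:iwas_rs}.}
The plan is to mimic the recipe described in \S\ref{subsubsec:outline_proofs_std}: rewrite $\overline{\textnormal{gen}}(W,S)$ as an expectation of the form $\bE[f(X',Y)-f(X,Y)]$, where $X$ and $X'$ share the $Y$-marginal but $X'$ is independent of $Y$, and then invoke the Kantorovich--Rubinstein duality (Lemma~\ref{lemma:kr_duality}). The twist relative to the standard setting is that here the ``sample index'' we perturb is the Bernoulli variable $U_i$ rather than a data point $Z_i$; this is what produces the factor of $2$ and replaces $P_{W|Z_i}$ with $P_{W|\tilde S_i, U_i}$.

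First I would start from $\bE[\widehat{\textnormal{gen}}(W,\tilde S,U)] = \overline{\textnormal{gen}}(W,S)$ and use linearity to write
\begin{equation*}
\overline{\textnormal{gen}}(W,S)
= \frac{1}{n}\sum_{i=1}^n \bE\!\left[\ell(W,\tilde Z_{i+(1-U_i)n}) - \ell(W,\tilde Z_{i+U_in})\right].
\end{equation*}
Next I would symmetrize. Introducing an independent copy $U_i'\sim\textnormal{Bern}(1/2)$ with $U_i'\indep(W,\tilde S,U)$, the identity
$\ell(W,\tilde Z_{i+U_in}) + \ell(W,\tilde Z_{i+(1-U_i)n}) = \ell(W,\tilde Z_i) + \ell(W,\tilde Z_{i+n}) = 2\,\bE[\ell(W,\tilde Z_{i+U_i'n})\mid W,\tilde S_i]$
(where the last equality uses $U_i'\indep(W,\tilde S_i)$ and $U_i'\sim U_i$) gives
\begin{equation*}
\overline{\textnormal{gen}}(W,S) = \frac{2}{n}\sum_{i=1}^n \bE\!\left[\ell(W,\tilde Z_{i+U_i'n}) - \ell(W,\tilde Z_{i+U_in})\right].
\end{equation*}
This is exactly the desired $\bE[f(X',Y)-f(X,Y)]$ shape with $f_i(w,\tilde s_i,u)=\ell(w,\tilde z_{i+un})$, $X=W$, and $Y=(\tilde S_i,U_i)$, since the joint law of $(W,\tilde S_i,U_i')$ is $P_{W|\tilde S_i}\otimes P_{U_i}\otimes P_{\tilde S_i}$ while that of $(W,\tilde S_i,U_i)$ is $P_{W|\tilde S_i,U_i}\otimes P_{U_i}\otimes P_{\tilde S_i}$ (using that $U_i\indep\tilde S_i$).

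Then I would disintegrate each summand over $(\tilde S_i,U_i)$ and, for each fixed $(\tilde s_i,u)$, note that $w\mapsto \ell(w,\tilde z_{i+un})$ is $L$-Lipschitz under the metric on $\cW$. Applying the KR duality (Lemma~\ref{lemma:kr_duality}) pointwise yields
\begin{equation*}
\bE\!\left[\ell(W,\tilde Z_{i+U_i'n}) - \ell(W,\tilde Z_{i+U_in})\right]
\leq L\,\bE\!\left[\bW(P_{W|\tilde S_i,U_i},P_{W|\tilde S_i})\right],
\end{equation*}
and summing over $i$ gives the stated upper bound. The lower bound follows from the same argument applied to $-\ell$ (which is also $L$-Lipschitz), giving the absolute value.

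The main obstacle I expect is the bookkeeping of conditional independences needed to justify that the ``primed'' joint law factors as $P_{W|\tilde S_i}\otimes P_{\tilde S_i}\otimes P_{U_i}$; this requires $U_i'\indep(W,\tilde S_i)$ together with $U_i\indep\tilde S_i$ (both coming from the construction in~\citep{steinke2020reasoning}), without which neither the symmetrization identity nor the subsequent pointwise KR duality applies cleanly. Everything else is routine: the Lipschitz constant is preserved in $w$ because we freeze $(\tilde s_i,u)$, and measurability of the conditional distributions on a Polish $\cW$ is standard.
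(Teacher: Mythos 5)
Your proof is correct, and it arrives at the same bound as the paper via a genuinely different route. The paper introduces an independent copy $W'$ of $W$ (with $P_{W',\tilde S_i,U_i}=P_{W',\tilde S_i}\otimes P_{U_i}$), subtracts the identically-zero term $\bE[\ell(W',\bar Z_i)-\ell(W',Z_i)]$, and then uses the triangle inequality $|x+y|\leq|x|+|y|$ to split the remainder into two pieces, each of which receives a separate application of the KR duality; the factor of $2$ arises from this split. You instead decouple the Bernoulli selector rather than the hypothesis: you introduce $U_i'\sim\mathrm{Bern}(1/2)$ independent of $(W,\tilde S,U)$ and exploit the pathwise identity $\ell(W,\tilde Z_{i+U_in})+\ell(W,\tilde Z_{i+(1-U_i)n})=\ell(W,\tilde Z_i)+\ell(W,\tilde Z_{i+n})=2\,\bE[\ell(W,\tilde Z_{i+U_i'n})\mid W,\tilde S_i]$, which already produces the factor of $2$ \emph{before} any inequality is invoked; you then need only a single application of the KR duality per summand. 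Both approaches compare the same pair of conditional laws $P_{W|\tilde S_i,U_i}$ and $P_{W|\tilde S_i}$ after disintegrating over $(\tilde S_i,U_i)$, and the independences you flag ($U_i'\indep(W,\tilde S_i)$ and $U_i\indep\tilde S_i$) are exactly what makes the renaming $u'\to u$ in the primed integral legitimate, so the disintegration is sound. The practical difference is cosmetic in the end (the final bound is identical) but your version is cleaner in that the factor of $2$ is an exact algebraic identity rather than a consequence of the triangle inequality, so there is no intermediate ``split'' that could in principle be loose.
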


As in the standard setting, when the loss function is bounded and the geometry of the space is ignored, this result improves upon current single-letter bounds based on the mutual information~\citep{rodriguez2020random, zhou2020individually} by pulling the expectation with respect to the samples $P_{\smash{\tilde{S}_i}}$ out of the square root.
Here, the BH inequality is not considered since $\KL{P_{\smash{W|\tilde{S}_i,U_i}}}{P_{\smash{W|\tilde{S}_i}}} \leq \log(2)$; see Appendix~\ref{app:rs_and_bh} for the details.
\begin{corollary}
\label{cor:iwas_rs}
Under the conditions of Theorem~\ref{th:iwas_rs}, if the loss $\ell$ is bounded in $[a,b]$, then
\begin{align*}
    %\smash{ %I removed this because the compiler was complaining
    \big|\overline{\textnormal{gen}}(W,S)\big| &\leq \frac{2(b-a)}{n} \sum_{i=1}^n \bE \big[ \textnormal{\texttt{TV}}(P_{\smash{W|\tilde{S}_i,U_i}}, P_{\smash{W|\tilde{S}_i}}) \big] \\
    &\leq \frac{b-a}{n} \smash{ \sum_{i=1}^n \bE \Big[ \sqrt{2 \KL{P_{\smash{W|\tilde{S}_i,U_i}}}{P_{\smash{W|\tilde{S}_i}}}} \Big] }.
\end{align*}
\end{corollary}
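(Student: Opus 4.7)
The plan is to obtain Corollary~\ref{cor:iwas_rs} as a direct specialization of Theorem~\ref{th:iwas_rs} by choosing the discrete metric on $\cW$ and then invoking Pinsker's inequality term by term. The whole argument should be a short chain of substitutions; no new combinatorial or measure-theoretic work is required.

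First, I would endow the Polish space $\cW$ with the discrete metric $\rho_{\textnormal{H}}(w,w') = \bI[w \neq w']$ from Definition~\ref{def:discrete_metric}. By Remark~\ref{remark:bounded_lipschitz}, a loss bounded in $[a,b]$ is $(b-a)$-Lipschitz under $\rho_{\textnormal{H}}$ for every fixed $z \in \cZ$, so the Lipschitz hypothesis of Theorem~\ref{th:iwas_rs} is met with $L = b-a$. Moreover, by Remark~\ref{remark:was_tv} the Wasserstein distance of order $1$ on a space equipped with $\rho_{\textnormal{H}}$ coincides with the total variation distance. Substituting both facts into the conclusion of Theorem~\ref{th:iwas_rs} yields
\begin{equation*}
    \big|\overline{\textnormal{gen}}(W,S)\big| \leq \frac{2(b-a)}{n} \sum_{i=1}^n \bE \big[ \textnormal{\texttt{TV}}(P_{\smash{W|\tilde{S}_i,U_i}}, P_{\smash{W|\tilde{S}_i}}) \big],
\end{equation*}
which is the first inequality of the corollary.

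For the second inequality, I would apply Pinsker's inequality from Lemma~\ref{lemma:pinsker_bh_ineq} to each summand, using the fact that $\Psi(x) \leq \sqrt{x/2}$ always holds. This gives $\textnormal{\texttt{TV}}(P_{\smash{W|\tilde{S}_i,U_i}}, P_{\smash{W|\tilde{S}_i}}) \leq \sqrt{\KL{P_{\smash{W|\tilde{S}_i,U_i}}}{P_{\smash{W|\tilde{S}_i}}}/2}$ pointwise in $(\tilde{S}_i, U_i)$, and taking expectations then absorbs the factor $2$ into the square root as written. Note that I deliberately avoid the BH branch of $\Psi$: as flagged in the paragraph preceding the corollary (with details in Appendix~\ref{app:rs_and_bh}), one has $\KL{P_{\smash{W|\tilde{S}_i,U_i}}}{P_{\smash{W|\tilde{S}_i}}} \leq \log 2$, and in this regime $x/2 \leq 1 - e^{-x}$, so Pinsker is already the tighter side of $\Psi$ and nothing is lost.

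There is no real obstacle; the only bookkeeping point worth double-checking is the appearance of the factor $2$ out front (inherited from Theorem~\ref{th:iwas_rs}, which reflects the symmetric replacement between $\tilde{Z}_i$ and $\tilde{Z}_{i+n}$ induced by $U_i$) and its interaction with the $1/\sqrt{2}$ from Pinsker, which collapses $2 \cdot \sqrt{\KL/2}$ into $\sqrt{2\,\KL}$ and yields the stated constant $(b-a)/n$ in front of the sum.
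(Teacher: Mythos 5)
Your proposal is correct and takes exactly the approach the paper intends: specialize Theorem~\ref{th:iwas_rs} to the discrete metric (so $L=b-a$ via Remark~\ref{remark:bounded_lipschitz} and $\bW=\textnormal{\texttt{TV}}$ via Remark~\ref{remark:was_tv}), then apply Pinsker's inequality termwise, with the leading factor of $2$ and Pinsker's $1/\sqrt{2}$ combining to give $\sqrt{2\,\KL{\cdot}{\cdot}}$. Your observation that the BH branch of $\Psi$ is moot here because $\KL{P_{\smash{W|\tilde{S}_i,U_i}}}{P_{\smash{W|\tilde{S}_i}}}\leq\log 2$ is also the paper's reasoning (Appendix~\ref{app:rs_and_bh}).
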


In this setting, Corollary~\ref{cor:iwas_rs} also decreases at a $1/\sqrt{n}$ rate and is tighter than~\citep[Theorem~5.1]{steinke2020reasoning}.
\begin{equation*}
    \big|\overline{\textnormal{gen}}(W,S)\big| \leq \frac{2(b-a)}{n} \sum_{i=1}^n \bE \big[ \bW(P_{\smash{W|\tilde{Z}_i,\tilde{Z}_{i+n},U_i}}, P_{\smash{W|\tilde{Z}_i,\tilde{Z}_{i+n}}}) \big] \leq \sqrt{\frac{2(b-a)^2 I(W;U|\tilde{S})}{n}}.
\end{equation*}

Finally, the randomized-subsample setting also accepts random-subset bounds.
These bounds study how the knowledge of the \emph{identities of a set of samples} that where used for training, $U_J$, alters the hypothesis distribution when all the other identities, $U_{J^c}$, and all the samples, $\tilde{S}$, are known.
\begin{theorem}
\label{th:rswas_rs}
Suppose that the loss function $\ell$ is $L$-Lipschitz for all $z \in \cZ$ and that the hypothesis space $\cW$ is Polish.
Let $J$ be a uniformly random subset of $[n]$ such that $|J|=m$, and that is independent of $W$, $\tilde{S}$, and $U$Let also $R$ be a random variable independent of $\tilde{S}$,$U$, and $J$. Then,
\begin{align*}
    \big|\overline{\textnormal{gen}}(W,S)\big| &\leq 2L \bE \big[ \bW(P_{\smash{W|\tilde{S},U,R}}, P_{\smash{W|\tilde{S},U_{J^c},R}}) \big]\ \textnormal{ and}\\
    \big|\overline{\textnormal{gen}}(W,S)\big| &\leq \frac{2L}{m} \bE \Big[ \sum_{i \in J} \bE \big[ \bW(P_{\smash{W|\tilde{S},U_{J^c} \cup U_i,R}}, P_{\smash{W|\tilde{S},U_{J^c}, R}}) \mid J \big] \Big].
\end{align*}
\end{theorem}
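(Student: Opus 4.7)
The plan is to adapt the ghost-hypothesis strategy of Theorems~\ref{th:iwas_std} and~\ref{th:rswas_std} to the randomized-subsample setting. Since each index $i$ lies in a uniform random $J\subseteq[n]$ of size $m$ with probability $m/n$, averaging the subsample-level quantity
\begin{equation*}
\widehat{\textnormal{gen}}_J(w,\tilde{s},u) \triangleq \frac{1}{m}\sum_{i\in J}\bigl(\ell(w,\tilde{z}_{i+(1-u_i)n}) - \ell(w,\tilde{z}_{i+u_in})\bigr)
\end{equation*}
over $J$ returns $\widehat{\textnormal{gen}}(w,\tilde{s},u)$, so $\overline{\textnormal{gen}}(W,S) = \bE[\widehat{\textnormal{gen}}_J(W,\tilde{S},U)]$. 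The second ingredient is a ghost hypothesis $W'\sim P_{W|\tilde{S},U_{J^c},R}$, which satisfies $W'\indep U_J\mid(\tilde{S},U_{J^c},R,J)$. Because each $U_i$ is Bernoulli$(1/2)$ independent of $(\tilde{S},R,J)$ and $W'$ depends only on $(\tilde{S},U_{J^c},R)$ (plus independent noise), the pair $(\tilde{Z}_{i+(1-U_i)n},\tilde{Z}_{i+U_in})$ is exchangeable given $(\tilde{S},W',R,J)$ for every $i\in J$, which makes $\bE[\widehat{\textnormal{gen}}_J(W',\tilde{S},U)]=0$.

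For the first bound I then subtract this zero term to obtain $\overline{\textnormal{gen}}(W,S) = \bE[\widehat{\textnormal{gen}}_J(W,\tilde{S},U) - \widehat{\textnormal{gen}}_J(W',\tilde{S},U)]$, and condition on $(\tilde{S},U,R,J)$. The function $w\mapsto\widehat{\textnormal{gen}}_J(w,\tilde{s},u)$ is an average of $m$ differences of $L$-Lipschitz functions in $w$, and hence itself $2L$-Lipschitz in $w$. Under this conditioning, $W\sim P_{W|\tilde{S},U,R}$ and $W'\sim P_{W|\tilde{S},U_{J^c},R}$, so the Kantorovich--Rubinstein duality of Lemma~\ref{lemma:kr_duality} bounds the absolute conditional expectation of the difference by $2L\,\bW(P_{W|\tilde{S},U,R},P_{W|\tilde{S},U_{J^c},R})$. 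Taking the outer expectation (via the tower property and Jensen's inequality $|\bE[X]|\leq\bE[|\bE[X\mid\cdot]|]$) delivers the first inequality.

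For the second bound I handle each term of $\widehat{\textnormal{gen}}_J$ separately. For $i\in J$, let $W^{(i)}\sim P_{W|\tilde{S},U_{J^c},R}$; the same symmetry argument (now applied only to the single coordinate $U_i$) gives $\bE[\ell(W^{(i)},\tilde{Z}_{i+(1-U_i)n}) - \ell(W^{(i)},\tilde{Z}_{i+U_in})\mid J] = 0$. Subtracting this zero from the $i$-th summand of $\widehat{\textnormal{gen}}_J$ and conditioning on $(\tilde{S},U_{J^c},U_i,R,J)$ fixes the two relevant samples while marginalising $W$ to $P_{W|\tilde{S},U_{J^c}\cup U_i,R}$ and leaving $W^{(i)}\sim P_{W|\tilde{S},U_{J^c},R}$. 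Applying KR duality separately to each of the two loss differences and combining via the triangle inequality then bounds the absolute conditional expectation of the $i$-th summand by $2L\,\bW(P_{W|\tilde{S},U_{J^c}\cup U_i,R},P_{W|\tilde{S},U_{J^c},R})$. Summing over $i\in J$ with weight $1/m$ and taking the outer expectation over $J$ yields the second inequality.

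The main delicate point is to keep the conditional independence structure straight: the ghost hypothesis must be conditionally independent of $U_J$ (or of $U_i$ in the per-term decomposition) so that the Bernoulli-$(1/2)$ symmetry eliminates its contribution, yet the Wasserstein distances that finally appear are between distributions differing precisely by whether one conditions additionally on $U_J$ (respectively on $U_i$). Once this bookkeeping is in place, everything reduces to routine applications of Kantorovich--Rubinstein duality at the appropriate conditioning level.
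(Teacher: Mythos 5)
Your proof is correct and follows the same core strategy as the paper's: rewrite $\overline{\textnormal{gen}}(W,S)$ via the subsampled quantity $\widehat{\textnormal{gen}}_J$, introduce a ghost hypothesis $W'$ that is conditionally independent of $U_J$ (respectively $U_i$) given $(\tilde{S},U_{J^c},R,J)$, exploit the Bernoulli-$(1/2)$ symmetry to annihilate the ghost term, and finish with Kantorovich--Rubinstein duality at the appropriate conditioning level. The one point where you depart is how the factor $2$ arises in the first inequality: you observe that $w\mapsto\widehat{\textnormal{gen}}_J(w,\tilde{s},u)=\cL_{\bar{s}_J}(w)-\cL_{s_J}(w)$ is itself $2L$-Lipschitz and apply KR duality once, whereas the paper splits $\widehat{\textnormal{gen}}_J(W,\cdot)-\widehat{\textnormal{gen}}_J(W',\cdot)$ into the $\cL_{\bar{s}_J}$-part and the $\cL_{s_J}$-part via the triangle inequality and applies KR duality twice with $L$-Lipschitz functions. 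Both routes are valid and yield exactly the same constant; your single-application observation also dispenses with the paper's stated worry that ``the functions $\cL_{\bar{s}}(w)$ and $\cL_s(w)$ are different,'' since the combined function is a single $2L$-Lipschitz map applied to both $W$ and $W'$. For the second inequality you revert to the same split-and-triangle argument as the paper, so there the proofs coincide. Overall: correct, and essentially the same proof with a marginally cleaner derivation of the factor $2$ in part one.
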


Although these bounds are weaker than Theorem~\ref{th:iwas_rs} (see Appendix~\ref{app:was_comp}), their data-dependent nature may lead to more tractable and sharper bounds in practice.
For example, when the discrete metric is considered, Theorem~\ref{th:rswas_rs} recovers from below current random-subset bounds based on the relative entropy, which are used to obtain some of the tightest bounds for LD and SGLD~\citep{rodriguez2020random, haghifam2020sharpened}.
\begin{corollary}
\label{cor:rswas_rd}
In the conditions of Theorem~\ref{th:rswas_rs}, for $m=1$, if the loss is bounded in $[a,b]$, then
\begin{align*}
    \big|\overline{\textnormal{gen}}(W,S)\big| &\leq 2(b-a) \bE \big[ \textnormal{\texttt{TV}}( P_{\smash{W|\tilde{S},U,R}}, P_{\smash{W|\tilde{S},U^{-J},R}}) \big] \\
    &\leq \frac{b-a}{n} \smash{\sum_{j=1}^n}\, \bE \Big[ \sqrt{2 \KL{P_{\smash{W|\tilde{S},U,R}}}{P_{\smash{W|\tilde{S},U^{-j},R}}}} \Big].
\end{align*}
\end{corollary}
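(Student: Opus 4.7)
}

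The plan is to specialize Theorem~\ref{th:rswas_rs} to the discrete metric $\rho_{\textnormal{H}}$ on $\cW$ and then invoke Pinsker's inequality. First, since the loss $\ell$ is bounded in $[a,b]$, Remark~\ref{remark:bounded_lipschitz} guarantees that $\ell(\cdot,z)$ is $(b-a)$-Lipschitz under $\rho_{\textnormal{H}}$ for every $z \in \cZ$. Moreover, under the discrete metric, Remark~\ref{remark:was_tv} yields the equality $\bW(P,Q) = \textnormal{\texttt{TV}}(P,Q)$ for any pair of distributions on $\cW$. Applying the first inequality of Theorem~\ref{th:rswas_rs} with $L = b-a$, $m=1$, and noting that for a singleton $J$ the complement $U_{J^c}$ coincides with $U^{-J} \triangleq U \setminus U_J$, I obtain
\begin{equation*}
    \big|\overline{\textnormal{gen}}(W,S)\big| \leq 2(b-a) \bE\big[\textnormal{\texttt{TV}}(P_{\smash{W|\tilde{S},U,R}}, P_{\smash{W|\tilde{S},U^{-J},R}})\big],
\end{equation*}
which is the first claimed inequality.

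For the second inequality, I would split the outer expectation over $J$, which is uniform on $[n]$ and independent of the other random variables, so that
\begin{equation*}
    \bE\big[\textnormal{\texttt{TV}}(P_{\smash{W|\tilde{S},U,R}}, P_{\smash{W|\tilde{S},U^{-J},R}})\big] = \frac{1}{n} \sum_{j=1}^n \bE\big[\textnormal{\texttt{TV}}(P_{\smash{W|\tilde{S},U,R}}, P_{\smash{W|\tilde{S},U^{-j},R}})\big].
\end{equation*}
Then I would apply Pinsker's inequality (the first branch of $\Psi$ in Lemma~\ref{lemma:pinsker_bh_ineq}) inside each expectation, yielding $\textnormal{\texttt{TV}}(P,Q) \leq \sqrt{\KL{P}{Q}/2}$. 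Combining with the prefactor $2(b-a)$ and using $2\sqrt{x/2} = \sqrt{2x}$ gives exactly the second inequality of the corollary.

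There is no serious obstacle here: the argument is essentially a three-line specialization. The only subtle point is verifying that one may pull the expectation over $J$ outside the total variation term; this is immediate from Fubini and the independence of $J$ from $(\tilde{S},U,R,W)$ asserted in Theorem~\ref{th:rswas_rs}. Note also that, unlike the analogous step for Corollary~\ref{cor:iwas_rs}, here the Bretagnolle--Huber branch of $\Psi$ is not needed because the relative entropies involved may exceed $\log 2$, so the plain Pinsker bound is the natural choice and matches the tightness level of~\citep[Theorem~2.5]{negrea2019information}.
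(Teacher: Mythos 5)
Your proof correctly specializes Theorem~\ref{th:rswas_rs} with $m=1$ to the discrete metric, using Remark~\ref{remark:bounded_lipschitz} to get $(b-a)$-Lipschitzness, Remark~\ref{remark:was_tv} to equate $\bW$ with $\texttt{TV}$, the uniformity of $J$ to split the expectation, and Pinsker to reach the relative-entropy bound; the arithmetic $2\sqrt{x/2}=\sqrt{2x}$ is also right. This is the intended argument.

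However, your closing remark explaining why only the Pinsker branch of $\Psi$ (and not the Bretagnolle--Huber branch) appears is backwards. You write that BH ``is not needed because the relative entropies involved may exceed $\log 2$''; in fact the opposite is true, and for the opposite reason. As shown in Appendix~\ref{app:rs_and_bh} (Lemma~\ref{lemma:kl_smaller_than_log2}), when $|J|=1$ one always has $\KL{P_{\smash{W|\tilde{S},U,R}}}{P_{\smash{W|\tilde{S},U^{-j},R}}} \leq \log(2)$. The BH inequality is only useful when the relative entropy is large (roughly $\gtrsim 1.6$), which is exactly where Pinsker becomes vacuous; since $\log 2 \approx 0.69$ sits comfortably in the regime where Pinsker dominates BH, the BH branch is never active here. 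If the entropies ``could exceed $\log 2$'' as you suggest, that would be an argument \emph{in favor} of keeping the BH branch. The same logic applies to Corollary~\ref{cor:iwas_rs}, so the contrast you draw with that corollary does not hold either --- both omit BH for the same bounded-entropy reason, whereas it is the standard-setting Corollaries~\ref{cor:itv_std} and~\ref{cor:rstv_std} that retain $\Psi$ because there the relative entropy is genuinely unbounded. This does not affect the validity of the derivation itself, but the rationale you give is incorrect.
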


\begin{comment}
\begin{remark}
If the loss $\ell$ is not Lipschitz but has a bounded cumulant generating moment, the bounds from~\citep{bu2020tightening,negrea2019information} are preferred.
Otherwise, the presented bounds compare positively to~\citep{steinke2020reasoning,rodriguez2020random,haghifam2020sharpened,zhou2020individually}.
\end{remark}
\end{comment}

\subsubsection{Outline of the proofs}
\label{subsubsec:outline_proofs_rd}

The proofs of the results in this section are similar to those of the standard setting, hence their similar expressions.
However, instead of operating with the expected generalization error in the form of $\bE[\textnormal{gen}(W,S)]$ they operate with $\bE[\widehat{\textnormal{gen}}(W,\tilde{S},U)]$.

There are two issues that complicate the application of the KR duality as in the previous proofs. For instance, consider $\widehat{\textnormal{gen}}(W,\tilde{S},U)$, then:
\begin{itemize}
    \item Both $\cL_{\bar{S}}(W)$ and $\cL_S(W)$ depend on $P_U$. Hence, considering a copy $W'$ of $W$ such that $P_{\smash{W',\tilde{S},U}} = P_{\smash{W,\tilde{S}}} \otimes P_{U}$ does not help since $\bE[\widehat{\textnormal{gen}}(W,\tilde{S},U)] \neq \bE[\cL_{\bar{S}}(W') - \cL_{S}(W)]$.
    \item Even if $\bE[\widehat{\textnormal{gen}}(W,\tilde{S},U)] = \bE[\cL_{\bar{S}}(W') - \cL_{S}(W)]$ were true, for some fixed $\tilde{s}$ and $u$, the functions $\cL_{\bar{s}}(w)$ and $\cL_{s}(w)$ on $w$ are different, and thus the KR duality cannot be invoked.
\end{itemize}

Nonetheless, these two issues are resolved considering instead
\begin{equation*}
    \smash{\widehat{\textnormal{gen}}(W,\tilde{S},U) = \cL_{\bar{S}}(W) - \cL_S(W) - \bE[\cL_{\bar{S}}(W') - \cL_{S}(W')],}
\end{equation*}
where $W'$ is an independent copy of $W$ such that $P_{\smash{W',\tilde{S},U}} = P_{\smash{W,\tilde{S}}} \otimes P_{U}$.
Hence, the inequalities $\bE[\cL_{\bar{S}}(W') - \cL_{S}(W')] = 0$ and $|x+y| \leq |x| + |y|$ lead to the upper bound
\begin{equation*}
    \smash{\big| \bE[\widehat{\textnormal{gen}}(W,\tilde{S},U)]\big| \leq \big| \bE[ \cL_{\bar{S}}(W') - \cL_{\bar{S}}(W)] \big| + \big|\bE[\cL_{S}(W') - \cL_S(W)]\big|,}
\end{equation*}
where the KR duality can be applied to each of the terms, albeit at the expense of an extra factor of 2.

%%---------------------------------------------------------------------------------------------------%%
\subsection{Backward channel}
\label{subsec:ege_backward_channel}

In~\citep{lopez2018generalization}, the authors study the characterization of the expected generalization error in terms of the discrepancy between the data distribution $P_S$ and the backward channel distribution $P_{S|W}$ motivated by its connection to rate--distortion theory, see e.g., \citep[Chapters 25--57]{polyanskiy2014lecture} or \citep[Chapter~10]{cover2006elements}.
An approach formalizing this intuitive connection is given in Appendix~\ref{app:rd_gen} and different angles, based on chaining mutual information~\citep{asadi2018chaining} and compression, are found in~\citep[Section 5]{hafez2020conditioning} and \citep{bu2020information}.

More concretely, they proved that the generalization error is bounded from above by the discrepancy of these distributions, where the discrepancy is measured by the Wasserstein distance of order $p$ with the Minkowski distance of order $p$ as a metric, i.e., $\rho(x,y) = \lVert x-y \rVert_p$. Namely, 
\begin{equation*}
    \big| \overline{\textnormal{gen}}(W,S) \big| \leq \frac{L}{n^{1/p}} \bE[\bW_{p,\lvert \cdot \rvert}^p(P_S,P_{S|W})]^{1/p}.
\end{equation*}
Similarly, the results from~\S\ref{subsec:ege_standard_setting} and~\S\ref{subsec:ege_randomized_subsample_setting} can be replicated considering the backward channel instead of the forward channel, e.g., $P_{S|W}$ instead of $P_{W|S}$ in~\eqref{eq:was_std}, $P_{Z_i|W}$ instead of $P_{W|Z_i}$ in Theorem~\ref{th:iwas_std}.
However, in this case, the loss $\ell$ would be required to be Lipschitz with respect to the samples space $\cZ$ and not the hypothesis space $\cW$, i.e., Lipschitz for all fixed $w \in \cW$, thus exploiting the geometry of the samples' space and not the hypotheses' one.

As an example, noting that $\overline{\textnormal{gen}}(W,S) = \bE[\cL_{S'}(W) - \cL_S(W)]$, where $S'$ is an independent copy of $S$ such that $P_{W,S'} = P_W \otimes P_S$ produces the bound
\begin{equation*}
    \smash{\big|\overline{\textnormal{gen}}(W,S)\big| \leq L \bE[\bW(P_{S},P_{S|W})]}.
\end{equation*}
Compared to~\citep{lopez2018generalization}, these results (i) are valid for any metric $\rho$ as long as the loss  $\ell$ is Lipschitz under $\rho$,
 and (ii) have single-letter and random-subset versions, and (iii) have variants in both the standard and randomized-subsample settings.

%%---------------------------------------------------------------------------------------------------%%
\subsection{Other information measures}
\label{subsec:other_info_measures}

The bounds obtained in~\S\ref{subsec:ege_standard_setting} and~\S\ref{subsec:ege_randomized_subsample_setting} may be manipulated to produce a variety of new bounds based on common information measures.
For example, once the discrete metric is assumed and since the total variation is symmetric, applying Pinsker's inequality with the distributions in the opposite order to Corollaries~\ref{cor:itv_std}, \ref{cor:rstv_std}, \ref{cor:iwas_rs}, and~\ref{cor:rswas_rd} and further applying Jensen's inequality yields bounds based on the lautum information $\texttt{L}$ \citep{palomar2008lautum}.
For instance, a corollary of Theorem~\ref{th:iwas_rs} is
\begin{equation*}
    \big| \overline{\textnormal{gen}}(W,S) \big| 
    \leq \smash{\frac{b-a}{n}} \sum\nolimits_{i=1}^n  \Psi\big( \textnormal{\texttt{L}}(W;Z_i) \big) .
\end{equation*}
Similarly, several new bounds based on different $f$-\emph{divergences}~\citep[Chapter 7]{polyanskiy2014lecture} may be obtained employing the \emph{joint range strategy} once the discrete metric is assumed. As an example, some corollaries of Theorem~\ref{th:iwas_std}
based on the Hellinger distance $\texttt{H}$ and the $\chi^2$-divergence (see Appendix~\ref{app:chi_squared_gen} for a tighter and more general version of~\eqref{eq:chi_squared_basic}) are
\begin{align}
    \big| \overline{\textnormal{gen}}(W,S) \big| 
    &\leq  \frac{L}{2n} \sum\nolimits_{i=1}^n \bE \bigg[\texttt{H}(P_{W|Z_i},P_W)\sqrt{4- \texttt{H}^2(P_{W|Z_i},P_W)}\bigg],
    \label{eq:hellinger} \\
    \big| \overline{\textnormal{gen}}(W,S) \big| 
    &\leq \frac{L}{\sqrt{2}n} \sum\nolimits_{i=1}^n \bE \bigg[\sqrt{\log\big(1 + \chi^2(P_{W|Z_i},P_{W})\big)}\bigg], \textnormal{ and} 
    \label{eq:chi_squared_via_kl} \\
    \big| \overline{\textnormal{gen}}(W,S) \big| 
    &\leq \frac{L}{2n} \sum\nolimits_{i=1}^n \bE \bigg[\sqrt{\chi^2(P_{W|Z_i},P_{W})}\bigg].
    \label{eq:chi_squared_basic}
\end{align}
\vspace*{-3mm}

\subsection{Final remarks on the generality of the results}

Due to Bobkov--Götze's theorem~\citep[Theorem~4.8]{van2014probability}, the relative entropy results still hold when the loss is both Lipschitz and subgaussian.
Hence, the presented Wasserstein distance bounds are tighter than~\citep{xu2017information, bu2020tightening, negrea2019information, rodriguez2020random, haghifam2020sharpened} in a more general setting.
Moreover, the total variation results also hold for any metric with the added factor of $d_\rho(\cW)$ as per Remark~\ref{remark:was_tv}.
These results were omitted in the main text for clarity of exposition, but are included in Appendix~\ref{app:generality}.

Therefore, only when the loss is not Lipschitz but is subgaussian or has a bounded cumulant function, or $\cW$ is not Polish, the bounds from~\citep{bu2020tightening, negrea2019information, haghifam2020sharpened, zhou2020individually} are preferred.
As an example, some common loss functions such as the cross-entropy, the Hinge loss, the Huber loss, or any $L_p$ norm are Lipschitz~\citep{steinwart2008support, gao2018properties} \rev{under an appropriate metric $\rho$, see Appendix~\ref{app:geometry_and_bounds} for a discussion of the role of the metric and the space geometry in the presented bounds.}

%%---------------------------------------------------------------------------------------------------%%
\section{Discussion}
\label{sec:discussion}

This paper introduced several expected generalization error bounds based on the Wasserstein distance.
In particular, these are full-dataset, single-letter, and random-subset bounds on both the standard and the randomized-subsample settings.
When the Wasserstein distance ignores the geometry of the hypothesis space and the loss is bounded, the presented bounds are tighter and recover from below the current bounds based on the relative entropy and the mutual information~\citep{xu2017information, bu2020tightening, negrea2019information, rodriguez2020random, haghifam2020sharpened}, see also Appendix~\ref{app:generality} for stronger, more general statements.
Furthermore, the obtained total variation and relative-entropy bounds on the standard setting are ensured to be non-vacuous, i.e., smaller or equal than the trivial bound, thus resolving the issue of potentially vacuous relative-entropy and mutual-information bounds on the standard setting.
Interestingly, the results for the randomized-subsample setting are tighter than their analogous in the standard setting only if their Wasserstein distance (or total variation) is twice as small.

Moreover, the techniques employed to obtain these bounds can also be used to obtain analogous bounds considering the backward channel and the samples' space geometry, aiming to facilitate connections between the generalization error characterization and rate--distortion theory, as suggested by~\citet{lopez2018generalization}.
Nonetheless, when the backward channel can be characterized, these bounds are interesting in their own right.
Finally, the presented bounds may be used to generate a variety of new bounds in terms of, e.g., the lautum information or $f$-divergences like the total variation, the relative entropy, the Hellinger distance, or the $\chi^2$-divergence.

%%---------------------------------------------------------------------------------------------------%%
\subsection{Limitations and future work}
\label{subsec:limitations_future_work}

\paragraph{PAC-Bayes bounds}
PAC-Bayes bounds ensure that $\smash{\bE[\textnormal{gen}(W,S) \mid S] \geq \alpha(\beta^{-1})}$ with probability no greater than $\beta \in (0,1)$.
Similarly, single-draw PAC-Bayes bounds ensure that $\textnormal{gen}(W,S) \allowbreak \geq \smash{\alpha(\beta^{-1})}$ with probability no greater than $\beta \in (0,1)$.
These concentration bounds are of high probability when the dependency on $\smash{\beta^{-1}}$ is logarithmic, i.e., $\log(1/\beta)$.
See, \citep{guedj2019primer,shalev2014understanding} for an overview. 

The bounds from this work may be used to obtain single-draw PAC-Bayes bounds applying Markov's inequality~\citep[Problem 3.1]{cover2006elements} directly. For instance, employing it in Theorem~\ref{th:iwas_std} implies that
\begin{equation*}
    P_{W,S}\Big( \textnormal{gen}(W,S) \geq \smash{\frac{L}{\beta n} \sum\nolimits_{i=1}^n} \bE[\bW(P_{W|Z_i},P_W)]\Big) \leq \beta,
\end{equation*}
for all $\beta \in (0,1)$.
However, this is not a high-probability bound since the dependency on $\smash{\beta^{-1}}$ is linear.
Hence, high-probability concentration bounds based on the Wasserstein distance and the total variation are a path of future research.
As an example, \citep{dwork2014algorithmic, dwork2015generalization, esposito2019generalization, bousquet2020sharper} provide high-probability single-draw PAC-Bayes bounds based on, respectively, max-information, differential privacy, $\alpha$-mutual information, and uniform stability.
Similarly, high-probability PAC-Bayes bounds based on the relative entropy \rev{and the hypothesis' space geometry} are given in~\citep{hellstrm2020generalization} \rev{and~\citep{audibert2003pac,audibert2007combining}, respectively}.

\paragraph{New bounds to specific algorithms}
The Wasserstein distance is difficult to characterize and/or estimate. Nonetheless, some of the bounds that can be obtained from it, e.g.,  mutual-information and relative-entropy bounds, have been used to obtain analytical bounds on specific algorithms, e.g., Langevin dynamics and stochastic gradient Langevin dynamics \citep{bu2020tightening, negrea2019information, rodriguez2020random, haghifam2020sharpened}.
Some of these results can be readily tightened with Corollaries~\ref{cor:itv_std}, \ref{cor:rstv_std}, and~\ref{cor:rswas_rd}. Thence, deriving new analytical bounds for specific algorithms based on the presented results is also a topic for further research.

\paragraph{Connections to stability and privacy measures}
A learning algorithm is said to be stable if a small change on the input dataset produces a small variation in the output hypothesis.
There are various attempts at quantifying this notion such as uniform stability~\citep{bousquet2002stability}, where the variation in the output hypothesis is seen in terms of the loss, and differential privacy (DP)~\citep{dwork2014algorithmic}, where this variation is seen in terms of the hypothesis distribution.
These notions are tied to the generalization capability of an algorithm, i.e., the less a hypothesis depends on the specifics of the data samples, the better it will generalize, and hence there are works obtaining generalization bounds based on stability, see e.g., \citep{dwork2015generalization, bousquet2020sharper}.
In particular, there are some works that, assuming some stability notion such as DP, bound from above the relative entropy and the mutual information appearing in some of the bounds that can be derived from the results presented in this work, hence also tying stability and generalization, c.f.~\citep{steinke2020reasoning, bun2016concentrated, rodriguez2020upper}.
Therefore, a future line of research is to investigate how different notions of stability can be combined with the measures of similarity between distributions employed in this work to characterize the generalization error.

\section*{Funding}

This work was funded in part by the Swedish research council under
contract 2019-03606.

\bibliography{references}

% Generated by IEEEtranN.bst, version: 1.14 (2015/08/26)
\begin{thebibliography}{43}
\providecommand{\natexlab}[1]{#1}
\providecommand{\url}[1]{#1}
\csname url@samestyle\endcsname
\providecommand{\newblock}{\relax}
\providecommand{\bibinfo}[2]{#2}
\providecommand{\BIBentrySTDinterwordspacing}{\spaceskip=0pt\relax}
\providecommand{\BIBentryALTinterwordstretchfactor}{4}
\providecommand{\BIBentryALTinterwordspacing}{\spaceskip=\fontdimen2\font plus
\BIBentryALTinterwordstretchfactor\fontdimen3\font minus
  \fontdimen4\font\relax}
\providecommand{\BIBforeignlanguage}[2]{{%
\expandafter\ifx\csname l@#1\endcsname\relax
\typeout{** WARNING: IEEEtranN.bst: No hyphenation pattern has been}%
\typeout{** loaded for the language `#1'. Using the pattern for}%
\typeout{** the default language instead.}%
\else
\language=\csname l@#1\endcsname
\fi
#2}}
\providecommand{\BIBdecl}{\relax}
\BIBdecl

\bibitem[Steinke and Zakynthinou(2020)]{steinke2020reasoning}
T.~Steinke and L.~Zakynthinou, ``Reasoning about generalization via conditional
  mutual information,'' in \emph{Conference on Learning Theory}, ser.
  Proceedings of Machine Learning Research, vol. 125, Jul. 2020, pp.
  3437--3452.

\bibitem[Shalev-Shwartz and Ben-David(2014)]{shalev2014understanding}
S.~Shalev-Shwartz and S.~Ben-David, \emph{Understanding machine learning: From
  theory to algorithms}.\hskip 1em plus 0.5em minus 0.4em\relax Cambridge
  university press, 2014.

\bibitem[Vapnik(2013)]{vapnik2013nature}
V.~Vapnik, \emph{{The Nature of Statistical Learning Theory}}, ser. Information
  Science and Statistics.\hskip 1em plus 0.5em minus 0.4em\relax Springer
  Science \& Business Media, 2013.

\bibitem[Xu and Raginsky(2017)]{xu2017information}
A.~Xu and M.~Raginsky, ``Information-theoretic analysis of generalization
  capability of learning algorithms,'' in \emph{Advances in Neural Information
  Processing Systems}, 2017, pp. 2524--2533.

\bibitem[Russo and Zou(2020)]{russo2019much}
D.~Russo and J.~Zou, ``{How much does your data exploration overfit?
  Controlling bias via information usage},'' \emph{IEEE Transactions on
  Information Theory}, vol.~66, no.~1, pp. 302--323, Jan. 2020.

\bibitem[Bu et~al.(2020{\natexlab{a}})Bu, Zou, and
  Veeravalli]{bu2020tightening}
Y.~Bu, S.~Zou, and V.~V. Veeravalli, ``Tightening mutual information based
  bounds on generalization error,'' \emph{IEEE Journal on Selected Areas in
  Information Theory}, vol.~1, no.~1, pp. 121--130, May 2020.

\bibitem[Negrea et~al.(2019)Negrea, Haghifam, Dziugaite, Khisti, and
  Roy]{negrea2019information}
J.~Negrea, M.~Haghifam, G.~K. Dziugaite, A.~Khisti, and D.~M. Roy,
  ``Information-theoretic generalization bounds for {SGLD} via data-dependent
  estimates,'' in \emph{Advances in Neural Information Processing Systems},
  2019, pp. 11\,015--11\,025.

\bibitem[{Hellström} and {Durisi}(2020)]{hellstrm2020generalization}
F.~{Hellström} and G.~{Durisi}, ``Generalization bounds via information
  density and conditional information density,'' \emph{IEEE Journal on Selected
  Areas in Information Theory}, vol.~1, no.~3, pp. 824--839, Nov. 2020.

\bibitem[Rodr{\'\i}guez-G{\'a}lvez et~al.(2020)Rodr{\'\i}guez-G{\'a}lvez,
  Bassi, Thobaben, and Skoglund]{rodriguez2020random}
B.~Rodr{\'\i}guez-G{\'a}lvez, G.~Bassi, R.~Thobaben, and M.~Skoglund, ``On
  random subset generalization error bounds and the stochastic gradient
  langevin dynamics algorithm,'' in \emph{IEEE Information Theory Workshop
  (ITW)}.\hskip 1em plus 0.5em minus 0.4em\relax IEEE, 2020.

\bibitem[Haghifam et~al.(2020)Haghifam, Negrea, Khisti, Roy, and
  Dziugaite]{haghifam2020sharpened}
M.~Haghifam, J.~Negrea, A.~Khisti, D.~M. Roy, and G.~K. Dziugaite, ``Sharpened
  generalization bounds based on conditional mutual information and an
  application to noisy, iterative algorithms,'' in \emph{Advances in Neural
  Information Processing Systems}, 2020.

\bibitem[Hafez-Kolahi et~al.(2020)Hafez-Kolahi, Golgooni, Kasaei, and
  Soleymani]{hafez2020conditioning}
H.~Hafez-Kolahi, Z.~Golgooni, S.~Kasaei, and M.~Soleymani, ``Conditioning and
  processing: Techniques to improve information-theoretic generalization
  bounds,'' \emph{Advances in Neural Information Processing Systems}, vol.~33,
  2020.

\bibitem[Zhou et~al.(2020)Zhou, Tian, and Liu]{zhou2020individually}
R.~Zhou, C.~Tian, and T.~Liu, ``Individually conditional individual mutual
  information bound on generalization error,'' \emph{arXiv preprint
  arXiv:2012.09922}, 2020.

\bibitem[Asadi et~al.(2018)Asadi, Abbe, and Verd{\'u}]{asadi2018chaining}
A.~Asadi, E.~Abbe, and S.~Verd{\'u}, ``Chaining mutual information and
  tightening generalization bounds,'' in \emph{Advances in Neural Information
  Processing Systems}, 2018, pp. 7234--7243.

\bibitem[van Handel(2014)]{van2014probability}
R.~van Handel, ``Probability in high dimension,'' Princeton University, NJ,
  Tech. Rep., 2014.

\bibitem[Wang et~al.(2019)Wang, Diaz, Santos~Filho, and
  Calmon]{wang2019information}
H.~Wang, M.~Diaz, J.~C.~S. Santos~Filho, and F.~P. Calmon, ``An
  information-theoretic view of generalization via wasserstein distance,'' in
  \emph{2019 IEEE International Symposium on Information Theory (ISIT)}.\hskip
  1em plus 0.5em minus 0.4em\relax IEEE, 2019, pp. 577--581.

\bibitem[Zhang et~al.(2018)Zhang, Liu, and Tao]{zhang2018optimal}
J.~Zhang, T.~Liu, and D.~Tao, ``An optimal transport view on generalization,''
  \emph{arXiv preprint arXiv:1811.03270}, 2018.

\bibitem[Lopez and Jog(2018)]{lopez2018generalization}
A.~T. Lopez and V.~Jog, ``Generalization error bounds using wasserstein
  distances,'' in \emph{2018 IEEE Information Theory Workshop (ITW)}.\hskip 1em
  plus 0.5em minus 0.4em\relax IEEE, 2018, pp. 1--5.

\bibitem[Villani(2008)]{villani2008optimal}
C.~Villani, \emph{{Optimal Transport: Old and New}}, ser. Grundlehren der
  mathematischen Wissenschaften.\hskip 1em plus 0.5em minus 0.4em\relax
  Springer Science \& Business Media, 2008, vol. 338.

\bibitem[Polyanskiy and Wu(2017)]{polyanskiy2014lecture}
Y.~Polyanskiy and Y.~Wu, ``{Lecture notes on Information Theory},'' \emph{MIT
  (6.441), UIUC (ECE 563), Yale (STAT 664)}, 2017.

\bibitem[Bretagnolle and Huber(1978)]{bretagnolle1978estimation}
J.~Bretagnolle and C.~Huber, ``Estimation des densit{\'e}s: risque minimax,''
  in \emph{S{\'e}minaire de Probabilit{\'e}s XII}.\hskip 1em plus 0.5em minus
  0.4em\relax Springer, 1978, pp. 342--363, in French.

\bibitem[Raginsky et~al.(2016)Raginsky, Rakhlin, Tsao, Wu, and
  Xu]{raginsky2016information}
M.~Raginsky, A.~Rakhlin, M.~Tsao, Y.~Wu, and A.~Xu, ``Information-theoretic
  analysis of stability and bias of learning algorithms,'' in \emph{2016 IEEE
  Information Theory Workshop (ITW)}.\hskip 1em plus 0.5em minus 0.4em\relax
  IEEE, 2016, pp. 26--30.

\bibitem[Cover and Thomas(2006)]{cover2006elements}
T.~M. Cover and J.~A. Thomas, \emph{{Elements of Information Theory}},
  2nd~ed.\hskip 1em plus 0.5em minus 0.4em\relax John Wiley \& Sons, 2006.

\bibitem[Bu et~al.(2020{\natexlab{b}})Bu, Gao, Zou, and
  Veeravalli]{bu2020information}
Y.~Bu, W.~Gao, S.~Zou, and V.~Veeravalli, ``Information-theoretic understanding
  of population risk improvement with model compression,'' \emph{Proceedings of
  the AAAI Conference on Artificial Intelligence}, vol.~34, no.~04, pp.
  3300--3307, Apr. 2020.

\bibitem[Palomar and Verd{\'u}(2008)]{palomar2008lautum}
D.~P. Palomar and S.~Verd{\'u}, ``Lautum information,'' \emph{IEEE Transactions
  on Information Theory}, vol.~54, no.~3, pp. 964--975, 2008.

\bibitem[Steinwart and Christmann(2008)]{steinwart2008support}
I.~Steinwart and A.~Christmann, \emph{Support vector machines}.\hskip 1em plus
  0.5em minus 0.4em\relax Springer Science \& Business Media, 2008.

\bibitem[Gao and Pavel(2018)]{gao2018properties}
B.~Gao and L.~Pavel, ``On the properties of the softmax function with
  application in game theory and reinforcement learning,'' 2018.

\bibitem[Guedj(2019)]{guedj2019primer}
B.~Guedj, ``{A primer on PAC-Bayesian learning},'' \emph{arXiv preprint
  arXiv:1901.05353}, 2019.

\bibitem[Dwork et~al.(2014)Dwork, Roth, et~al.]{dwork2014algorithmic}
C.~Dwork, A.~Roth \emph{et~al.}, ``The algorithmic foundations of differential
  privacy.'' \emph{Foundations and Trends in Theoretical Computer Science},
  vol.~9, no. 3-4, pp. 211--407, 2014.

\bibitem[Dwork et~al.(2015)Dwork, Feldman, Hardt, Pitassi, Reingold, and
  Roth]{dwork2015generalization}
C.~Dwork, V.~Feldman, M.~Hardt, T.~Pitassi, O.~Reingold, and A.~Roth,
  ``Generalization in adaptive data analysis and holdout reuse,'' in
  \emph{Advances in Neural Information Processing Systems}, 2015, pp.
  2350--2358.

\bibitem[Esposito et~al.(2021)Esposito, Gastpar, and
  Issa]{esposito2019generalization}
A.~R. Esposito, M.~Gastpar, and I.~Issa, ``Generalization error bounds via
  rényi-, $f$-divergences and maximal leakage,'' \emph{IEEE Transactions on
  Information Theory}, vol.~67, no.~8, pp. 4986--5004, 2021.

\bibitem[Bousquet et~al.(2020)Bousquet, Klochkov, and
  Zhivotovskiy]{bousquet2020sharper}
O.~Bousquet, Y.~Klochkov, and N.~Zhivotovskiy, ``Sharper bounds for uniformly
  stable algorithms,'' in \emph{Conference on Learning Theory}, ser.
  Proceedings of Machine Learning Research, 2020, pp. 610--626.

\bibitem[Audibert and Bousquet(2003)]{audibert2003pac}
J.-Y. Audibert and O.~Bousquet, ``Pac-bayesian generic chaining.'' in
  \emph{NIPS}.\hskip 1em plus 0.5em minus 0.4em\relax Citeseer, 2003, pp.
  1125--1132.

\bibitem[Audibert and Bousquert(2007)]{audibert2007combining}
J.-Y. Audibert and O.~Bousquert, ``Combining pac-bayesian and generic chaining
  bounds.'' \emph{Journal of Machine Learning Research}, vol.~8, no.~4, 2007.

\bibitem[Bousquet and Elisseeff(2002)]{bousquet2002stability}
O.~Bousquet and A.~Elisseeff, ``Stability and generalization,'' \emph{Journal
  of machine learning research}, vol.~2, no. Mar, pp. 499--526, 2002.

\bibitem[Bun and Steinke(2016)]{bun2016concentrated}
M.~Bun and T.~Steinke, ``Concentrated differential privacy: Simplifications,
  extensions, and lower bounds,'' in \emph{Theory of Cryptography
  Conference}.\hskip 1em plus 0.5em minus 0.4em\relax Springer, 2016, pp.
  635--658.

\bibitem[Rodríguez-Gálvez et~al.(2021)Rodríguez-Gálvez, Bassi, and
  Skoglund]{rodriguez2020upper}
B.~Rodríguez-Gálvez, G.~Bassi, and M.~Skoglund, ``Upper bounds on the
  generalization error of private algorithms for discrete data,'' \emph{IEEE
  Transactions on Information Theory}, vol.~67, no.~11, pp. 7362--7379, 2021.

\bibitem[Haussler(1995)]{haussler1995sphere}
D.~Haussler, ``Sphere packing numbers for subsets of the boolean n-cube with
  bounded vapnik-chervonenkis dimension,'' \emph{Journal of Combinatorial
  Theory, Series A}, vol.~69, no.~2, pp. 217--232, 1995.

\bibitem[Feldman and Steinke(2018)]{feldman2018calibrating}
\BIBentryALTinterwordspacing
V.~Feldman and T.~Steinke, ``Calibrating noise to variance in adaptive data
  analysis,'' pp. 535--544, 2018. [Online]. Available:
  \url{http://proceedings.mlr.press/v75/feldman18a.html}
\BIBentrySTDinterwordspacing

\bibitem[McDonald and Weiss(2013)]{mcdonald1999course}
J.~N. McDonald and N.~A. Weiss, \emph{{A Course in Real Analysis}},
  2nd~ed.\hskip 1em plus 0.5em minus 0.4em\relax Cambridge, Massachusetts:
  Elsevier, 2013.

\bibitem[Gray(2012)]{gray2012source}
R.~M. Gray, \emph{Source coding theory}, ser. Engineering and Computer
  Science.\hskip 1em plus 0.5em minus 0.4em\relax Springer Science \& Business
  Media, 2012, vol.~83.

\bibitem[Wu(2017)]{wu2017lecture}
Y.~Wu, ``Lecture notes on information-theoretic methods for high-dimensional
  statistics,'' \emph{Lecture Notes for ECE598YW (UIUC)}, vol.~16, 2017.

\bibitem[Popoviciu(1935)]{popoviciu1935equations}
T.~Popoviciu, ``Sur les {\'e}quations alg{\'e}briques ayant toutes leurs
  racines r{\'e}elles,'' \emph{Mathematica}, vol.~9, pp. 129--145, 1935.

\bibitem[Harremo{\"e}s and Vajda(2011)]{harremoes2011pairs}
P.~Harremo{\"e}s and I.~Vajda, ``On pairs of $ f $-divergences and their joint
  range,'' \emph{IEEE Transactions on Information Theory}, vol.~57, no.~6, pp.
  3230--3235, 2011.

\end{thebibliography}
\bibliographystyle{IEEEtranN}
%\bibliographystyle{alpha}

%%%%%%%%%%%%%%%%%%%%%%%%%%%%%%%%%%%%%%%%%%%%%%%%%%%%%%%%%%%%
\newpage
\appendix

%%%%%%%%%%%%%%%%%%%%%%%%%%%%%%%%%%%%%%%%%%%%%%%%%%%%%%%%%%%%
\begin{color}{black}
\section{A short note on geometry and generalization error bounds}
\label{app:geometry_and_bounds}

\subsection{Chaining- and Wasserstein-based bounds}
\label{subapp:chaining_bounds}

Ever since it was shown that some infinitely-dimensional hypothesis spaces where PAC-learnable thanks to the VC dimension~\citep[Chapter 6]{shalev2014understanding}, there has been an interest in studying the role of the space complexity and its geometry in determining the generalization of an algorithm, e.g.~\citep{haussler1995sphere}.

Some of the most interesting results come from the theory of random processes. Adapted to our notation, this theory considers the set of random variables (or random process) $\lbrace \textnormal{gen}(w,S) \rbrace_{w \in \cW}$. Then, this theory bounds the generalization error using the $\epsilon$-covering number of the hypothesis space $\cN(\cW, \rho, \epsilon)$ with some requirements on the smoothness of the hypothesis' space $\cW$ under a metric $\rho$. Here, the geometry of the space captures its complexity via the $\epsilon$-covering number, which is defined as the cardinality of the minimum set $\cN$ such that for all hypothesis $w$ in $\cW$, there is an element of the set $x \in \cN$ such that $\rho(x,w) \leq \epsilon$. The two main techniques from this line of work are:
\begin{itemize}
    \item The Lipschitz maximal inequality~\citep[Lemma 5.7]{van2014probability}. Here, the smoothness condition is to require the process to be Lipschitz; that is, that there is a random variable $C$ such that for all $w, w' \in \cW$
    \begin{equation*}
        \lvert \textnormal{gen}(w,S) - \textnormal{gen}(w',S) \rvert \leq C \rho(w,w').
    \end{equation*}
    Then, with the additional condition that $\textnormal{gen}(w,S)$ should be $\sigma$-subgaussian under $P_S$ for all $w \in \cW$, this inequality bounds the generalization error as follows:
    \begin{equation*}
        \bE[\textnormal{gen}(W,S)] \leq \inf_{\epsilon \in \bR} \big\lbrace \epsilon \bE[C] + \sqrt{2 \sigma^2 \log \cN(\cW, \rho, \epsilon) }\big\rbrace.
    \end{equation*}
    Note how this technique creates a tension between the finesse $\epsilon$ of the covering and the smoothness of the process $\bE[C]$.
    
    \item Dudley's chaining technique~\citep[Theorem 5.24]{van2014probability}.  Here, the smoothness condition is to require the process to be subgaussian; that is, for all $w,w' \in \cW$ and all $\lambda \geq 0$,
    \begin{equation*}
        \log \bE \Big[e^{\lambda\big(\textnormal{gen}(w,S)-\textnormal{gen}(w',S)\big)} \Big] \leq \frac{\lambda^2 \rho\big(w,w'\big)^2}{2}
    \end{equation*}
    and $\bE[\textnormal{gen}(w,S)] = 0$. Then, with the additional condition that the process is separable, this inequality bounds the generalization error as follows:
    \begin{equation*}
        \bE[\textnormal{gen}(W,S)] \leq 6 \sum_{k \in \mathbb{Z}} 2^{-k} \sqrt{\log \cN(\cW, \rho, 2^{-k}) }.
    \end{equation*}
    Note that the subgaussian requirement is a relaxation of the Lipschitz requirement. As such, now the bound is expressed as a sum where each finer covering number of the space is weighted by the finesse ($2^{-k}$) of such a covering. Further refined bounds based on this technique can be found in~\citep{audibert2003pac, audibert2007combining}.
\end{itemize}

The first work that included the relationship between the hypothesis and the training samples to the analysis using random processes was~\citep{asadi2018chaining}.
There, the authors combined the chaining technique with \citep[Lemma 1]{xu2017information} to derive a formula which bounds the generalization error by a weighted average of the mutual information between the dataset and increasingly finer quantizations $W_k$ of the hypothesis. More precisely, they proved that
\begin{equation*}
    \bE[\textnormal{gen}(W,S)] \leq 3 \sqrt{2} \sum_{k \in \mathbb{Z}} 2^{-k} \sqrt{I(W_k;S)}.
\end{equation*}
Therefore, in~\citep{asadi2018chaining}, the geometry of the hypothesis space $\cW$ under the metric $\rho$ is expressed as the amount of information that the quantizations of the hypothesis under $\rho$ contain about the dataset $S$.

On the other hand, in~\citep{wang2019information} and this paper, a stronger smoothness requirement is considered. Namely that the loss function is $L$-Lipschitz; i.e., $|\ell(w,z) - \ell(w',z)| \leq L \rho(w,w')$ for all $w,w' \in \cW$ and all $z \in \cZ$.
This is a stronger statement since the subgaussian assumption can be viewed as an ``in-probability'' version of the Lipschitz assumption.
Nonetheless, this stronger assumption allows these bounds to bound the generalization error by the minimum cost to go from the distribution of the hypothesis after observing some samples to its marginal distribution, e.g., $\bW(P_{W|Z_i}, P_W)$. Here the metric $\rho$ is used to quantify how far away are the realizations of both probability distributions, on average, if they are coupled in the best way possible.

An interesting property of the presented bounds is that they have the flexibility to consider either the hypothesis or the sample space (backward channel).  For instance, for \citep[Example 1]{asadi2018chaining}, the presented bounds using the backward channel are tighter than the bounds arising from the chaining technique.

In this example, the authors consider the canonical Gaussian process. The hypothesis space is $\mathcal{W} = \lbrace w \in \mathbb{R}^2: \lVert w \rVert_2 = 1 \rbrace$, the samples are $Z_i \sim \mathcal{N}(0,I_2)$, the loss function is $\ell(w,z) = - w^T z$, and the hypothesis is selected with the empirical risk minimization (ERM) algorithm, i.e., $w^\star = \text{arg} \min_{w \in \mathcal{W}}  \big \lbrace \frac{1}{n} \sum_{i=1}^n \ell(w,z_i) \big \rbrace $.
In this setting, by Cauchy--Schwarz we see that the loss $\ell(w,\cdot)$ is 1-Lipschitz for all $w \in \mathcal{W}$; i.e., $| - w^Tz + w^Tz'| \leq \lVert w \rVert_2 \lVert z - z' \rVert_2 \leq  \lVert z - z' \rVert_2$ for all $z, z' \in \mathcal{Z}$. Therefore, the backward channel equivalent of Theorem~\ref{th:iwas_std} holds. 
Moreover, the function is 1-subgaussian under $P_Z$ for all $w \in \mathcal{W}$. Therefore, as shown in Appendix~\ref{app:ext_subgaussian} the presented bound is tighter than~\citep{bu2020information}, which is shown to be tighter than~\citep{asadi2018chaining} in this setting (see~\citep[Section IV-B]{bu2020information}).

\subsection{Choice of the metric}
\label{subapp:metric_choice}

The presented bounds in Sections~\ref{subsec:ege_standard_setting}, \ref{subsec:ege_randomized_subsample_setting}, and~\ref{subsec:ege_backward_channel} are valid for any metric $\rho$ under which the loss is Lipschitz. The Lipschitz property is a property of the hypothesis space $\cW$ (forward channel bounds) or the sample space $\cZ$ (backward channel bounds) and \emph{not} of the algorithm. That is, if two different algorithms operate on the same sample space and produce the same hypothesis space, then they can be characterized with the same metric.

The choice of the metric can be decisive for a tight analysis of the presented bounds, and there are times where a loss function can be Lipschitz under several metrics. For example, a bounded loss function represented as a norm is Lipschitz with respect to that norm and the discrete metric. Nonetheless, in many situations the metric of choice becomes apparent based on the loss function. For example, if we consider the forward channel bounds and samples of the type $z = (x,y)$ and the following two common supervised tasks:
\begin{itemize}
    \item Regression. If a norm is used as the loss function $\ell(w,z) = \lVert w - y \rVert$, then such a norm is also a good choice for a metric since by the reverse triangle inequality the loss is 1-Lipschitz under that metric: $\big \lvert \lVert w - y \rVert - \lVert w' - y \rVert \big \rvert \leq \lVert w - w' \rVert$ for all $w, w' \in \mathcal{W}$.
    \item Classification. If the 0-1 loss is used as the loss function $\ell(w,z) = \text{Ind}(w \neq y)$, then the discrete metric is a good choice since the loss is also 1-Lipschitz under this metric: $\lvert \text{Ind}(w \neq y) - \text{Ind}(w' \neq y) \big \rvert \leq \text{Ind}(w \neq w')$.
\end{itemize}
Similarly, for the backward channel bounds, it is known that the logistic loss, the softmax loss,\footnote{This result can be derived from~\citep[Proposition 3]{gao2018properties} and the $L_1 - L_2$ inequality.} the Hinge loss, and many distance-based losses like norms, the Huber, $\epsilon$-insensitive, and pinball losses, are Lipschitz under the $L_1$ norm metric $\rho(z,z') = \lvert z - z' \rvert$~\citep{steinwart2008support, gao2018properties}.

\end{color}
%%%%%%%%%%%%%%%%%%%%%%%%%%%%%%%%%%%%%%%%%%%%%%%%%%%%%%%%%%%%
\section{Generality of the results}
\label{app:generality}

The main text only presents total variation and relative entropy bounds for bounded losses. This is to clarify the (geometrical) relationship between the bounds based on the Wasserstein distance and those based on the relative entropy. That is, that the former recover the latter when the geometry of the hypothesis space is ignored.

Nonetheless, these bounds hold more generally for any metric under certain mild modifications. More concretely, the relative entropy bounds hold when the loss is also subgaussian\footnote{A random variable $X$ is said to be $\sigma$-subgaussian if $\log \bE[\exp \lambda (X - \bE[X])] \leq \frac{\lambda^2 \sigma^2}{2}$ for all $\lambda \in \bR$. Also, a function $f: \cX \to \bR$ is said to be $\sigma$-subgaussian under $P$ if $f(X)$ is $\sigma$-subgaussian and $X \sim P$.}.
Furthermore, at the cost of an extra factor of $d_{\rho}(\cW)$, the two kinds of bound still hold.

\subsection{Extension of the relative entropy bounds to subgaussian losses}
\label{app:ext_subgaussian}

Consider a Polish space $(\cX,\rho)$ and a probability distribution $P$ on $\cX$ with a finite first moment. Then, the Bobkov--Götze's theorem~\citep[Theorem~4.8]{van2014probability} says that the following statements are equivalent: 
\begin{itemize}
    \item $f$ is $\sigma$-subgaussian under $P$ \emph{for every} 1-Lipschitz function $f: \cX \to \bR$; and,
    \item $\bW(Q,P) \leq \sqrt{2\sigma^2 \KL{Q}{P}}$ for all $Q$ on $\cX$.
\end{itemize}

Therefore, the results from Corollaries~\ref{cor:itv_std}, \ref{cor:rstv_std}, \ref{cor:iwas_rs}, and~\ref{cor:rswas_rd} are valid in a more general setting, namely when the loss function $\ell$ is both $L$-Lipschitz and $\sigma$-subgaussian for all $z \in \cZ$. To realize this, note that if $X \sim P$ is $L$-Lipschitz and $\sigma$-subgaussian, then $X/L$ is $1$-Lipschitz and $(\sigma/L)$-subgaussian. Hence, if $|X-\bE[X]| \leq L \bW(Q,P)$, then it follows that $|X - \bE[X]| \leq L \sqrt{2(\sigma/L)^2 \KL{Q}{P}} = \sqrt{2 \sigma^2 \KL{Q}{P}}$. 

As an example, assume that the loss function $\ell$ is $L$-Lipschitz and $\sigma$-subgaussian under $P_W$ for all $z \in \cZ$ and that $\cW$ is Polish. Then,
\begin{equation*}
    \big|\overline{\textnormal{gen}}(W,S)\big| \leq \frac{L}{n} \smash{\sum_{i=1}^n \bE \big[ \bW(P_{W|Z_i},P_W) \big] \leq \frac{1}{n} \sum_{i=1}^n \sqrt{2 \sigma^2 I(W;Z_i)} \leq \sqrt{\frac{2\sigma^2 I(W;S)}{n}}}
\end{equation*}
is a corollary of Theorem~\ref{th:iwas_std} due to Bobkov--Götze's theorem.
As when the loss is bounded, this equation shows that Theorem~\ref{th:iwas_std} is tighter than~\citep[Proposition~2]{bu2020tightening} and~\citep[Theorem~1]{xu2017information} and exhibits a decaying factor of $1/\sqrt{n}$. Moreover, this result encompasses the case where the loss is bounded in $[a,b]$ since if a random variable is bounded in $[a,b]$ it is $(b-a)/2$-subgaussian. 

Note, however, that in this case the subgaussianity constant is different than the constant from, e.g. \citep{xu2017information}, where the loss $\ell$ was supposed to be $\nu$-subgaussian under $P_Z$ for all $w \in \cW$. Considering the bounds based on the backward channel (\S\ref{subsec:ege_backward_channel}), these constants are exactly the same, since assuming that the loss function $\ell$ is $L$-Lipschitz and $\nu$-subgaussian under $P_Z$ for all $w \in \cW$ and that $\cZ$ is Polish, means that
\begin{equation*}
    \big|\overline{\textnormal{gen}}(W,S)\big| \leq \frac{L}{n} \smash{\sum_{i=1}^n \bE \big[ \bW(P_{Z_i|W},P_{Z_i}) \big] \leq \frac{1}{n} \sum_{i=1}^n \sqrt{2 \nu^2 I(W;Z_i)} \leq \sqrt{\frac{2\nu^2 I(W;S)}{n}}},
\end{equation*}
which is always tighter than~\citep{xu2017information,bu2020tightening}. As mentioned above, when the loss is bounded the subgaussianity constant is the same under any distribution.

\subsection{Extension to the total variation bounds for any metric}
\label{app:ext_any_metric}

As per Remark~\ref{remark:was_tv}, the results based on the total variation also hold for any metric with the extra factor $d_{\rho}(\cW)$, where $d_{\rho}(\cW)$ is the diameter of the hypothesis space $\cW$ under the metric $\rho$. For instance, Corollary~\ref{cor:itv_std} results in 
\begin{equation*}
    \big|\overline{\textnormal{gen}}(W,S)\big| \leq \frac{L d_{\rho}(\cW)}{n} \sum_{i=1}^n \bE \big[ \textnormal{\texttt{TV}}(P_{W|Z_i},P_W) \big] \leq \frac{L d_{\rho}(\cW)}{n} \sum_{i=1}^n \bE \big[ \Psi\big( \KL{P_{W|Z_i}}{P_W}\big) \big].
\end{equation*}

Note that, since the results based on the total variation hold for any metric, all the derived results based on different information measures from~\S\ref{subsec:other_info_measures} hold too.

The extra term can be arbitrarily large as, for example, when the hypothesis is the weights of a neural network and metric $\rho$ is the $\ell_2$ norm $\lVert \cdot \rVert_2$.
Nonetheless, this term can still be small and relevant for practical settings.
For instance, consider again that the hypothesis is the weights of a neural network. However, consider now that the metric $\rho$ is the infinity norm $\lVert \cdot \rVert_{\infty}$ and that each weight is enforced to be smaller than some small constant $C$. Then, the diameter of the space $d_{\lVert \cdot \rVert_{\infty}}(\cW)$ is (at most) equal to $C$.

%%%%%%%%%%%%%%%%%%%%%%%%%%%%%%%%%%%%%%%%%%%%%%%%%%%%%%%%%%%%
\section{Proofs of the theorems from Section~\ref{sec:ege_bounds}}
\label{app:proofs_theorems}

\subsection{Proof of Theorem~\ref{th:iwas_std}}

Note that $\bE[\cL_{P_Z}(W)] = \int_{\cW \times \cZ} \ell(w,z) d(P_W \otimes P_Z)(w,z)$. If $W'$ is an independent copy of $W$ such that $P_{W',Z_i} = P_{W} \otimes P_Z$ for all $i \in [n]$, then
\begin{align*}
     \big| \overline{\textnormal{gen}}(W,S) \big| &= \Big|  \frac{1}{n} \sum_{i=1}^n \bE \big[ \ell (W',Z_i) - \ell (W,Z_i) \big] \Big| \\
     &= \Big|  \frac{1}{n} \sum_{i=1}^n \bE \Big[ \bE \big[\ell (W',Z_i) - \ell (W,Z_i) \mid Z_i \big] \Big] \Big|\\
     &\leq  \frac{L}{n} \sum_{i=1}^n \bE \big[ \bW(P_{W|Z_i},P_{W}) \big], 
\end{align*}
where the last inequality stems from the KR duality and the Lipschitzness of $\ell$ for all $z \in \cZ$. The absolute value is removed since $\rho$ is a metric.

\subsection{Proof of Theorem~\ref{th:rswas_std}}

Consider the quantity
\begin{equation*}
    \textnormal{gen}_j(w,s_j) \triangleq \cL_{P_Z}(w) - \cL_{s_j}(w),
\end{equation*}
where $\cL_{s_j} = \frac{1}{m} \sum_{i \in j} \ell(w,z_i)$. Then, note that $\bE[\textnormal{gen}(W,S)] = \bE[\textnormal{gen}_J(W,S_J)]$ if $\cL_s(w) = \bE[\cL_{s_J}(w)]$. This last equality follows since $J$ is uniformly distributed, there are $\binom{n}{m}$ possible subsets of size $m$ in $[n]$, and each sample $z_i$ belongs to $\binom{n-1}{m-1}$ of those subsets; hence 
\begin{align}
    \bE[\cL_{s_J}(w)] &= \frac{1}{\binom{n}{m}} \sum_{j \in \mathcal{J}} \frac{1}{m} \sum_{i \in j} \ell(w,z_i) = \frac{1}{n} \sum_{i=1}^n \ell(w,z_i) = \cL_s(w).
    \label{eq:exp_subset_risk_equal_risk}
\end{align}

Subsequently, the two bounds from the theorem are obtained after bounding the innermost expectation on the right hand side of the following equation:
\begin{align*}
    \big| \overline{\textnormal{gen}}(W,S) \big| =  \big| \bE[\textnormal{gen}_J(W,S_J) \big| \leq \bE\big[ \big|\bE[\textnormal{gen}_J(W,S_J) \mid J,S_{J^c},R ] \big| \big],
\end{align*}
where the last step follows from Jensen's inequality.

\begin{enumerate}[label=(\alph*)]
    \item Consider, until stated otherwise, that all random objects and expectations are conditioned to a fixed $j, s_{j^c}$, and $r$. Then note that $\bE[\cL_{P_Z}(W)] = \bE[\cL_{S_j}(W')]$, where $W'$ is an independent copy of $W$ such that $P_{W',S_j|s_{j^c},r} = P_{W|s_{j^c},r} \otimes P_{S_j}$. Therefore
    \begin{equation*}
        \big| \bE[\textnormal{gen}_j(W,S_j)] \big| \leq \big| \bE[\cL_{S_j}(W') - \cL_{S_j}(W)] \big| \leq L \bE[\bW(P_{W|S_j,s_{j^c},r},P_{W|s_{j^c},r})],
    \end{equation*}
    where the last inequality stems from the KR duality. Finally, taking the expectation with respect to $P_{J,S_{J^c},R}$ in both sides of the equation completes the proof.
    
    \item Consider again, until stated otherwise, that all random objects and expectations are conditioned to a fixed $j$, $s_{j^c}$, and $r$. Then, similarly to the proof of Theorem~\ref{th:iwas_std}
    \begin{align*}
        \big| \bE[\textnormal{gen}_j(W,S_j)] \big| &\leq \Big| \frac{1}{m} \sum_{i \in j} \bE[ \ell(W',Z_i) - \ell(W,Z_i)] \Big| \\
        &\leq \frac{L}{m} \sum_{i \in j} \bE[ \bW(P_{W|Z_i,s_{j^c},r},P_{W|s_{j^c},r})] ,
    \end{align*}
    where the last inequality stems from the KR duality and $W'$ is an independent copy of $W$ such that $P_{W',Z_i|s_{j^c},r} = P_{W'|s_{j^c},r} \otimes P_{Z_i}$ for all $i \in j$. Finally, taking the expectation with respect to $P_{J,S_{J^c},R}$ in both sides of the equation completes the proof.
\end{enumerate}

\subsection{Proof of Equation~\ref{eq:was_rd}}

Consider an independent copy $W'$ of $W$ such that $P_{\smash{W',\tilde{S}, U}} = P_{\smash{W', \tilde{S}}} \otimes P_{U}$. Then, $\bE[\cL_S(W')] = \bE[\cL_{\bar{S}}(W')]$, and therefore
\begin{equation*}
    \widehat{\textnormal{gen}}(w,\tilde{s},u) = \cL_{\bar{s}}(w) - \cL_s(w) - \bE[\cL_{\bar{S}}(W') - \cL_S(W')].
\end{equation*}

Then, re-arranging the expectation of the above expression and using the fact that $|x+y| \leq |x| + |y|$ results in
\begin{align*}
    \big| \bE[\widehat{\textnormal{gen}}(W,\tilde{S},U)]\big| &\leq \big| \bE[\cL_{\bar{S}}(W') - \cL_{\bar{S}}(W)] \big| + \big| \bE[\cL_S(W') - \cL_S(W)] \big| \\
    &= \big| \bE \big[ \bE[\cL_{\bar{S}}(W') - \cL_{\bar{S}}(W) \mid \tilde{S},U] \big] \big| + \big| \bE \big[ \bE[\cL_S(W') - \cL_S(W) \mid \tilde{S},U] \big] \big|  \\
    &\leq 2L \bE[\bW(P_{\smash{W|\tilde{S}, U}}, P_{\smash{W| \tilde{S}}})],
\end{align*}
where the last step stems from the KR duality.
Finally, noting that $\bE[\widehat{\textnormal{gen}}(W,\tilde{S},U)] = \overline{\textnormal{gen}}(W,S)$ completes the proof.

\subsection{Proof of Theorem~\ref{th:iwas_rs}}

Similarly to the proof of~\eqref{eq:was_std}, consider an independent copy $W'$ of $W$ such that $P_{\smash{W', \tilde{S}_i, U_i}} = P_{\smash{W',\tilde{S}_i}} \otimes P_{U_i}$.
Then $\bE[\ell(W',\bar{Z}_i)] = \bE[\ell(W',Z_i)]$, where $Z_i = \tilde{Z}_{i+U_i n}$, $\bar{Z}_i = \tilde{Z}_{i+(1-U_i)n}$, and $\tilde{S}_i = (Z_i, \bar{Z}_i)$. Therefore,
\begin{equation*}
    \widehat{\textnormal{gen}}(w,\tilde{s},u) = \frac{1}{n} \sum_{i=1}^n \Big( \ell(w,\bar{z}_i) - \ell(w,z_i) - \bE[\ell(W',\bar{Z}_i) - \ell(W',Z_i)] \Big).
\end{equation*}

Then, re-arranging the expectation of the above expression and using the fact that $\big|\sum_{i=1}^n x_i \big| \leq \sum_{i=1}^n |x_i|$ results in
\begin{align*}
    \big| \widehat{\textnormal{gen}}&(W,\tilde{S}, U)\big| \leq \frac{1}{n} \sum_{i=1}^n \Big( \big| \bE[\ell(W',\bar{Z}_i) -  \ell(W,\bar{Z}_i)] \big| + \big| \bE[ \ell(W',Z_i) - \ell(W,Z_i)] \big| \Big) \\
    &= \frac{1}{n} \sum_{i=1}^n \Big( \big| \bE \big[ \bE[\ell(W',\bar{Z}_i) -  \ell(W,\bar{Z}_i) \mid \tilde{S}_i, U_i]\big] \big| + \big| \bE \big[ \bE[ \ell(W',Z_i) - \ell(W,Z_i) \mid \tilde{S}_i,U_i] \big] \big| \Big) \\
    &\leq \frac{2 L}{n} \sum_{i=1}^n \bE \big[ \bW(P_{\smash{W| \tilde{S}_i, U_i}}, P_{\smash{W| \tilde{S}_i}})\big],
\end{align*}
where the last step stems from the KR duality.
Finally, noting that $\bE[\widehat{\textnormal{gen}}(W,\tilde{S},U)] = \overline{\textnormal{gen}}(W,S)$ completes the proof.

\subsection{Proof of Theorem~\ref{th:rswas_rs}}

Similarly to the proof of Theorem~\ref{th:rswas_std}, consider the quantity 
\begin{equation*}
    \widehat{\textnormal{gen}}_j(w,\tilde{s}_j,u_j) \triangleq \cL_{\bar{s}_j}(w) - \cL_{s_j}(w),
\end{equation*}
where $\tilde{s}_j = (s_j,\bar{s}_j)$ and $s_j$ and $\bar{s}_j$ are the subsets of $s$ and $\bar{s}$, respectively, indexed by $j$.
Then, note that $\bE[ \widehat{\textnormal{gen}}(W, \tilde{S}, U) ] = \bE[ \widehat{\textnormal{gen}}_J(W, \tilde{S}_J, U_J) ]$ since, as shown in~\eqref{eq:exp_subset_risk_equal_risk}, the equalities $\bE[\cL_{s_j}(w)] = \cL_s(w)$ and $\bE[\cL_{\bar{s}_j}(w)] = \cL_{\bar{s}}(w)$ hold.

Then, the two bounds from the theorem are obtained after bounding the innermost expectation on the right hand side of the following equation:
\begin{equation*}
    \big| \bE[\widehat{\textnormal{gen}}(W,\tilde{S},U) ]\big| = \big| \bE[\widehat{\textnormal{gen}}_J(W,\tilde{S}_J,U_J) ]\big| \leq \bE \big[\big| \bE[\widehat{\textnormal{gen}}_J(W,\tilde{S}_J,U_J) \mid J,\tilde{S},U_{J^c},R]\big| \big],
\end{equation*}
where the last step follows from Jensen's inequality.

\begin{enumerate}[label=(\alph*)]
    \item Consider, until stated otherwise, that all random objects and expectations are conditioned to a fixed $j$, $\tilde{s}$, $u_{j^c}$, and $r$.
    Then note that $\bE[\cL_{\bar{s}_j}(W')] = \bE[\cL_{s_j}(W')]$, where $W'$ is an independent copy of $W$ such that $P_{W',U_j|\tilde{s},u_{j^c},r} = P_{W'|\tilde{s},u_{j^c},r} \otimes P_{U_j}$. Therefore,
    \begin{equation*}
        \widehat{\textnormal{gen}}_j(w,\tilde{s}_j,u_j) = \cL_{\bar{s}_j}(w) - \cL_{s_j}(w) - \bE[\cL_{\bar{s}_j}(W') - \cL_{s_j}(W')].
    \end{equation*}
    
    Then, re-arranging the expectation of the above expression and using the fact that $|x+y| \leq |x| + |y|$ results in
    \begin{align*}
        \MoveEqLeft
        \big| \widehat{\textnormal{gen}}_j(W,\tilde{s}_j,U_j) \big| \leq \big| \bE[\cL_{\bar{s}_j}(W') - \cL_{\bar{s}_j}(W)] \big| + \big| \bE[\cL_{s_j}(W') - \cL_{s_j}(W)] \big| \\
        &= \big| \bE \big[\bE[\cL_{\bar{s}_j}(W') - \cL_{\bar{s}_j}(W) \mid U_j] \big] \big| + \big| \bE \big[\bE[\cL_{s_j}(W') - \cL_{s_j}(W) \mid U_j] \big] \big| \\
        &\leq 2L \bE[\bW(P_{W|U_j,\tilde{s},u_{j^c},r},P_{W|\tilde{s},u_{j^c},r})],
    \end{align*}
    where the last inequality stems from the KR duality.
    Finally, taking the expectation with respect to $P_{\smash{J,\tilde{S},U_{J^c},R}}$ in both sides of the equation completes the proof.
    
    \item Consider again, until stated otherwise, that all random objects and expectations are conditioned to a a fixed $j$, $\tilde{s}$, $u_{j^c}$, and $r$. Then, similarly to the proof of Theorem~\ref{th:iwas_rs}
    \begin{align*}
        \MoveEqLeft[1]
        \big| \widehat{\textnormal{gen}_j} (W,\tilde{s}, U_j)\big| \leq \frac{1}{m} \sum_{i \in j} \Big( \big| \bE[\ell(W',\bar{Z}_i) -  \ell(W,\bar{Z}_i)] \big| + \big| \bE[ \ell(W',Z_i) - \ell(W,Z_i)] \big| \Big) \\
        &= \frac{1}{m} \sum_{i \in j} \Big( \big| \bE \big[ \bE[\ell(W',\bar{Z}_i) -  \ell(W,\bar{Z}_i) \mid U_i]\big] \big| + \big| \bE \big[ \bE[ \ell(W',Z_i) - \ell(W,Z_i) \mid U_i] \big] \big| \Big) \\
        &\leq \frac{2 L}{m} \sum_{i \in j} \bE \big[ \bW(P_{W|U_i,\tilde{s}, u_{j^c},r},P_{W|\tilde{s}, u_{j^c},r})\big],
    \end{align*}
    where the last inequality stems from the KR duality.
    Finally, taking the expectation with respect to $P_{\smash{J,\tilde{S},U_{J^c},R}}$ in both sides of the equation completes the proof.
\end{enumerate}

%%%%%%%%%%%%%%%%%%%%%%%%%%%%%%%%%%%%%%%%%%%%%%%%%%%%%%%%%%%%
\section{Comparison of the bounds}
\label{app:comp_bounds}

In this section of the appendix, the full-dataset, single-letter, and random-subset bounds are compared. First, remember that the bounds based on the Wasserstein distance are tighter than the respective ones based on the relative entropy, and thus, those based on the mutual information under the conditions specified in Appendix~\ref{app:generality}.
The comparison in terms of the Wasserstein distance and in terms of the mutual information are found in~\S\ref{app:was_comp} and in~\ref{app:mi_comp}, respectively.

When the bounds are compared in terms of the mutual information and in terms of the Wasserstein distance, the individual-sample bounds are shown to be tighter than the full-dataset bounds. Therefore, the idea that the individual forward channels $P_{W|Z_i}$, which are smoothed versions of the full-dataset forward channel $P_{W|S}$, are closer to the hypothesis marginal distribution $P_W$ is backed by the theory in both cases.

Then, both cases also agree in that individual-sample bounds are tighter than random-subset bounds. Even though the proof for the Wasserstein distance based bounds also follows from a smoothing argument, a better insight is gained through the proof for the mutual information. These are based on the fact that $I(W;Z_i) \leq I(W;Z_i|S_\mathcal{A})$, where $\mathcal{A}$ is a subset of $[n]$ where $i$ is not included. This inequality holds since the knowledge of the samples $S_{\mathcal{A}}$ provides information about $Z_i$ through $W$. More precisely,
\begin{align*}
    I(W;Z_i) &\stack{a}{\leq} I(W;Z_i) + \smash{\overbrace{I(S_{\mathcal{A}};Z_i|W)}^{\textnormal{extra information}}} \\
    &\stack{b}{=} I(W,S_{\mathcal{A}}; Z_i) \\
    &\stack{c}{=} I(W;Z_i|S_{\mathcal{A}}) + I(Z_i;S_{\mathcal{A}}) \\
    &\stack{d}{=} I(W;Z_i|S_{\mathcal{A}}),
\end{align*}
where $(a)$ is due to the non-negativity of the mutual information, $(b)$ and $(c)$ follow from the chain rule, and $(d)$ stems from the fact that $Z_i$ and $S_{\mathcal{A}}$ are independent.

Finally, the comparison of the random-subset and full-dataset bounds behaves differently when the Wasserstein distance or the mutual information is employed. The analysis using the Wasserstein distance suggests that the random-subset bounds are sharper than the full-dataset bounds with an extra factor of two, namely
\begin{equation*}
    \bE[\bW(P_{W|S},P_{W|S_{J^c}})] \leq 2\bE[\bW(P_{W|S},P_{W})].
\end{equation*}
On the other hand, the analysis using the mutual information indicates that the random-subset bounds are looser than the full dataset bounds.

This discrepancy might help understand the reason why, in practice, the random-subset bounds from~\citep{negrea2019information,haghifam2020sharpened,rodriguez2020random} with the expectation of the square root of the relative entropy result in tighter characterizations of the generalization error than the full-dataset bounds from~\citep{xu2017information,steinke2020reasoning} using the mutual information. To be precise, this suggests that the loss of performance of a further application of the Jensen's inequality to incorporate the expectations inside the square root of the bounds from~\citep{negrea2019information,haghifam2020sharpened,rodriguez2020random} is big. In other words, that $\KL{P_{W|S}}{P_{W|S_J}}$ has high variance in practical settings.

\rev{A summary of these comparisons is shown in Figure~\ref{fig:comparison_all_bounds}. Note that the mutual information bounds are written after Jensen's inequality is applied in order to allow comparisons between them. However, the relationships between the Wasserstein and mutual information based bounds still hold when the expectations of the relative entropy are outside of the square root.}

\begin{figure}[h!]
    \begin{minipage}{\textwidth}
    \centering
    \begin{minipage}{0.47\textwidth}
    \includegraphics[scale=0.77]{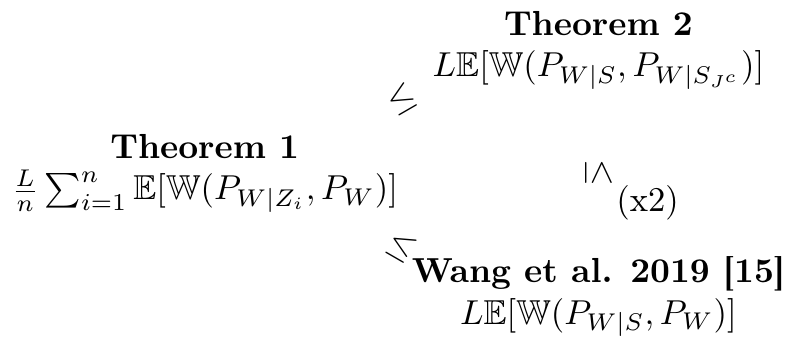}
    \end{minipage}
    \hfill
    \begin{minipage}{0.52\textwidth}
  \includegraphics[scale=0.77]{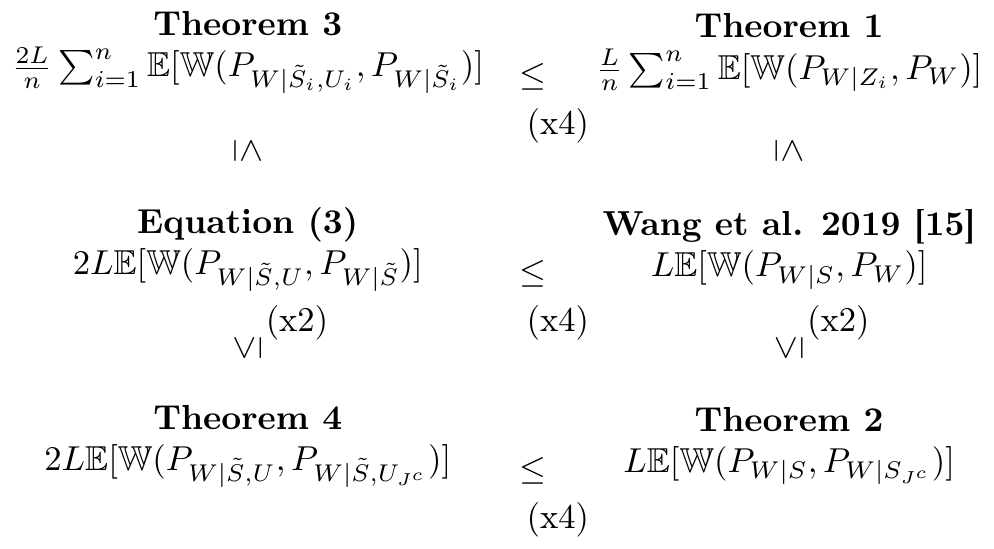}
    \end{minipage}
    \end{minipage}
    \vspace*{2ex}
    
       \begin{minipage}{\textwidth}
    \centering
    \begin{minipage}{0.52\textwidth}
 \includegraphics[scale=0.77]{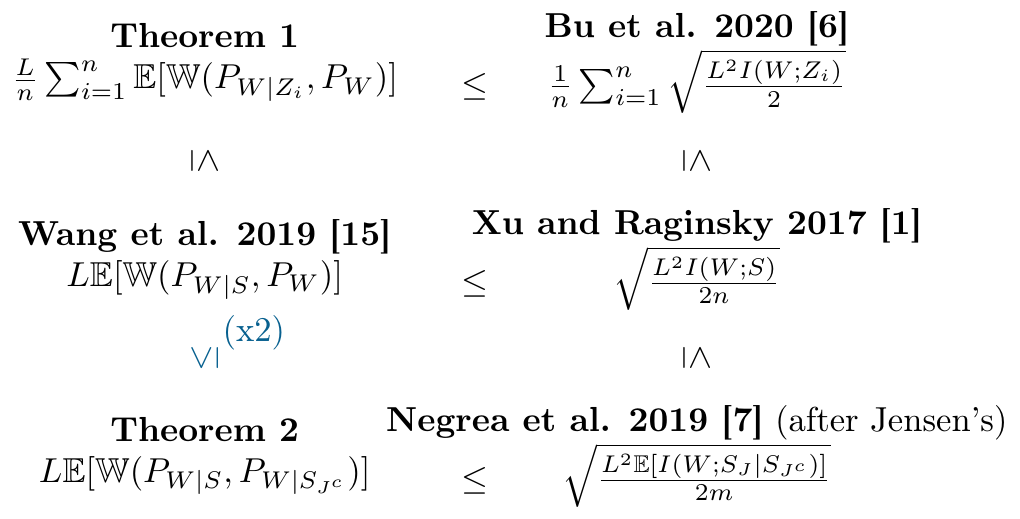}
    \end{minipage}
    \begin{minipage}{0.47\textwidth}
 \includegraphics[scale=0.77]{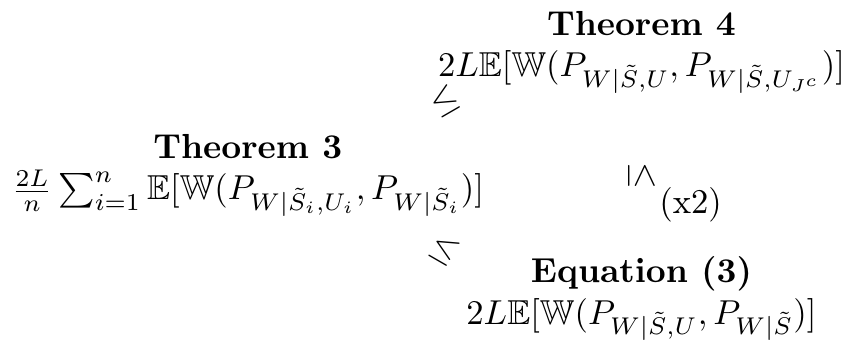}
    \end{minipage}
    \end{minipage}

       \vspace*{5ex}
     
    \begin{minipage}{\textwidth}
       \centering
  \includegraphics[scale=0.77]{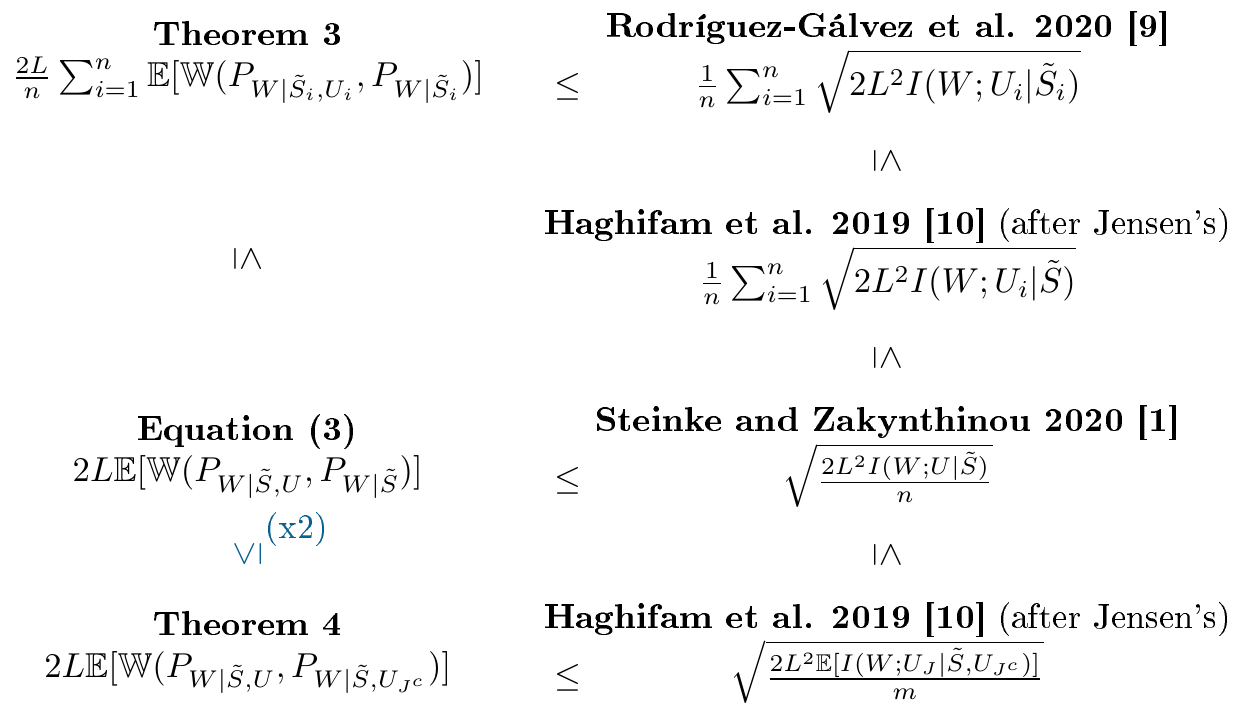} 
    \end{minipage}

       \vspace*{5ex}   
    \begin{minipage}{0.93\textwidth}
       \centering
     \includegraphics[scale=0.77]{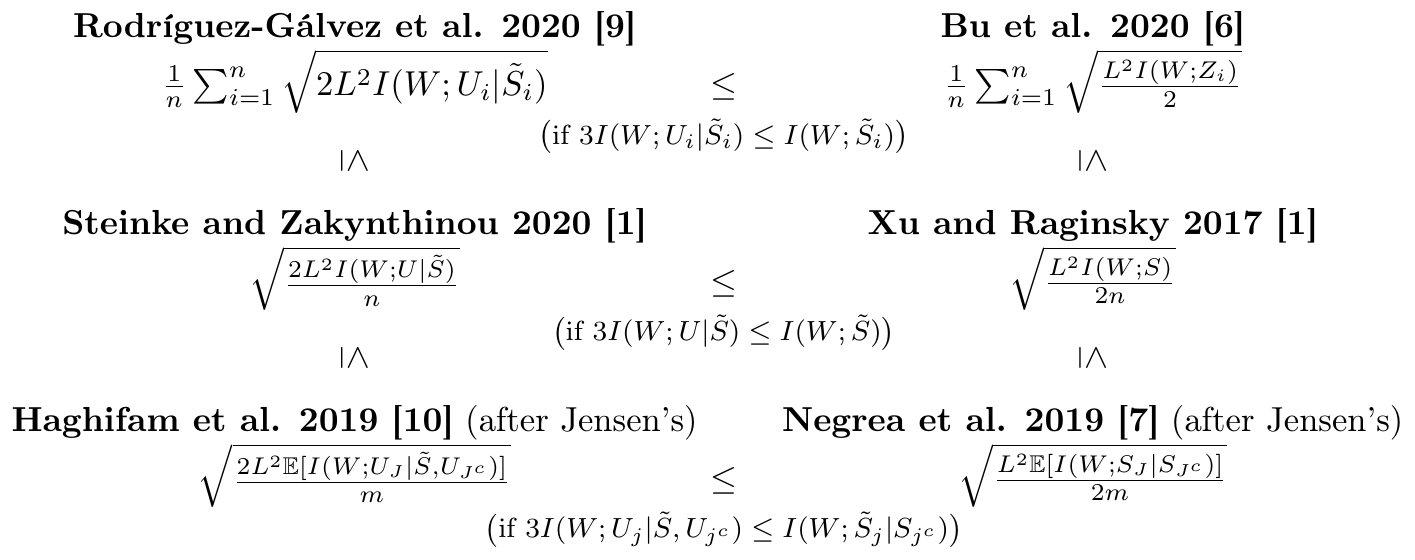} 
    \end{minipage}

    \caption{\rev{Summary of the comparison between the current and presented bounds based on the Wasserstein distance and the mutual information.}}
    \label{fig:comparison_all_bounds}
\end{figure}

\subsection{Comparison of the Wasserstein distance based bounds}
\label{app:was_comp}

\subsubsection{Standard setting}

\begin{proposition}
\label{prop:iwas_less_was_std}
Consider the standard setting. Then,
\begin{equation*}
    \frac{1}{n} \sum_{i=1}^n \bE \big[ \bW(P_{W|Z_i},P_W) \big] \leq  \bE \big[ \bW(P_{W|S},P_W) \big].
\end{equation*}
\end{proposition}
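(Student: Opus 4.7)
The plan is to reduce the claim to the per-index inequality
\[
\bE\bigl[\bW(P_{W|Z_i},P_W)\bigr] \leq \bE\bigl[\bW(P_{W|S},P_W)\bigr] \qquad \text{for every } i \in [n],
\]
after which averaging over $i$ gives the proposition. The key structural fact is that $Z_i$ is independent of the other samples $S_{-i} \triangleq S \setminus Z_i$, so for any fixed $z_i$ we have
\[
P_{W|Z_i=z_i} \;=\; \int_{\cZ^{n-1}} P_{W|Z_i=z_i,\,S_{-i}=s_{-i}}\, dP_Z^{\otimes(n-1)}(s_{-i}),
\]
i.e.\ $P_{W|Z_i}$ is a mixture (a ``smoothing'') of the full forward channels $P_{W|S}$.

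First I would invoke the KR duality (Lemma~\ref{lemma:kr_duality}) to write
\[
\bW(P_{W|z_i},P_W) \;=\; \sup_{f \in 1\textnormal{-Lip}(\rho)} \Bigl\{ \int f\,dP_{W|z_i} - \int f\,dP_W \Bigr\}.
\]
Using the mixture representation, the integrand equals $\bE_{S_{-i}}\bigl[\int f\,dP_{W|Z_i=z_i,S_{-i}} - \int f\,dP_W\bigr]$. Then I would pull the supremum inside the expectation via the standard $\sup$-of-average $\leq$ average-of-$\sup$ bound and recognize the inner supremum as $\bW(P_{W|Z_i=z_i,S_{-i}},P_W)$ through another application of KR duality. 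This yields
\[
\bW(P_{W|z_i},P_W) \;\leq\; \bE_{S_{-i}}\bigl[\bW(P_{W|S},P_W)\,\bigm|\, Z_i=z_i\bigr],
\]
and taking expectation over $Z_i$ gives the per-index inequality by the tower property. (An equivalent route uses the convexity of $P \mapsto \bW(P,Q)$ directly: since $P_{W|z_i}$ is a mixture of $\{P_{W|z_i,s_{-i}}\}$, one gets the same bound without invoking KR duality twice.)

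The main obstacle, if any, is justifying the exchange of the supremum and the expectation and ensuring the mixture representation holds measurably; both are routine on Polish spaces but deserve to be stated. Once the per-index bound is established, summing over $i$ and dividing by $n$ concludes the proof.
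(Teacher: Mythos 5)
Your proposal is correct and follows essentially the same route as the paper: reduce to the per-index inequality $\bE[\bW(P_{W|Z_i},P_W)]\le\bE[\bW(P_{W|S},P_W)]$, invoke KR duality, use the mixture (smoothing) identity $P_{W|Z_i}=\bE[P_{W|S}\mid Z_i]$ together with Fubini--Tonelli, and pass the supremum under the integral via $\sup_g\bE[g(X)]\le\bE[\sup_g g(X)]$. The paper writes the chain of inequalities starting from the full-dataset side whereas you start from the single-sample side, and you also note the equivalent formulation via convexity of $P\mapsto\bW(P,Q)$, but these are presentational variants of the same argument rather than a different proof.
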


\begin{proof}
The proposition follows by noting that, for all $i \in [n]$,
\begin{equation}
    \bE \big[ \bW(P_{W|Z_i},P_W) \big] \leq \bE \big[\bW(P_{W|S},P_W) \big],
    \label{eq:lemma_iwas_smaller_was_1}
\end{equation}
which is a stronger statement than the original.
More precisely, it can be shown that, for all $i \in [n]$,
\begin{equation*}
    \bE \bigg[ \sup_{f \in 1\textnormal{-Lip}(\rho)} \Big \lbrace \bE[f(W) \mid Z_i] - \bE[f(W)] \Big \rbrace \bigg] \leq \bE \bigg[ \sup_{f \in 1\textnormal{-Lip}(\rho)} \Big \lbrace \bE[f(W) \mid S] - \bE[f(W)] \Big \rbrace \bigg],
\end{equation*}
which is equivalent to~\eqref{eq:lemma_iwas_smaller_was_1} due to the KR duality.

After writing $\bE \big[ \sup_{f \in 1\textnormal{-Lip}(\rho)} \big \lbrace \bE[f(W) \mid S] - \bE[f(W)] \big \rbrace \big]$ in integral form, the result is shown as follows:
\begin{align*}
    \int_{\cZ^n} &\sup_{f \in 1\textnormal{-Lip}(\rho)} \bigg\lbrace \int_{\cW} f(w) dP_{W|s}(w) - \int_{\cW} f(w) dP_W(w) \bigg\rbrace dP_S(s) \\
    &\stack{a}{\geq} \int_{\cZ} \sup_{f \in 1\textnormal{-Lip}(\rho)} \bigg\lbrace \int_{\cZ^{n-1}} \bigg( \int_{\cW} f(w) dP_{W|s}(w) - \int_{\cW} f(w) dP_W(w) \bigg) P_Z^{\otimes n-1}(s^{-i})  \bigg\rbrace dP_Z(z_i) \\ 
    &\stack{b}{=} \int_{\cZ} \sup_{f \in 1\textnormal{-Lip}(\rho)} \bigg\lbrace \int_{\cW} f(w) dP_{W|z_i}(w) - \int_{\cW} f(w) dP_W(w) \bigg\rbrace dP_Z(z_i), 
\end{align*}
where $s^{-i} = (z_1, \ldots, z_{i-1}, z_{i+1}, \ldots, z_n)$, $(a)$ is due to the fact that $\sup_g \bE[g(X)] \leq \bE[\sup_g g(X)]$, and $(b)$ follows from Fubini--Tonelli's theorem and the fact that $P_{W|Z_i} = \bE[P_{W|S} \mid Z_i]$.

Finally, noting that $(b)$ is the integral form of $\bE \big[ \sup_{f \in 1\textnormal{-Lip}(\rho)} \big \lbrace \bE[f(W) \mid Z_i] - \bE[f(W)] \big \rbrace \big]$ concludes the proof.
\end{proof}
\newpage

\begin{proposition}
\label{prop:iwas_less_rswas_std}
Consider the standard setting. Consider also a uniformly random subset of indices $J \subseteq [n]$ of size $m$. Then, 
\begin{equation*}
    \frac{1}{n} \sum_{i=1}^n \bE \big[ \bW(P_{W|Z_i},P_W) \big] \leq  \bE \big[ \bW(P_{W|S},P_{W|S_{J^c}}) \big].
\end{equation*}
\end{proposition}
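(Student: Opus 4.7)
The plan is to apply the smoothing technique from the proof of Proposition~\ref{prop:iwas_less_was_std} twice, in concatenation: first over the ``missing'' samples $S_j$ that convert $P_{W \mid Z_i, S_{j^c}}$ into $P_{W\mid S}$, and second over the ``background'' samples $S_{j^c}$ that convert $P_{W\mid Z_i}$ into $P_{W\mid Z_i, S_{j^c}}$. Then I would average over all subsets $j \subseteq [n]$ with $\lvert j\rvert = m$ and rewrite the combinatorial normalization using $\binom{n-1}{m-1}/\binom{n}{m} = m/n$.

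More concretely, I would first fix a subset $j$ with $|j|=m$ and condition on $S_{j^c}$. Since the samples are i.i.d., conditional on $S_{j^c}$ the remaining samples $S_j=(Z_k)_{k\in j}$ still form an i.i.d.\ dataset of size $m$ drawn from $P_Z$, with ``prior'' $P_{W\mid S_{j^c}}$ and ``forward channel'' $P_{W\mid S}$. Hence Proposition~\ref{prop:iwas_less_was_std}, applied conditionally on $S_{j^c}$ and then averaged, gives
\begin{equation*}
\frac{1}{m}\sum_{i\in j}\bE\big[\bW(P_{W\mid Z_i, S_{j^c}},\,P_{W\mid S_{j^c}})\big] \;\leq\; \bE\big[\bW(P_{W\mid S},\,P_{W\mid S_{j^c}})\big].
\end{equation*}
For the second step, I would use the KR duality together with $P_{W\mid Z_i} = \bE[P_{W\mid Z_i, S_{j^c}}\mid Z_i]$ and $P_W = \bE[P_{W\mid S_{j^c}}]$, which hold because $Z_i$ and $S_{j^c}$ are independent for $i\in j$. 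Applying $\sup_f \int \leq \int \sup_f$ on the resulting integrand (exactly as in the proof of Proposition~\ref{prop:iwas_less_was_std}, but with $S_{j^c}$ playing the smoothing role) yields
\begin{equation*}
\bE\big[\bW(P_{W\mid Z_i},\,P_W)\big] \;\leq\; \bE\big[\bW(P_{W\mid Z_i, S_{j^c}},\,P_{W\mid S_{j^c}})\big].
\end{equation*}

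To finish, I would chain the two inequalities and sum over all $\binom{n}{m}$ subsets $j$. Each index $i\in[n]$ lies in exactly $\binom{n-1}{m-1}$ subsets of size $m$, so the double sum on the left produces the factor $\binom{n-1}{m-1}/m = \binom{n}{m}/n$, while on the right one recovers $\binom{n}{m}\,\bE[\bW(P_{W\mid S},P_{W\mid S_{J^c}})]$ by the definition of the uniform $J$. Dividing by $\binom{n}{m}$ gives the claimed $1/n$ normalization.

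The main obstacle is carrying out the second smoothing step cleanly: one has to write out the KR dual, then invoke independence of $Z_i$ and $S_{j^c}$ and Fubini--Tonelli to pull the $S_{j^c}$-integral inside the supremum, and finally apply $\sup_f \int \leq \int \sup_f$. Everything else is bookkeeping; the conditional application of Proposition~\ref{prop:iwas_less_was_std} and the combinatorial averaging are routine once the two-layer smoothing structure is identified.
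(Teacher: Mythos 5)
Your proof is correct, and it reaches the same conclusion by a modular two-step decomposition rather than the paper's one-shot argument. The paper proves, for each $i\in j$, the per-$(i,j)$ inequality $\bE[\bW(P_{W\mid Z_i},P_W)] \leq \bE[\bW(P_{W\mid S},P_{W\mid S_{j^c}})]$ by a \emph{single} application of $\sup\int\leq\int\sup$ in the KR dual: it integrates out all of $s^{-i}$ at once, using $P_{W\mid z_i}=\bE[P_{W\mid S}\mid z_i]$ and $P_W=\bE[P_{W\mid S_{j^c}}\mid z_i]$ together. You instead interpose the intermediate pair $(P_{W\mid Z_i,S_{j^c}},P_{W\mid S_{j^c}})$: the first stage is exactly Proposition~\ref{prop:iwas_less_was_std} applied conditionally on $S_{j^c}$ (where $S_j$ is an i.i.d.\ sample of size $m$ and $P_{W\mid S_{j^c}}$ plays the role of the marginal), and the second stage is a fresh KR-dual smoothing over $S_{j^c}$, legitimate because $Z_i\indep S_{j^c}$ for $i\in j$. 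This buys you two things: it reuses the already-proved proposition as a black box instead of repeating the argument inside a larger integral, and your final averaging step (first over $i\in j$, then over $j$, with $\binom{n-1}{m-1}/(m\binom{n}{m})=1/n$) is cleaner than the paper's, which relies on an informal ``order the $\bE[\bW(P_{W\mid Z_i},P_W)]$ decreasingly and match'' argument to pass from the per-$(i,j)$ statement to the averaged one. The cost is that you must verify the conditional applicability of Proposition~\ref{prop:iwas_less_was_std} (i.i.d.-ness of $S_j$ given $S_{j^c}$ and the tower identity $P_{W\mid S_{j^c}}=\bE[P_{W\mid S}\mid S_{j^c}]$), but both are immediate. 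At bottom both proofs rest on the same two ingredients --- the KR dual form plus $\sup\int\leq\int\sup$, and tower-property identities for the conditional distributions --- so the difference is one of organization rather than substance.
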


\begin{proof}
The proof of this proposition follows closely that of Proposition~\ref{prop:iwas_less_was_std}.
First note that the statement may be written as
\begin{equation*}
    \frac{1}{n} \sum_{i=1}^n \bE \big[ \bW(P_{W|Z_i},P_W) \big] \leq \frac{1}{\binom{n}{m}} \sum_{j \in \mathcal{J}} \bE \big[\bW(P_{W|S},P_{W|S_{j^c}}) \big],
\end{equation*}
where the expectation with respect to $P_J$ has been written explicitly.
Then, this result follows by noting that, for all $i \in [n]$ and all $j \subseteq [n]$ such that $i \in j$
\begin{equation}
    \bE \big[ \bW(P_{W|Z_i},P_W) \big] \leq \bE \big[\bW(P_{W|S},P_{W|S_{j^c}}) \big],
    \label{eq:lemma_iwas_smaller_rswas_2}
\end{equation}
which is a stronger statement than the original.
This is stronger since one can, without loss of generality, consider the samples $Z_i$ ordered so that the sequence $\lbrace \bE[\bW(P_{W|Z_i},P_W)] \rbrace_{i \in [n]}$ is decreasing.
Then, $\bE[\bW(P_{W|Z_1},P_W)]$ is smaller than $\bE[\bW(P_{W|S},P_{W|S_{j^c}})]$ for the $\binom{n-1}{m-1}$ sets $j$ in which sample $1$ appears, $\bE[\bW(P_{W|Z_2},P_W)]$ is smaller than $\bE[\bW(P_{W|S},P_{W|S_{j^c}})]$ for the sets $j$ in which sample $2$ appears and sample $1$ does not, and so on.

More precisely, it can be shown that, for all $i \in [n]$ and all $j \subseteq [n]$ such that $i \in j$,
\begin{equation*}
    \bE \bigg[ \sup_{f \in 1\textnormal{-Lip}(\rho)} \Big \lbrace \bE[f(W) \mid Z_i] - \bE[f(W)] \Big \rbrace \bigg] \leq \bE \bigg[ \sup_{f \in 1\textnormal{-Lip}(\rho)} \Big \lbrace \bE[f(W) \mid S] - \bE[f(W) \mid S_{j^c}] \Big \rbrace \bigg],
\end{equation*}
which is equivalent to~\eqref{eq:lemma_iwas_smaller_rswas_2} due to the KR duality.

After writing $\bE \big[ \sup_{f \in 1\textnormal{-Lip}(\rho)} \big \lbrace \bE[f(W) \mid S] - \bE[f(W) \mid S_{j^c}] \big \rbrace \big]$ in integral form, the result is shown as follows:
\begin{align*}
    \MoveEqLeft[.5]
    \int_{\cZ^n} \sup_{f \in 1\textnormal{-Lip}(\rho)} \bigg\lbrace \int_{\cW} f(w) dP_{W|s}(w) - \int_{\cW} f(w) dP_{W|s_{j^c}}(w) \bigg\rbrace dP_S(s) \\
    &\stack{a}{\geq} \int_{\cZ} \sup_{f \in 1\textnormal{-Lip}(\rho)} \bigg\lbrace \int_{\cZ^{n-1}} \bigg( \int_{\cW} f(w) dP_{W|s}(w) - \int_{\cW} f(w) dP_{W|s_{j^c}}(w) \bigg) P_Z^{\otimes n-1}(s^{-i})  \bigg\rbrace dP_Z(z_i) \\
    &\stack{b}{=} \int_{\cZ} \sup_{f \in 1\textnormal{-Lip}(\rho)} \bigg\lbrace \int_{\cW} f(w) dP_{W|z_i}(w) - \int_{\cW} f(w) dP_W(w) \bigg\rbrace dP_Z(z_i), 
\end{align*}
where $(a)$ stems from the fact that $\sup_g \bE[g(X)] \leq \bE[\sup_g g(X)]$, and $(b)$ follows from Fubini--Tonelli's theorem and the fact that $P_{W|Z_i} = \bE[P_{W|S} \mid Z_i]$ and $P_W = \bE[P_{W|S_{j^c}} \mid Z_i]$, since $i \in j$ and therefore $i \not \in j^c$.

Finally, noting that $(b)$ is the integral form of $\bE \big[ \sup_{f \in 1\textnormal{-Lip}(\rho)} \big \lbrace \bE[f(W) \mid Z_i] - \bE[f(W)] \big \rbrace \big]$ concludes the proof.
\end{proof}

\begin{proposition}
\label{prop:rswas_less_was_std}
Consider the standard setting. Consider also a uniformly random subset of indices $J \subseteq [n]$ of size $m$. Then, 
\begin{equation*}
    \bE \big[ \bW(P_{W|S},P_{W|S_{J^c}}) \big] \leq  2 \bE \big[ \bW(P_{W|S},P_{W}) \big].
\end{equation*}
\end{proposition}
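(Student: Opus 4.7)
The natural attack is the triangle inequality for $\bW$, followed by a smoothing argument analogous to the one used in Proposition~\ref{prop:iwas_less_was_std}. Since the Wasserstein distance of order 1 is a metric on $\mathcal{P}_1(\cW)$ (this follows from the KR duality, since the sup of 1-Lipschitz differences inherits the triangle inequality), we have, pointwise in $s$ and $j$,
\begin{equation*}
    \bW(P_{W|s}, P_{W|s_{j^c}}) \leq \bW(P_{W|s}, P_W) + \bW(P_W, P_{W|s_{j^c}}).
\end{equation*}
Taking expectations over $S$ and $J$ (and noting that the marginal distribution of $S$ given $J=j$ is still $P_Z^{\otimes n}$), it suffices to prove that for every fixed $j \subseteq [n]$,
\begin{equation*}
    \bE \big[ \bW(P_W, P_{W|S_{j^c}}) \big] \leq \bE \big[ \bW(P_{W|S}, P_W) \big].
\end{equation*}

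For this I would mimic the smoothing step in the proof of Proposition~\ref{prop:iwas_less_was_std}. The key observation is that, by independence of the samples, $P_{W|S_{j^c}} = \bE[P_{W|S}\mid S_{j^c}]$, so for any $f$,
\begin{equation*}
    \int f\,dP_{W|s_{j^c}} = \int \bigg( \int f\,dP_{W|s} \bigg) dP_{S_j}(s_j).
\end{equation*}
Applying KR duality and pulling the integral over $s_j$ outside the supremum via the standard inequality $\sup_f \bE[g(f,X)] \leq \bE[\sup_f g(f,X)]$, I obtain
\begin{equation*}
    \bW(P_W, P_{W|s_{j^c}}) = \sup_{f\in 1\textnormal{-Lip}(\rho)} \bigg\{ \int \Big( \int f\,dP_{W|s} - \int f\,dP_W \Big) dP_{S_j}(s_j) \bigg\} \leq \int \bW(P_{W|s}, P_W)\,dP_{S_j}(s_j).
\end{equation*}
Integrating over $s_{j^c}$ with $P_Z^{\otimes(n-m)}$ and using Fubini--Tonelli yields the desired inequality, which combined with the triangle inequality proves the proposition.

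The only subtle point is getting the direction of the Jensen-for-suprema step right, and being careful that the random subset $J$ is independent of $S$ so that the conditional distribution of $S_j$ given $S_{j^c}$ reduces to the product $P_Z^{\otimes m}$; once these are in place the argument is essentially an exact transcription of the smoothing step already used in Proposition~\ref{prop:iwas_less_was_std}. I expect no serious obstacle beyond bookkeeping.
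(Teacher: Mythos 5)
Your proof is correct and matches the paper's argument essentially step for step: triangle inequality for $\bW$ pointwise in $s$ and $j$, followed by the same smoothing/Jensen-for-suprema argument (using $P_{W|S_{j^c}} = \bE[P_{W|S}\mid S_{j^c}]$ and the KR dual) already deployed in Proposition~\ref{prop:iwas_less_was_std} to absorb the $\bW(P_{W|S_{j^c}},P_W)$ term. The bookkeeping with the independence of $J$ and $S$ and the Fubini step is also handled exactly as in the paper.
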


\begin{proof}
An application of the triangle inequality on Wasserstein distances~\citep[Chapter~6]{villani2008optimal} states that, for all $j \subseteq [n]$ such that $|j| = m$ and all $s \in \cZ^n$
\begin{equation}
    \bW(P_{W|s},P_{W|s_{j^c}}) \leq \bW(P_{W|s},P_{W}) + \bW(P_{W|s_{j^c}},P_{W}).
    \label{eq:lemma_rswas_less_was}
\end{equation}

Then, the inequality $\bE[\bW(P_{W|S_{j^c}},P_W)] \leq \bE[\bW(P_{W|S},P_W)]$ holds by the same arguments of Proposition~\ref{prop:iwas_less_was_std}.
That is, writing the Wasserstein distance in its KR dual form and noting that the integral of a supremum is greater than the supremum of the integral and that $P_{W|S_{j^c}} = \bE[P_{W|S} \mid S_{j^c}]$ for all $j \subseteq [n]$.
Hence, taking expectations on both sides of~\eqref{eq:lemma_rswas_less_was} results in
\begin{equation*}
    \bE \big[\bW(P_{W|S},P_{W|S_{J^c}})\big] \leq 2 \bE\big[\bW(P_{W|S},P_{W})\big],
\end{equation*}
which completes the proof.
\end{proof}

\subsubsection{Randomized-subsample setting}

\begin{proposition}
\label{prop:iwas_less_was_rd}
Consider the randomized-subsample setting. Then,
\begin{equation*}
    \frac{1}{n} \sum_{i=1}^n \bE \big[ \bW(P_{\smash{W|\tilde{S}_i,U_i}}, P_{\smash{W|\tilde{S}_i}}) \big] \leq  \bE \big[ \bW(P_{\smash{W|\tilde{S},U}}, P_{\smash{W|\tilde{S}}}) \big],
\end{equation*}
where $\tilde{S}_i$ is defined in Theorem~\ref{th:iwas_rs}.
\end{proposition}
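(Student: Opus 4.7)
The plan is to mirror the strategy used for Proposition~\ref{prop:iwas_less_was_std} in the standard setting, replacing the single-sample conditioning $Z_i$ by the ``supersample slice'' $(\tilde{S}_i, U_i)$ on the left-hand side and the full conditioning $(\tilde{S}, U)$ on the right-hand side. First I would observe that it suffices to prove the stronger per-index statement
\begin{equation*}
    \bE \big[ \bW(P_{W|\tilde{S}_i,U_i}, P_{W|\tilde{S}_i}) \big] \leq \bE \big[ \bW(P_{W|\tilde{S},U}, P_{W|\tilde{S}}) \big] \quad \text{for every } i \in [n],
\end{equation*}
since averaging over $i$ then yields the proposition.

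Next I would invoke the KR duality (Lemma~\ref{lemma:kr_duality}) to rewrite both Wasserstein distances as suprema of differences of expectations of $1$-Lipschitz functions, and write the right-hand side as an iterated integral over $P_{\tilde{S}} \otimes P_U$. The key tool is then the elementary inequality $\sup_f \int g_f \, d\mu \leq \int \sup_f g_f \, d\mu$, applied to pull the integrals with respect to $\tilde{S}_{-i}$ (the coordinates of $\tilde{S}$ outside index $i$ and $i+n$) and $U_{-i}$ inside the supremum. This leaves an outer integral over $\tilde{S}_i$ and $U_i$, with the supremum acting on
\begin{equation*}
    \int f \, d \bE\!\left[P_{W|\tilde{S},U} \,\big|\, \tilde{S}_i, U_i\right] - \int f \, d \bE\!\left[P_{W|\tilde{S}} \,\big|\, \tilde{S}_i, U_i\right].
\end{equation*}

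The step I would handle most carefully is verifying the two smoothing identities
\begin{equation*}
    \bE\!\left[P_{W|\tilde{S},U} \,\big|\, \tilde{S}_i, U_i\right] = P_{W|\tilde{S}_i, U_i}, \qquad \bE\!\left[P_{W|\tilde{S}} \,\big|\, \tilde{S}_i, U_i\right] = P_{W|\tilde{S}_i},
\end{equation*}
using that the coordinates of $\tilde{S}$ and the $U_j$'s are mutually independent; in particular, the second identity needs the fact that $P_{W|\tilde{S}}$ is functionally independent of $U$ (because $U$ has already been marginalized out of the forward channel), so that its conditional expectation given $(\tilde{S}_i, U_i)$ collapses to its conditional expectation given $\tilde{S}_i$ alone. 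Once these identities are in place, reading the resulting expression backward through the KR duality exactly reproduces $\bE[\bW(P_{W|\tilde{S}_i, U_i}, P_{W|\tilde{S}_i})]$, finishing the proof. The only real obstacle is this measure-theoretic bookkeeping with two marginalized coordinates ($\tilde{S}_{-i}$ and $U_{-i}$) instead of one; everything else is a direct transliteration of the argument for Proposition~\ref{prop:iwas_less_was_std}.
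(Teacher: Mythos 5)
Your proposal is correct and matches the paper's own proof: both reduce to the per-index inequality, write both Wasserstein distances via KR duality, pull the inner expectation over $(\tilde{S}_{-i},U_{-i})$ inside the supremum, and then apply the two smoothing identities $\bE[P_{W|\tilde{S},U}\mid\tilde{S}_i,U_i]=P_{W|\tilde{S}_i,U_i}$ and $\bE[P_{W|\tilde{S}}\mid\tilde{S}_i,U_i]=P_{W|\tilde{S}_i}$. Your careful justification of the second identity (using $\tilde{S}\indep U$ so that $P_{\tilde{S}_{-i}|\tilde{S}_i,U_i}=P_{\tilde{S}_{-i}|\tilde{S}_i}$) is a useful expansion of what the paper leaves implicit; you have also, in effect, corrected a small typo in the paper's displayed per-index inequality, whose left-hand side should read $\bW(P_{W|\tilde{S}_i,U_i},P_{W|\tilde{S}_i})$ rather than $\bW(P_{W|\tilde{S}_i,U_i},P_{W|\tilde{S}})$, as the second smoothing identity makes clear.
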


\begin{proof}
The proposition follows noting that, for all $i \in [n]$
\begin{equation}
    \bE[\bW(P_{\smash{W|\tilde{S}_i,U_i}}, P_{\smash{W|\tilde{S}}})] \leq \bE[\bW(P_{\smash{W|\tilde{S},U}}, P_{\smash{W|\tilde{S}}})],
    \label{eq:iwas_less_was_rd}
\end{equation}
which is a stronger statement than the original.
Then, Equation~\eqref{eq:iwas_less_was_rd} follows by the same arguments as Propositions~\ref{prop:iwas_less_was_std} and~\ref{prop:rswas_less_was_std}.
That is, writing the Wasserstein distance in its KR dual form and noting that the integral of a supremum is greater than the supremum of the integral and that $P_{\smash{W|\tilde{S}_i,U_i}} = \bE[P_{\smash{W|\tilde{S},U}} \mid \tilde{S}_i,U_i]$ and $P_{\smash{W|\tilde{S}_i}} = \bE[P_{\smash{W|\tilde{S}}} \mid \tilde{S}_i,U_i]$.
\end{proof}

\begin{proposition}
\label{prop:rswas_less_was_rd}
Consider the randomized-subsample setting. Consider also a uniformly random subset of indices $J \subseteq [n]$ of size $m$. Then, 
\begin{equation*}
    \frac{1}{n} \sum_{i=1}^n \bE \big[ \bW(P_{\smash{W|\tilde{S}_i,U_i}}, P_{\smash{W|\tilde{S}_i}}) \big]  \leq  \bE \big[\bW(P_{\smash{W|\tilde{S},U}}, P_{\smash{W|\tilde{S},U_{J^c}}}) \big],
\end{equation*}
\end{proposition}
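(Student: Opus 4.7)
The plan is to mimic the proof of Proposition~\ref{prop:iwas_less_rswas_std}, the standard-setting analog. First, I would write the expectation over $P_J$ explicitly, turning the statement into
\[
\frac{1}{n} \sum_{i=1}^n \bE[\bW(P_{\smash{W|\tilde{S}_i, U_i}}, P_{\smash{W|\tilde{S}_i}})] \leq \frac{1}{\binom{n}{m}} \sum_{j \in \mathcal{J}} \bE[\bW(P_{\smash{W|\tilde{S}, U}}, P_{\smash{W|\tilde{S}, U_{j^c}}})],
\]
where $\mathcal{J}$ denotes the collection of size-$m$ subsets of $[n]$. Reordering the indices so that $\{\bE[\bW(P_{\smash{W|\tilde{S}_i, U_i}}, P_{\smash{W|\tilde{S}_i}})]\}_{i \in [n]}$ is decreasing and using that each $i$ appears in exactly $\binom{n-1}{m-1}$ of the sets $j \in \mathcal{J}$, it suffices to prove the stronger pointwise inequality
\[
\bE[\bW(P_{\smash{W|\tilde{S}_i, U_i}}, P_{\smash{W|\tilde{S}_i}})] \leq \bE[\bW(P_{\smash{W|\tilde{S}, U}}, P_{\smash{W|\tilde{S}, U_{j^c}}})]
\]
for every $j \in \mathcal{J}$ and every $i \in j$.

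For the pointwise inequality, I would apply KR duality to both sides and combine $\bE[\sup_f (\cdot)] \geq \sup_f \bE[\cdot]$ with Fubini--Tonelli, exactly as in Propositions~\ref{prop:iwas_less_was_std}, \ref{prop:iwas_less_rswas_std}, and~\ref{prop:iwas_less_was_rd}. Disintegrating $P_{\smash{\tilde{S}, U}}$ as $dP_{\smash{\tilde{S}_{-i}, U_{-i} \mid \tilde{S}_i, U_i}} \, dP_{\smash{\tilde{S}_i, U_i}}$ and pulling the supremum outside the inner integral, the task reduces to verifying the two marginalization identities
\[
P_{\smash{W|\tilde{S}_i, U_i}} = \bE[P_{\smash{W|\tilde{S}, U}} \mid \tilde{S}_i, U_i] \quad \textnormal{and} \quad P_{\smash{W|\tilde{S}_i}} = \bE[P_{\smash{W|\tilde{S}, U_{j^c}}} \mid \tilde{S}_i, U_i].
\]
The first is an immediate instance of the tower property. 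The second, which is the key observation, exploits both the hypothesis $i \in j$ and the independence structure of the randomized-subsample setting: since $i \in j$, the Bernoulli $U_i$ is not one of the entries of $U_{j^c}$, and by construction $U_i$ is independent of $(\tilde{S}, U_{-i})$. Therefore $P_{\smash{W|\tilde{S}, U_{j^c}}}$, being measurable with respect to $(\tilde{S}, U_{j^c})$, is independent of $U_i$; dropping the conditioning on $U_i$ and applying the tower property then finishes the identification.

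The main obstacle is confirming this second identity carefully, as it is the step that packages together the subset hypothesis $i \in j$ with the independence of $U_i$ from $(\tilde{S}, U_{-i})$. Once both identities are in place, the remainder of the argument is a direct replay of the end of Proposition~\ref{prop:iwas_less_rswas_std}, with the pair $(\tilde{S}_i, U_i)$ playing the role previously played by the single sample $Z_i$, and $(\tilde{S}, U)$ playing the role of $S$.
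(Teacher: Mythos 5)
Your proof is correct and follows essentially the same route as the paper: write the expectation over $P_J$ explicitly, reduce via the combinatorial matching to the stronger pointwise inequality for every $j$ and every $i \in j$, and then apply KR duality together with $\bE[\sup_f(\cdot)] \geq \sup_f \bE[\cdot]$ and the two marginalization identities $P_{\smash{W|\tilde{S}_i,U_i}} = \bE[P_{\smash{W|\tilde{S},U}} \mid \tilde{S}_i,U_i]$ and $P_{\smash{W|\tilde{S}_i}} = \bE[P_{\smash{W|\tilde{S},U_{j^c}}} \mid \tilde{S}_i,U_i]$. Your elaboration of why the second identity needs $i \in j$ (so that $U_i$ is independent of $(\tilde{S}, U_{j^c})$) makes explicit a point the paper states only as ``since $i \in j$ and therefore $i \notin j^c$,'' but the underlying argument is the same.
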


\begin{proof}
Note that the statement of the proposition may be written as
\begin{equation*}
    \frac{1}{n} \sum_{i=1}^n \bE \big[ \bW(P_{\smash{W|\tilde{S}_i,U_i}}, P_{\smash{W|\tilde{S}_i}}) \big] \leq \frac{1}{\binom{n}{m}} \sum_{j \in \mathcal{J}} \bE \big[\bW(P_{\smash{W|\tilde{S},U}},P_{\smash{W|\tilde{S},U_{j^c}}}) \big],
\end{equation*}
where the expectation with respect to $P_J$ has been written explicitly.
Then, this result follows by noting that, for all $i \in [n]$ and all $j \subseteq [n]$ such that $i \in j$
\begin{equation}
    \bE \big[ \bW(P_{\smash{W|\tilde{S}_i,U_i}},P_{\smash{W|\tilde{S}_i}}) \big] \leq \bE \big[\bW(P_{\smash{W|\tilde{S},U}},P_{\smash{W|\tilde{S},U_{j^c}}}) \big],
    \label{eq:lemma_iwas_smaller_rswas_rd_2}
\end{equation}
which is a stronger statement than the original.
Similarly to Proposition~\ref{prop:rswas_less_was_std}, this is stronger since one can, without loss of generality, consider the tuple of pairs of samples and their deciding index $(\tilde{S}_i,U_i) = \big((\tilde{Z}_i,\tilde{Z}_{i+n}),U_i\big)$ ordered so that the sequence $\lbrace \bE[\bW(P_{\smash{W|\tilde{S}_i,U_i}},P_{\smash{W|\tilde{S}_i}})] \rbrace_{i \in [n]}$ is decreasing.
Then, $\bE[\bW(P_{\smash{W|\tilde{S}_1,U_1}}, P_{\smash{W|\tilde{S}_1}})]$ is smaller than $\bE[\bW(P_{\smash{W|\tilde{S},U}},P_{\smash{W|\tilde{S},U_{j^c}}})]$ for the $\binom{n-1}{m-1}$ sets $j$ in which the tuple $1$ appears, $\bE[\bW(P_{\smash{W|\tilde{S}_2,U_2}}, P_{\smash{W|\tilde{S}_2}})]$ is smaller than $\bE[\bW(P_{\smash{W|\tilde{S},U}},P_{\smash{W|\tilde{S},U_{j^c}}})]$ for the sets $j$ in which tuple $2$ appears and tuple $1$ does not, and so on.

Then, Equation~\eqref{eq:lemma_iwas_smaller_rswas_rd_2} follows by the same arguments than Propositions~\ref{prop:iwas_less_was_std} and~\ref{prop:rswas_less_was_std}. That is, writing the Wasserstein distance in its KR dual form and noting that the integral of a supremum is greater than the supremum of the integral and that $P_{\smash{W|\tilde{S}_i,U_i}} = \bE[P_{\smash{W|\tilde{S},U}} \mid \tilde{S}_i,U_i]$ and $P_{\smash{W|\tilde{S}_i}} = \bE[P_{\smash{W|\tilde{S},U_{j^c}}} \mid \tilde{S}_i,U_i]$, since $i \in j$ and therefore $i \not \in j^c$.
\end{proof}

\begin{proposition}
%\label{prop:rswas_less_was_rd}
Consider the randomized-subsample setting. Consider also a uniformly random subset of indices $J \subseteq [n]$ of size $m$. Then, 
\begin{equation*}
    \bE \big[ \bW(P_{\smash{W|\tilde{S},U}}, P_{\smash{W|\tilde{S},U_{J^c}}}) \big] \leq  2 \bE \big[ \bW(P_{\smash{W|\tilde{S},U}}, P_{\smash{W|\tilde{S}}}) \big].
\end{equation*}
\end{proposition}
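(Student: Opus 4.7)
The plan is to mimic the proof of Proposition~\ref{prop:rswas_less_was_std} step by step, transferring the argument from the standard setting to the randomized-subsample setting. The two key ingredients are (i) the triangle inequality for the Wasserstein distance and (ii) a smoothing argument that passes from $P_{W|\tilde{S},U}$ to $P_{W|\tilde{S}}$ via marginalization over $U$.

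First, for every fixed $\tilde{s} \in \cZ^{2n}$, $u \in \{0,1\}^n$, and every $j \subseteq [n]$ with $|j|=m$, apply the triangle inequality for the Wasserstein distance~\citep[Chapter~6]{villani2008optimal} with the intermediate distribution $P_{W|\tilde{s}}$ to obtain
\begin{equation*}
    \bW(P_{W|\tilde{s},u}, P_{W|\tilde{s},u_{j^c}}) \leq \bW(P_{W|\tilde{s},u}, P_{W|\tilde{s}}) + \bW(P_{W|\tilde{s},u_{j^c}}, P_{W|\tilde{s}}).
\end{equation*}
Taking expectation with respect to $P_{\tilde{S},U,J}$ on both sides reduces the claim to showing that
\begin{equation*}
    \bE\big[\bW(P_{W|\tilde{S},U_{J^c}}, P_{W|\tilde{S}})\big] \leq \bE\big[\bW(P_{W|\tilde{S},U}, P_{W|\tilde{S}})\big].
\end{equation*}

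For this inequality I would invoke the KR duality exactly as in the proofs of Propositions~\ref{prop:iwas_less_was_std}, \ref{prop:iwas_less_rswas_std}, and~\ref{prop:rswas_less_was_std}. Writing both sides in their dual forms, the task becomes
\begin{equation*}
    \bE\bigg[\sup_{f \in 1\textnormal{-Lip}(\rho)} \Big\{ \bE[f(W)\mid \tilde{S},U_{J^c}] - \bE[f(W)\mid \tilde{S}] \Big\}\bigg] \leq \bE\bigg[\sup_{f \in 1\textnormal{-Lip}(\rho)} \Big\{ \bE[f(W)\mid \tilde{S},U] - \bE[f(W)\mid \tilde{S}] \Big\}\bigg].
\end{equation*}
Since $J$ and $U_J$ are independent of everything given $(\tilde{S},U_{J^c})$, the tower property gives the smoothing identity $P_{W|\tilde{S},U_{J^c}} = \bE[P_{W|\tilde{S},U}\mid \tilde{S},U_{J^c}]$. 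Therefore, conditioning on $(\tilde{S},U_{J^c})$ inside the outer expectation and using Fubini--Tonelli together with the elementary fact that $\sup_g \bE[g(X)] \leq \bE[\sup_g g(X)]$ pushes the supremum inside the conditional expectation over $U_J$, and collapses the left-hand side into (at most) the right-hand side.

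The main obstacle, as in the standard-setting analogue, is purely technical bookkeeping: one must justify the exchange of supremum and conditional expectation and verify that the appropriate smoothing identity holds when $J$ itself is random. These are handled exactly by the same Jensen/Fubini--Tonelli argument used in Proposition~\ref{prop:rswas_less_was_std}, so no new ideas are required; the only adaptation is carrying the conditioning on $\tilde{S}$ through the argument, which is harmless because both distributions on each side of the inequality share this conditioning.
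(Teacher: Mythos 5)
Your proposal matches the paper's proof essentially step for step: triangle inequality with the intermediate distribution $P_{W|\tilde{s}}$, then the smoothing identity $P_{W|\tilde{S},U_{J^c}} = \bE[P_{W|\tilde{S},U}\mid \tilde{S},U_{J^c}]$ combined with the KR dual form and $\sup_g \bE[g(X)] \leq \bE[\sup_g g(X)]$ to establish $\bE[\bW(P_{W|\tilde{S},U_{J^c}},P_{W|\tilde{S}})] \leq \bE[\bW(P_{W|\tilde{S},U},P_{W|\tilde{S}})]$. This is exactly the route the authors take (they carry out the smoothing step for each fixed $j$ before averaging over $J$, which is a cosmetic rather than substantive difference from conditioning on the random $J$ directly).
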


\begin{proof}
An application of the triangle inequality on Wasserstein distances~\citep[Chapter~6]{villani2008optimal} states that, for all $j \subseteq [n]$ such that $|j| = m$, all $\tilde{s} \in \cZ^{2n}$, and all $u \in [0,1]^n$
\begin{equation}
    \bW(P_{W|\tilde{s},u},P_{W|\tilde{s},u_{j^c}}) \leq \bW(P_{W|\tilde{s},u},P_{W|\tilde{s}}) + \bW(P_{W|\tilde{s},u_{j^c}},P_{W|\tilde{s}}).
    \label{eq:lemma_rswas_less_was_rd}
\end{equation}

Then, the inequality $\bE[\bW(P_{\smash{W|\tilde{S},U_{j^c}}},P_{\smash{W|\tilde{S}}})] \leq \bE[\bW(P_{\smash{W|\tilde{S},U}},P_{\smash{W|\tilde{S}}})]$ holds by the same arguments of Proposition~\ref{prop:iwas_less_was_std}.
That is, writing the Wasserstein distance in its KR dual form and noting that the integral of a supremum is greater than the supremum of the integral and that $P_{\smash{W|\tilde{S},U_{j^c}}} = \bE[P_{\smash{W|\tilde{S},U}} \mid \tilde{S},U_{j^c}]$ for all $j \subseteq [n]$.
Hence, taking expectations on both sides of~\eqref{eq:lemma_rswas_less_was_rd} results in
\begin{equation*}
    \bE \big[ \bW(P_{\smash{W|\tilde{S},U}},P_{\smash{W|\tilde{S},U_{J^c}}}) \big] \leq  2 \bE \big[ \bW(P_{\smash{W|\tilde{S},U}},P_{\smash{W|\tilde{S}}}) \big],
\end{equation*}
which completes the proof.
\end{proof}

\subsubsection{Comparison between the settings}

\begin{proposition}
Consider the standard and the randomized-subsample settings. Consider also a uniformly random subset of indices $J \subseteq [n]$ of size $m$. Then,
\begin{align*}
    \bE[ \bW(P_{\smash{W|\tilde{S},U}},P_{\smash{W|\tilde{S}}})] &\leq 2\bE[ \bW(P_{W|S},P_{W})], \\
    \bE[ \bW(P_{\smash{W|\tilde{S}_i,U_i}},P_{\smash{W|\tilde{S}_i}})] &\leq 2\bE[ \bW(P_{W|Z_i},P_{W})], \textnormal{ and}\\
    \bE[ \bW(P_{\smash{W|\tilde{S},U}},P_{\smash{W|\tilde{S},U_{J^c}}})] &\leq 2\bE[ \bW(P_{W|S},P_{W|S_{J^c}})].
\end{align*}
\end{proposition}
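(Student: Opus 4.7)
My plan is to prove the three inequalities in parallel by the same triangle-inequality template used in Propositions~\ref{prop:rswas_less_was_std} and~\ref{prop:rswas_less_was_rd}. The key initial observation is that in the randomized-subsample setting $W$ depends on $(\tilde{S},U)$ only through the selected dataset $S=(\tilde{Z}_{i+U_i n})_{i=1}^n$, so almost surely $P_{W|\tilde{S},U}=P_{W|S}$; moreover, since $\bar{Z}_i=\tilde{Z}_{i+(1-U_i)n}$ carries no information about $W$ beyond $Z_i$, also $P_{W|\tilde{S}_i,U_i}=P_{W|Z_i}$.

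For the first inequality I would apply $\bW(P_{W|\tilde{S},U},P_{W|\tilde{S}}) \le \bW(P_{W|\tilde{S},U},P_W) + \bW(P_W,P_{W|\tilde{S}})$. The first term equals $\bW(P_{W|S},P_W)$ by the identity above. For the second term, since $P_{W|\tilde{S}}=\bE[P_{W|\tilde{S},U}\mid \tilde{S}]$ by the tower property, I would invoke the KR-duality smoothing argument of Proposition~\ref{prop:iwas_less_was_std} (``integral of supremum beats supremum of integral'') to conclude $\bE[\bW(P_{W|\tilde{S}},P_W)] \le \bE[\bW(P_{W|\tilde{S},U},P_W)]=\bE[\bW(P_{W|S},P_W)]$, giving the claimed factor of two. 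The second inequality is then proved by the identical argument with $(\tilde{S}_i,U_i,Z_i)$ substituted for $(\tilde{S},U,S)$ (so the smoothing step uses $P_{W|\tilde{S}_i}=\bE[P_{W|\tilde{S}_i,U_i}\mid \tilde{S}_i]$, analogous to Proposition~\ref{prop:iwas_less_was_rd}).

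For the third inequality I would instead use the intermediate $P_{W|S_{J^c}}$, noting that $S_{J^c}$ is a deterministic function of $(\tilde{S},U_{J^c})$:
\begin{equation*}
    \bW(P_{W|\tilde{S},U},P_{W|\tilde{S},U_{J^c}}) \le \bW(P_{W|S},P_{W|S_{J^c}}) + \bW(P_{W|S_{J^c}},P_{W|\tilde{S},U_{J^c}}).
\end{equation*}
The first term is exactly what is wanted on the right-hand side. For the second term, I would expand $P_{W|\tilde{s},u_{j^c}}=2^{-m}\sum_{u_j\in\{0,1\}^m}P_{W|S(\tilde{s},(u_{j^c},u_j))}$ and invoke convexity of $\bW$ in the second argument (an immediate consequence of pooling couplings: $\sum_k\lambda_k\pi_k$ is a coupling of $P$ and $\sum_k\lambda_k Q_k$ whenever $\pi_k$ couples $P$ to $Q_k$) to bound the second term by $2^{-m}\sum_{u_j}\bW(P_{W|S_{J^c}},P_{W|S(\tilde{s},(u_{j^c},u_j))})$. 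Taking expectations over $(\tilde{S},U_{J^c})$ absorbs the uniform average over $u_J$ into an independent uniform $U_J$; marginally $S(\tilde{S},(U_{J^c},u_j))\sim P_Z^{\otimes n}$, so the right-hand side collapses to $\bE[\bW(P_{W|S_{J^c}},P_{W|S})]=\bE[\bW(P_{W|S},P_{W|S_{J^c}})]$ by symmetry of $\bW$.

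The main obstacle I anticipate is establishing the conditional-distribution identities cleanly — especially $P_{W|\tilde{S}_i,U_i}=P_{W|Z_i}$, which hinges on $\bar{Z}_i$ being independent of the other samples and unused by the algorithm — and carefully checking that, after applying convexity in the third inequality, the interchange of the inner average over $u_J$ with the outer expectation over $(\tilde{S},U_{J^c})$ indeed reconstructs the joint distribution of $(S,S_{J^c})$ as it appears in the standard setting on the right-hand side.
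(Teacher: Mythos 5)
Your proposal is correct and follows essentially the same route as the paper: triangle inequality through the intermediate distribution ($P_W$ for the first two, $P_{W|S_{J^c}}$ for the third), identification of $P_{W|\tilde{S},U}=P_{W|S}$ (resp.\ $P_{W|\tilde{S}_i,U_i}=P_{W|Z_i}$) almost surely, and a smoothing step for the cross term. The only cosmetic difference is that you prove the smoothing step via convexity of $\bW$ from coupling pooling, whereas the paper invokes the KR-dual ``sup of integral $\leq$ integral of sup'' argument --- these are two phrasings of the same fact.
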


\begin{proof}
The proofs of the three statements are analogous. Therefore only the proof of the first statement is explicitly written.

An application of the triangle inequality on Wasserstein distances~\citep[Chapter~6]{villani2008optimal} states that, for $\tilde{s} \in \cZ^{2n}$ and all $u \in [0,1]^n$ such that $s$ is obtained through $\tilde{s}$ and $u$ as explained in the introduction,
\begin{equation}
    \bW(P_{W|\tilde{s},u},P_{W|\tilde{s}}) \leq \bW(P_{W|\tilde{s},u},P_{W}) + \bW(P_{W|\tilde{s}},P_{W}).
    \label{eq:comp_was_std_rd}
\end{equation}

Then, the inequality $\bE[\bW(P_{\smash{W|\tilde{S}}},P_{W})] \leq \bE[\bW(P_{\smash{W|\tilde{S},U}},P_{\smash{W}})]$ holds by the same arguments of Proposition~\ref{prop:iwas_less_was_std}.
That is, writing the Wasserstein distance in its KR dual form and noting that the integral of a supremum is greater than the supremum of the integral and that $P_{\smash{W|\tilde{S}}} = \bE[P_{\smash{W|\tilde{S},U}} \mid \tilde{S}]$.
Hence, taking expectations on both sides of~\eqref{eq:comp_was_std_rd} and noting that $P_{\tilde{S},U} = P_{S}$ almost surely results in
\begin{equation*}
    \bE[ \bW(P_{\smash{W|\tilde{S},U}},P_{\smash{W|\tilde{S}}})] \leq 2\bE[ \bW(P_{W|S},P_{W})],
\end{equation*}
which completes the proof.
\end{proof}

\subsection{Comparison of the mutual information based bounds}
\label{app:mi_comp}

\subsubsection{Standard setting}

In the standard setting, after a further application of Jensen's inequality, the bounds derived from Corollary~\ref{cor:itv_std} or~\citep[Proposition~1]{bu2020tightening} are the tightest, followed by~\citep[Theorem~1]{xu2017information} and the bounds derived from Corollary~\ref{cor:rstv_std} when $|J| = 1$ or~\citep[Theorem~2.4]{negrea2019information} if the arbitrary random variable $R$ is not considered.
This follows since by~\citep[Proposition~2]{bu2020tightening} and~\citep[Lemma~3.7]{feldman2018calibrating} or \citep[Lemma~2]{rodriguez2020random}
\[
    \sum_{i=1}^n I(W;Z_i) \leq I(W;S) \leq \sum_{i=1}^n I(W;Z_i|S^{-i}),
\]
where $S^{-i} = S \setminus Z_i$. More generally, with trivial modifications to the proof from~\citep[Lemma~2]{rodriguez2020random}, it can be shown that
    \[
    \sum_{i=1}^n I(W;Z_i) \leq I(W;S) \leq \bE[ I(W;S_J|S_{J^c})],
    \]
noting that, for any $j \subseteq [n]$ of size $m$, if the elements of $s_j$ are ordered as $Z_{k_1}, \ldots, Z_{k_m}$,
    \[
    I(W;S) = \sum_{i=1}^n I(W;Z_i|S^{i-1}) \leq \frac{1}{\binom{n}{m}} \sum_{\iota = 1}^m I(W;Z_{k_{\iota}}|S_{j^c},S_j^{k_{\iota-1}}),
    \]
since every time that $i = k_{\iota}$ then $S^{i-1} \subseteq (S_{j^c} \cup S_j^{k_{\iota-1}})$,
where $S_j^{k_{\iota-1}}$ are the first $\iota$-1 elements of $S_j$.

\subsubsection{Randomized-subsample setting}

With a similar argument to the one for the standard setting, after a further application of Jensen's inequality, the bounds derived from \citep[Theorem~3.4]{haghifam2020sharpened} are tighter than~\cite[Theorem~5.1]{steinke2020reasoning}, which in turn are tighter than the bounds derived from Corollary~\ref{cor:rswas_rd} when $|J|=1$ or~\citep[Theorem~3.7]{haghifam2020sharpened} if the arbitrary random variable $R$ is not considered, since

\[
    \sum_{i=1}^n I(W;U_i|\tilde{S}) \leq I(W;U|\tilde{S}) \leq \bE[ I(W;U_J|\tilde{S},U_{J^c})].
\]

Furthermore, the bounds derived from Corollary~\ref{cor:iwas_rs} or~\citep[Proposition~3]{rodriguez2020random} are the tightest as dictated by~\citep[Lemma~3]{rodriguez2020random} or~\citep[Lemma~2]{zhou2020individually}.
This way, the relationship between the conditional mutual information terms is

\[
    \sum_{i=1}^n I(W;U_i|\tilde{Z}_i,\tilde{Z}_{i+n}) \leq \sum_{i=1}^n I(W;U_i|\tilde{S}) \leq I(W;U|\tilde{S}) \leq \bE[ I(W;U_J|\tilde{S},U_{J^c})].
\]

\subsubsection{Comparison between the settings}

Similarly, one may note that $I(W;U|\tilde{S}) \leq I(W;S)$ by~\citep{haghifam2020sharpened},  $I(W;U_i|\tilde{Z}_{i},\tilde{Z}_{i+n}) \leq I(W;Z_i)$, and %$I(W;U_i|\tilde{S}) \leq I(W;U_i|\tilde{S},U^{-i}) \leq I(W;Z_i|S^{-i})$.
$I(W;U_j|\tilde{S}) \leq I(W;U_j|\tilde{S},U_{j^c}) \leq I(W;S_j|S_{j^c})$ for any subset of indices $j \subseteq [n]$.
Nonetheless, the additional factor of two in the bounds of the randomized-subsample setting makes the comparison between the bounds of the different settings harder.

An attempt for this comparison is given in~\citep{hellstrm2020generalization}, where they note that, since $(U,\tilde{S}) \leftrightarrow S \leftrightarrow W$ form a Markov chain and $S$ is a deterministic function of $\tilde{S}$ and $U$, then $I(W;S) = I(W;\tilde{S}) + I(W;U|\tilde{S})$, and hence the bound from the randomized-subsample setting~\citep[Theorem~5.1]{steinke2020reasoning} is tighter than the one from the standard setting~\citep[Theorem~1]{xu2017information} if $3 I(W;U|\tilde{S}) \leq I(W;\tilde{S})$.
There are similar requirements for the single-letter and random-subset bounds, namely:
\begin{itemize}
    \item The bound derived from Corollary~\ref{cor:iwas_rs} or~\citep[Proposition~3]{rodriguez2020random} is tighter than~\citep[Proposition~1]{bu2020tightening} if $3 I(W;U_i|\tilde{Z}_i,\tilde{Z}_{i+n}) \leq I(W; \tilde{Z}_i, \tilde{Z}_{i+n})$.
    \item The bound derived from Corollary~\ref{cor:rswas_rd} or~\citep[Theorem~3.7]{haghifam2020sharpened} is tighter than~\citep[Theorem~2.4]{negrea2019information} if %3 I(W;U_i|\tilde{S},U^{-i}) \leq I(W;\tilde{Z}_{i},\tilde{Z}_{i+n}|S^{-i})
    $3I(W;U_j|\tilde{S},U_{j^c}) \leq I(W;\tilde{S}_j|S_{j^c})$.
\end{itemize}
%The requirements for the random-subset bounds are more convoluted and thus not discussed.
%
\begin{remark}
Note that, sometimes, seemingly looser bounds can lead to tighter or more tractable bounds for specific algorithms.
For instance, the random-subset bounds from~\citep{negrea2019information, rodriguez2020random, haghifam2020sharpened} lead to tighter Langevin dynamics and stochastic gradient Langevin dynamics than the single-letter bounds from~\citep{bu2020tightening}.
\end{remark}

%%%%%%%%%%%%%%%%%%%%%%%%%%%%%%%%%%%%%%%%%%%%%%%%%%%%%%%%%%%%
\section{Derivations for the Gaussian location model example}
\label{app:glm}

The problem considered in the example is the estimation of the mean $\mu$ of a $d$-dimensional Gaussian distribution with known covariance matrix $\sigma^2 I_d$. Furthermore, there are $n$ samples $S = (Z_1, \ldots, Z_n)$ available, the loss is measured with the Euclidean distance $\ell(w,z) = \lVert w-z \rVert_2$, and the estimation is their empirical mean $W = \frac{1}{n} \sum_{i=1}^n Z_i$.

To calculate the expected generalization error and derive different bounds, it is convenient to know how the random variables are distributed.
For example, in this setting $P_Z = \cN(\mu,\sigma^2 I_d)$, $P_{W} = \cN\Big(\mu,\frac{\sigma^2}{n} I_d\Big)$, $P_{W|Z_i} = \cN\Big(\frac{(n-1)\mu + Z_i}{n},\frac{\sigma^2(n+1)}{n^2} I_d\Big)$, $P_{W|S^{-j}} = \cN\Big(\frac{\mu}{n} + \frac{1}{n} \sum_{i \neq j} Z_i,\sigma^2 I_d \Big)$, and $P_{W|S} = \delta\Big(\frac{1}{n} \sum_{i=1}^n {Z_i} \Big)$.
Another important feature of this problem is that the loss function is $1$-Lipschitz under $\rho(w,w') = \lVert w - w' \rVert_2$.

\subsection{Expected generalization error}
\label{subapp:glm_ege}

In order to derive an exact expression of the generalization error, it is suitable to write it in the following explicit form:
\begin{equation*}
    \overline{\textnormal{gen}}(W,S) = \bE[\ell(W,Z)] - \frac{1}{n} \sum_{i=1}^n \bE[\ell(W,Z_i)],
\end{equation*}
where $Z \sim P_Z$ is independent of $W$. Then, the two terms can be evaluated independently.

The first term is equivalent to 
\begin{equation*}
    \bE[\ell(W,Z)] = \bE[\lVert W-Z \rVert_2] = \sqrt{2 \sigma^2 \Big(1 + \frac{1}{n}\Big)} \frac{\Gamma\big(\frac{d+1}{2}\big)}{\Gamma\big(\frac{d}{2}\big)},
\end{equation*}
where the first equality follows from the definition of the loss function. The second equality follows from noting that $(W-Z) \sim \cN\big(0,\sigma^2 \big(1+\frac{1}{n}\big)I_d\big)$ and therefore $\lVert W-Z \rVert_2 = \sqrt{ \sigma^2 \big(1+\frac{1}{n}\big)} X$, where $X$ is distributed according to the chi distribution with $d$ degrees of freedom.

Similarly, the summands of the second term are equivalent to
\begin{equation*}
    \bE[\ell(W,Z_i)] = \bE[\lVert W-Z_i \rVert_2] = \sqrt{2 \sigma^2 \Big(1 - \frac{1}{n}\Big)} \frac{\Gamma\big(\frac{d+1}{2}\big)}{\Gamma\big(\frac{d}{2}\big)},
\end{equation*}
where as before the first equality follows from the definition of the loss function.
The second equality follows from noting that $(W-Z_i) \sim \cN\big(0,\sigma^2 \big(1-\frac{1}{n}\big)I_d\big)$ and therefore $\lVert W-Z_i \rVert_2 = \sqrt{ \sigma^2 \big(1-\frac{1}{n}\big)} X$, where $X$ is distributed according to the chi distribution with $d$ degrees of freedom.
In this case, $W$ and $Z_i$ are not independent random variables. In fact, $(W,Z_i)$ is normally distributed with covariance matrix
\begin{equation*}
    \begin{pmatrix}
    \frac{\sigma^2}{n} I_d & \frac{\sigma^2}{n}I_d \\
    \frac{\sigma^2}{n} I_d & \sigma^2 I_d 
    \end{pmatrix},
\end{equation*}
from which the distribution of $W-Z_i$ is deduced.

Finally, subtracting both terms results in
\begin{equation*}
    \overline{\textnormal{gen}}(W,S) = \sqrt{\frac{2 \sigma^2}{n}} \Big(\sqrt{n+1} - \sqrt{n-1} \Big) \frac{\Gamma\big(\frac{d+1}{2}\big)}{\Gamma\big(\frac{d}{2}\big)} \leq \frac{\sqrt{2\sigma^2 d}}{n}.
\end{equation*}
where the inequality follows from the following two bounds: (i) $\sqrt{n+1} - \sqrt{n-1} \leq \sqrt{\frac{2}{n}}$, which is obtained by multiplying and dividing by $\sqrt{n+1} + \sqrt{n-1}$ and noting that $\sqrt{n+1} + \sqrt{n-1} \geq \sqrt{2 n}$, and (ii) the upper bound on the ratio of gamma distributions by $\sqrt{\frac{d}{2}}$ using the series expansion at $d \to \infty$.

\subsection{Wasserstein distance bound}
\label{subapp:glm_was}

The bound from~\citep{wang2019information} can be calculated exactly since $P_{W|S}$ is a delta distribution, that is
\begin{equation*}
    \bE\big[\bW(P_{W|S},P_W)\big] = \bE\bigg[ \Big\lVert \frac{1}{n} \sum_{i=1}^n Z_i - \frac{1}{n} \sum_{i=1}^n Z_i' \Big\rVert_2 \bigg] = \sqrt{\frac{4\sigma^2}{n}} \frac{\Gamma\big(\frac{d+1}{2}\big)}{\Gamma\big(\frac{d}{2}\big)} \leq \sqrt{\frac{2\sigma^2 d}{n}}
\end{equation*}
where $Z_i' \sim P_Z$ are independent copies of $Z_i$. Hence, the difference is distributed as a normal distribution with mean 0 and covariance $\frac{2\sigma^2}{n}I_d$, which means that the norm is $\sqrt{\frac{2\sigma^2}{n}} X$, where $X$ is a chi random variable with $d$ degrees of freedom. 

\subsection{Individual sample Wasserstein distance bound}
\label{subapp:glm_iwas}

An exact calculation of the bound from Theorem~\ref{th:iwas_std} is cumbersome. However, the Wasserstein distance of order one can be bounded from above by the Wasserstein distance of order two (Remark~\ref{remark:holder}), which has a closed form expression for Gaussian distributions. More specifically, 
\begin{equation*}
    \bE\big[\bW(P_{W|Z_i},P_W)\big] \leq \bE \big[\bW_2(P_{W|Z_i},P_W)\big] \leq \frac{\sqrt{2\sigma^2}}{n} \frac{\Gamma\big(\frac{d+1}{2}\big)}{\Gamma\big(\frac{d}{2}\big)} + \sqrt{\frac{\sigma^2 d}{n^3}} \leq \frac{\sqrt{\sigma^2 d}}{n} + \sqrt{\frac{\sigma^2 d}{n^3}}.
\end{equation*}

The second inequality follows from the closed-form expression for the squared Wasserstein distance of order 2, namely
\begin{equation*}
    \bW(P_{W|Z_i},P_W)^2 = \frac{1}{n^2} \lVert \mu - Z_i \rVert^2 + \frac{\sigma^2 d}{n} \Big(1 + \frac{n-1}{n}-2\sqrt{\frac{n-1}{n}} \Big),
\end{equation*}
where the term $(1+\frac{n-1}{n} - 2\sqrt{\frac{n-1}{n}})$ is a perfect square that is bounded from above by $\frac{1}{n^2}$. Then the expression results from employing the inequality $\sqrt{x + y} \leq \sqrt{x} + \sqrt{y}$ and noting that $\lVert \mu - Z_i \rVert_2 = \sigma X$, where $X$ is a chi distributed random variable with $d$ degrees of freedom.

\subsection{Random subset Wasserstein distance bound}
\label{subapp:glm_rswas}

As in~\ref{subapp:glm_was}, since $P_{W|S}$ is a delta distribution, the bound from Theorem~\ref{th:rswas_std} can be calculated exactly. In particular, the bound assuming that $|J|=1$ is
\begin{equation*}
    \bE\big[\bW(P_{W|S},P_{W|S^{-J}})\big] = \bE\bigg[ \Big\lVert \frac{1}{n} \sum_{i=1}^n Z_i - \Big(\frac{Z_J'}{n} + \frac{1}{n} \sum_{i \neq J} Z_i\Big) \Big\rVert_2 \bigg] = \frac{\sqrt{4\sigma^2}}{n} \frac{\Gamma\big(\frac{d+1}{2}\big)}{\Gamma\big(\frac{d}{2}\big)} \leq \frac{\sqrt{2\sigma^2 d}}{n},
\end{equation*}
where $Z_J' \sim P_Z$ is an independent copy of $Z_J$. Hence the norm is $\frac{\sqrt{2\sigma^2}}{n} X$, where $X$ is a chi random variable with $d$ degrees of freedom.

\subsection{Individual sample mutual information bound}
\label{subapp:glm_ismi}

The individual sample mutual information is $I(W;Z_i) = \frac{d}{2} \log \big(\frac{n}{n-1}\big)$ for all $i \in [n]$~\citep{bu2020tightening}. Nonetheless, in order to employ the bound from~\citep{bu2020tightening}, the loss function $\ell(W,Z)$ needs to have a cumulant generating function $\Lambda(\lambda)$ bounded from above by a convex function $\psi(\lambda)$ such that $\psi(0)=\psi'(0)=0$ for all $\lambda \in (-b,0]$ for some $b \in \bR_+$, where $Z \sim P_Z$ is independent of $W$.

The loss function $\ell(W,Z) = \lVert W - Z \rVert_2$ is $\sqrt{\sigma^2 (1 + \frac{1}{n})}X$, where $X$ is a chi random variable with $d$ degrees of freedom. The moment generating function $M(\lambda)$ of such a random variable is
\begin{equation*}
    M(\lambda) = \Bar{M}\Big(\frac{d}{2},\frac{1}{2},\frac{\lambda^2}{2}\Big) +  \frac{\lambda \sqrt{2} \Gamma\big(\frac{d+1}{2}\big)}{\Gamma\big(\frac{d}{2}\big)} \Bar{M}\Big(\frac{k+1}{2},\frac{3}{2},\frac{\lambda^2}{2} \Big),
\end{equation*}
where $\Bar{M}$ is the Kummer's confluent hypergeometric function.

The expression of this moment generating function is too convoluted to study for $d > 1$. Nonetheless, for $d=1$ it has a closed form expression, namely
\begin{equation*}
    M(\lambda) = e^{\frac{\lambda^2}{2}} \Big(1 + \textnormal{erf}\Big(\frac{\lambda}{\sqrt{2}}\Big) \Big).
\end{equation*}

Therefore, the cumulant generating function is $    \Lambda(\lambda) = \frac{\lambda^2}{2} + \log ( 1 + \textnormal{erf}(\frac{\lambda}{\sqrt{2}}))$, which is bounded from above by the convex function $\psi(\lambda) = \frac{\lambda^2}{2}$ for all $\lambda \in (-\infty,0]$.
Hence, the bound from~\citep{bu2020tightening} can be applied yielding
\begin{equation*}
    \overline{\textnormal{gen}}(W,S) \leq \frac{1}{n} \sum_{i=1}^n \sqrt{2 \sigma^2 \Big(1 + \frac{1}{n}\Big)  I(W;Z_i)} \leq \sqrt{\sigma^2 \Big(1 + \frac{1}{n}\Big)  \log \Big(\frac{n}{n-1}\Big)} \leq \sqrt{\frac{2 \sigma^2}{n-1}},
\end{equation*}
where the last inequality stems from noting that $\frac{n}{n-1} = 1 + \frac{1}{n-1}$, the fact that $\log(1+x) \leq x$, and bounding $(1+\frac{1}{n})$ from above by $2$.

%%%%%%%%%%%%%%%%%%%%%%%%%%%%%%%%%%%%%%%%%%%%%%%%%%%%%%%%%%%%
\section{Randomized-subsample setting and the BH inequality}
\label{app:rs_and_bh}

In Corollaries~\ref{cor:iwas_rs} and~\ref{cor:rswas_rd}, the immediate bound that stems from the use of the BH inequality is not included.
The reason for this is that the relative entropies $\KL{P_{\smash{W|\tilde{Z}_i, \tilde{Z}_{i+n}, U_i}}}{P_{\smash{W|\tilde{Z}_i, \tilde{Z}_{i+n}}}}$ and $\KL{P_{\smash{W|\tilde{S}, U, R}}}{P_{\smash{W|\tilde{S}, U_{J^c}, R}}}$, when $|J| = 1$, are never greater than $\log(2)$ as shown in Lemma~\ref{lemma:kl_smaller_than_log2} below.
Hence, the range of these relative entropies is inside the range where Pinsker's inequality is tighter than the BH inequality.
\begin{lemma}
\label{lemma:kl_smaller_than_log2}
Let $P_{X|A,B}$ be a conditional probability distribution on $\cX$, where $B$ is a Bernoulli random variable with probability $1/2$ and $A \in \mathcal{A}$ is a random variable independent of $B$.
Let also $P_{X|A} = \bE[P_{X|A,B} \mid A]$.
Then, $\KL{P_{X|A,B}}{P_{X|A}} \leq \log(2)$.
\end{lemma}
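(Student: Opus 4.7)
The plan is to exploit the fact that $P_{X|A}$ is, by the independence of $A$ and $B$ and the law of total probability, an equal mixture of the two conditional laws $P_{X|A,B=0}$ and $P_{X|A,B=1}$, and therefore no event can be much more likely under either component than under the mixture. This will yield a pointwise bound on the Radon--Nikodym derivative, which integrates to the required bound on the relative entropy.

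Concretely, I would first observe that for $P_A$-almost every $a \in \mathcal{A}$ and every $b \in \{0,1\}$,
\begin{equation*}
    P_{X|A=a} = \tfrac{1}{2} P_{X|A=a, B=0} + \tfrac{1}{2} P_{X|A=a, B=1} \geq \tfrac{1}{2} P_{X|A=a, B=b},
\end{equation*}
as a consequence of the independence of $A$ and $B$ together with the hypothesis $\mathbb{P}\{B=0\}=\mathbb{P}\{B=1\}=1/2$. In particular, $P_{X|A=a,B=b}$ is absolutely continuous with respect to $P_{X|A=a}$, so the Radon--Nikodym derivative $dP_{X|A=a,B=b}/dP_{X|A=a}$ exists and satisfies
\begin{equation*}
    \frac{dP_{X|A=a,B=b}}{dP_{X|A=a}}(x) \leq 2 \qquad P_{X|A=a,B=b}\text{-a.s.}
\end{equation*}

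Taking the logarithm on both sides gives $\log(dP_{X|A=a,B=b}/dP_{X|A=a}) \leq \log 2$ almost surely under $P_{X|A=a,B=b}$. Integrating this inequality against $P_{X|A=a,B=b}$ yields
\begin{equation*}
    \KL{P_{X|A=a,B=b}}{P_{X|A=a}} \leq \log 2,
\end{equation*}
which is exactly the claim for each such $(a,b)$, and therefore holds $P_{A,B}$-almost surely. There is really no obstacle here: the only point to be mildly careful about is that the identity $P_{X|A} = \frac{1}{2}(P_{X|A,0} + P_{X|A,1})$ uses the independence of $A$ and $B$ (otherwise the mixture weights would depend on $a$ and be arbitrary), and that absolute continuity and the associated Radon--Nikodym derivative are well-defined on the Polish space under consideration.
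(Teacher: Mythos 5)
Your proof is correct and takes essentially the same approach as the paper: both hinge on the mixture identity $P_{X|A} = \tfrac{1}{2}\bigl(P_{X|A,B} + P_{X|A,1-B}\bigr)$, which forces the Radon--Nikodym derivative $dP_{X|A,B}/dP_{X|A}$ to be at most $2$ almost everywhere. The paper phrases this as $\log\bigl(dP_{X|A,B}/dP_{X|A}\bigr) = \log 2 - \log\bigl(1 + dP_{X|A,(1-B)}/dP_{X|A,B}\bigr)$ and drops the nonnegative correction term, whereas you state the pointwise bound on the derivative directly and integrate; the underlying argument is the same.
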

\begin{proof}
In this situation $P_{X|A}$ dominates $P_{X|A,B}$ and $P_{X|A,(1-B)}$, that is $P_{X|A,B} \ll P_{X|A}$ and $P_{X|A,(1-B)} \ll P_{X|A}$, since $P_{X|A} = (P_{X|A,B} + P_{X|A,(1-B)})/2$.
Therefore, 
\begin{align*}
    \KL{P_{X|A,B}}{P_{X|A}} &= \bE \Bigg[ \log \Bigg( \frac{dP_{X|A,B}}{d\big(\frac{1}{2} P_{X|A,B} + \frac{1}{2} P_{X|A,(1-B)}\big)} \Bigg) \ \Bigg| \ A, B \Bigg] \\
    &\stack{a}{=} -\bE \Bigg[ \log \Bigg( \frac{d\big(\frac{1}{2} P_{X|A,B} + \frac{1}{2} P_{X|A,(1-B)}\big)}{dP_{X|A,B}} \Bigg) \ \Bigg| \ A, B \Bigg] \\
    &\stack{b}{=} \log(2) - \bE \bigg[ \log \bigg( 1 + \frac{dP_{X|A,(1-B)}}{dP_{X|A,B}} \bigg) \ \bigg| \ A, B\bigg] \\
    &\stack{c}{\leq} \log(2),
\end{align*}
where $(a)$ stems from~\citep[Exercise 9.27]{mcdonald1999course}, $(b)$ follows from the linearity $P_{X|A,B}$-a.e.\ of the Radon--Nikodym derivative, and $(c)$ is due to the fact that $\log(1+x) \geq 0$ for all $x \geq 0$ and the fact that $dP_{X|A,(1-B)}/dP_{X|A,B}$ is always positive.
Also, steps $(a)$, $(b)$, and $(c)$ are possible since the expectation integrates over the support of $P_{X|A,B}$, avoiding the problems of absolute continuity in $(c)$ and absorbing the $P_{X|A,B}$-a.e.\ properties for $(a)$ and $(b)$.
\end{proof}
\begin{remark}
\label{rm:extension_log2_to_J}
Lemma~\ref{lemma:kl_smaller_than_log2} can be easily extended to the case where $B$ is a sequence of $k$ Bernoulli random variables $B_i$, noting that $P_{X|A} = 2^{-k} \sum_{j=1}^{2^k} P_{X|A,\mathcal{B}_j}$, where $\mathcal{B}$ are all the $2^k$ random sequences $\mathcal{B}_j$ where the $i$-th element can be either $B_i$ or $(1-B_i)$.
In that case, we have that $\KL{P_{X|A,B}}{P_{X|A}} \leq k \log(2)$.
\end{remark}

Then, note that $\KL{P_{\smash{W|\tilde{Z}_i, \tilde{Z}_{i+n}, U_i}}}{P_{\smash{W|\tilde{Z}_i, \tilde{Z}_{i+n}}}} \leq \log(2)$ if $A = (\tilde{Z}_i,\tilde{Z}_{i+n})$, $B = U_i$, and $X = W$.
Similarly, for $|J|=1$, note that $\KL{P_{\smash{W|\tilde{S}, U, R}}}{P_{\smash{W|\tilde{S}, U_{J^c}, R}}} \leq \log(2)$ if $A = (\tilde{S},U_{J^c},R)$, $B = U_J$, and $X=W$.
\begin{remark}
In Corollary~\ref{cor:rswas_rd}, when $|J| > 2$, it is not guaranteed that the inequality obtained from Pinsker's inequality is tighter than the one obtained with the BH inequality.
For instance, as per Remark~\ref{rm:extension_log2_to_J}, $\KL{P_{\smash{W|\tilde{S}, U, R}}}{P_{\smash{W|\tilde{S}, U_{J^c}, R}}}$ could be as large as $|J| \log(2)$, which is already larger than $1.6$ for $|J| = 3$.
Hence, for $|J| > 2$, one should also consider that inequality if one desires the tightest bound.

However, the bound derived from the BH inequality was not included since this kind of bounds are usually employed for $|J| = 1$, e.g., \citep[Theorem 4.2]{haghifam2020sharpened} and \citep[Proposition 6]{rodriguez2020random}.
Moreover, after applying Jensen's inequality, it is shown that the derived mutual-information bounds are the tightest when $|J| = 1$ \citep[Corollary 3.3]{haghifam2020sharpened}.
\end{remark}

%%%%%%%%%%%%%%%%%%%%%%%%%%%%%%%%%%%%%%%%%%%%%%%%%%%%%%%%%%%%
\section{Rate-distortion theory and generalization}
\label{app:rd_gen}

\subsection{Rate--distortion theory}

Rate--distortion theory \citep{cover2006elements, polyanskiy2014lecture} deals with the problem of determining the minimum number of bits, determined by the \emph{rate} $R$, that should be employed to characterize a signal $X$ by $Y$ so that this signal can later be recovered with an expected \emph{distortion} lower than $\delta$.
Formally, given a signal $X$ with distribution $P_X$ and a distortion measure $d$, the \emph{rate--distortion} function $R(\delta)$ finds the optimal encoding distribution $P_{Y|X}^{\star}$, i.e., the channel $P_{Y|X}$ that generates a representation $Y$ with the minimum amount of bits $R(\delta)$ and an expected distortion lower than $\delta$.
Namely,
\begin{equation*}
    R(\delta) = \inf_{P_{Y|X}: \bE[d(X,Y)] \leq \delta} I(X;Y).
\end{equation*}

A celebrated result in rate--distortion theory is its duality.
More precisely, instead of looking for the channel $P_{Y|X}$ that most compresses a signal $X$ with a limited distortion $\delta$, one can look for the channel $P_{Y|X}$ that less distorts the signal $X$ with a limited budget of bits $r$.
That is, one can solve the \emph{distortion--rate} function
\begin{equation*}
    D(r) = \inf_{P_{Y|X}: I(X;Y) \leq r} \bE[d(X,Y)].
\end{equation*}

Then, the duality theorem states that $R(\delta) = D^{-1}(\delta)$ and $D(r) = R^{-1}(r)$~\citep[Lemma~4.1.2]{gray2012source}, or, in words, that the inverse of the rate--distortion function is the distortion--rate function, and vice versa.
%Hence, for all maximum distortion $\delta$ there is a maximum rate $r$ such that $R(\delta) = D(r)$. 
%
\begin{remark}
Rate--distortion theory is a well-studied field and the rate--distortion and distortion--rate functions have many more interesting properties and analytical solutions and bounds for particular (and common) cases.
\end{remark}

\paragraph{Single-letterization}
Sometimes, if $X$ is a sequence of signals $X = (X_1, \ldots, X_n)$, solving the rate--distortion function is challenging.
Assume that the signals $X_i$ are independent and the distortion $d$ is separable, i.e., $\bE[d(X,Y)] = \frac{1}{n} \sum_{i=1}^n \bE[d(X_i,Y_i)]$.
Then, a simpler task is to solve the single-letter version of the rate--distortion function. Namely, for any $i \in [n]$, to solve
\begin{equation*}
    R_i(\delta_i) = \inf_{P_{Y_i|X_i}: \bE[d(X_i,Y_i)] \leq \delta} I(X_i;Y_i).
\end{equation*}
Then, for any $i$, the equality $n R_i(\delta) = R(\delta)$ \citep[Theorem~26.1]{polyanskiy2014lecture} holds.
Hence, the channel $P_{Y|X} = P_{Y_i|X_i}^{\otimes n}$ can be used as a proxy for $P_{Y|X}^{\star}$.

\paragraph{Backward-channel}
Sometimes, working with the backward channel $P_{X|Y}$ is convenient to derive an analytical solution to the rate--distortion function for a particular input signal distribution, e.g., Bernoulli or Gaussian~\citep[Chapter 27.1]{polyanskiy2014lecture}; to derive an analytical bound for certain distribution families, e.g., bounded variance; or to derive an analytical bound for certain distortion families, e.g., difference, additive, or autoregressive distortions \citep[Sections 4.3, 4.6, and 4.7]{gray2012source}

\subsection{Connection to the generalization error}

When designing a learning algorithm $P_{W|S}$, the aim is an algorithm that attains a low accuracy error (or risk) while generalizing well.
This informal sentiment can be posed as constrained optimization as follows: to design an algorithm $P_{W|S}$ that has the minimum possible expected generalization error $G(\epsilon)$ while maintaining a population risk lower than $\epsilon$.
Namely, one may consider the \emph{generalization--risk} function to be
\begin{equation*}
    G(\epsilon) = \inf_{P_{W|S}: \bE[L_S(W)] \leq \epsilon} \bE[\textnormal{gen}(W,S)],
\end{equation*}
and select the algorithm $P_{W|S}^{\star}$ that solves it.

Furthermore, the expected generalization error increases with $I(W;S)$ as shown in~\citep[Theorem~1]{xu2017information} or~\eqref{eq:gen_increases_with_iws}.
Therefore, one may instead consider the \emph{information-generalization--risk} function
\begin{equation*}
    G^{\blacktriangle}(\epsilon) = \inf_{P_{W|S}: \bE_{P_{W,S}}[L_S(W)] \leq \epsilon} I(W;S),
\end{equation*}
and use it as a surrogate of the generalization--risk function to choose the algorithm $P_{W|S}^{\blacktriangle}$ that solves $G^{\blacktriangle}(\epsilon)$ as a proxy for $P_{W|S}^{\star}$.
Therefore, the powerful rate--distortion theory may be employed to select a sensible learning algorithm or, at least, to better understand the trade-off between generalization and risk.
\begin{remark}
There are several issues to be considered when applying rate--distortion theory to the information-generalization--risk function.
For instance, usually a hypothesis is not separable, i.e., $W \neq (W_1, \ldots, W_n)$, and hence the single-letterization of the problem must be carried obtaining $P_{W|Z_i}$ instead of $P_{W_i|Z_i}$, which is inconvenient since then obtaining $P_{W|S}$ from $P_{W|Z_i}$ is cumbersome.
Nonetheless, the aim of this section is just to give some intuition about the connection between rate--distortion and generalization, and a proper formal framework is beyond the objective of this paper. The reader is referred to \citep{hafez2020conditioning} and \citep{bu2020information} for other connections between these two concepts. 
\end{remark}

%%%%%%%%%%%%%%%%%%%%%%%%%%%%%%%%%%%%%%%%%%%%%%%%%%%%%%%%%%%%
\section{Additional remarks on the chi-squared based bounds}
\label{app:chi_squared_gen}

As mentioned in \S\ref{subsec:other_info_measures}, the presented bounds in terms of the total variation result in bounds based on the $\chi^2$-divergence and other $f$-divergences employing the joint range strategy~\citep[Chapter~7]{polyanskiy2014lecture}.
In order to do so, the loss function $\ell$ is required to be $L$-Lipschitz for all $z \in \cZ$ under the discrete metric, or, in other words, to be bounded in a range $[c,c+L]$ for some $c \in \bR$.
\begin{claim}
\label{claim:lipschitz_implies_bounded}
If a function $f$ is $L$-Lipschitz under the discrete metric, it is bounded in $[c,c+L]$ for some $c \in \bR$.
\end{claim}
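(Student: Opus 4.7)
The plan is a short direct argument from the definition of the discrete metric. First I would unpack what $L$-Lipschitz under $\rho_{\textnormal{H}}$ means: for any $x, y \in \cX$, $|f(x) - f(y)| \leq L \cdot \bI[x \neq y]$. When $x = y$ this gives $0 \leq 0$, and when $x \neq y$ this gives $|f(x) - f(y)| \leq L$. So in either case we obtain $|f(x) - f(y)| \leq L$ for all $x,y \in \cX$, i.e., the range of $f$ has oscillation at most $L$.

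Next I would set $c \triangleq \inf_{x \in \cX} f(x)$. This infimum is finite since, by fixing any $x_0 \in \cX$ and applying the bound above, $f(x) \geq f(x_0) - L$ for all $x$, so $c \geq f(x_0) - L > -\infty$. By definition of the infimum, $f(x) \geq c$ for every $x \in \cX$. On the other hand, for every $x, y$ we have $f(x) \leq f(y) + L$, and taking the infimum over $y$ on the right yields $f(x) \leq c + L$. Hence $f(x) \in [c, c+L]$ for all $x \in \cX$, which is the desired conclusion.

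There is no real obstacle here: the statement is essentially a restatement of the fact that the discrete metric makes all distinct points unit-distance apart, so Lipschitz continuity under $\rho_{\textnormal{H}}$ is just a uniform bound on pairwise differences. The only minor care is ensuring $c$ is finite, which follows immediately from the pairwise bound against any reference point.
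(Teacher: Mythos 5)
Your proof is correct and follows the same route as the paper: reduce $L$-Lipschitzness under $\rho_{\textnormal{H}}$ to the uniform pairwise bound $|f(x)-f(y)|\leq L$, then conclude boundedness on an interval of length $L$. The paper asserts the equivalence without detail; you simply make the choice $c = \inf_x f(x)$ explicit and verify its finiteness, which is a welcome clarification but not a different argument.
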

\begin{proof}
If $|f(x) - f(y)| \leq L \rho_{\textnormal{H}}(x,y)$ for all $x,y \in \cX$ then $|f(x) - f(y)| \leq L$.
This holds if and only if $f:\cX \to [c,c+L]$ for some $c \in \bR$. 
\end{proof}

As an example, Equation~\eqref{eq:chi_squared_basic} is obtained as a corollary of Theorem~\ref{th:iwas_std}.
However, note that Theorems~\ref{th:iwas_std}, \ref{th:rswas_std}, \ref{th:iwas_rs}, and \ref{th:rswas_rs}, and Equations~\eqref{eq:was_std} and~\eqref{eq:was_rd} can be replicated using the variational representation of the $\chi^2$-divergence~\citep[Example 6.4]{wu2017lecture}, which states that for all distributions $P,Q$ over $\cX$
\begin{equation*}
    \chi^2(P,Q) = \sup_{f:\cX \to \bR} \Bigg \lbrace \frac{\big(\bE[f(X)] - \bE[f(X)]\big)^2}{\textnormal{Var}[Y]} \Bigg\rbrace,
\end{equation*}
where $X \sim P$, $Y \sim Q$, and the supremum is taken over all functions $f$ with finite expectation with respect to $P$ and $Q$ and finite variance with respect to $Q$.
%Thence,
Using this tool instead of the KR duality,
Theorem~\ref{th:iwas_std} would result in
\begin{equation}
    \big| \overline{\textnormal{gen}}(W,S) \big| \leq \frac{1}{n} \sum_{i=1}^n \bE \bigg[\sqrt{\textnormal{Var}\big[\ell(W',Z_i)\big] \  \chi^2(P_{W|Z_i},P_{W})}\bigg],
    \label{eq:chi_squared_variational}
\end{equation}
where $W'$ is an independent copy of $W$ such that $P_{W,Z_i} = P_{W} \otimes P_{Z_i}$ for all $i \in [n]$. Equation \eqref{eq:chi_squared_variational}, instead of requiring that the loss $\ell$ is $L$-Lipschitz under the discrete metric for all $z \in \cZ$, it requires that the function $\ell(w,z)$ has finite expectation with respect to $P_{W|Z_i=z}$ and $P_W$ and finite variance with respect to $P_W$ for all $z \in \cZ$.
Note that this is a weaker requirement since Lipschitzness under the discrete metric implies boundedness (Claim~\ref{claim:lipschitz_implies_bounded}) and Popoviciu's inequality states that a bounded random variable has finite variance. 

In particular, under the assumption that $\ell$ is bounded with a range $[c,L+c]$, one may use Popoviciu's inequality~\citep{popoviciu1935equations}, i.e.,  $\textnormal{Var}[\ell(W',Z_i)] \leq L^2 / 4$, to recover the corollaries of Theorems~\ref{th:iwas_std}, \ref{th:rswas_std}, \ref{th:iwas_rs}, and \ref{th:rswas_rs} using the discrete metric and the joint range.
As an example, using Popoviciu's inequality in combination with~\eqref{eq:chi_squared_variational} recovers \eqref{eq:chi_squared_basic}.
Hence, the bounds obtained through the variational representation of the $\chi^2$-divergence are both tighter and more general than those obtained after applying the joint range strategy as in~\citep[{\S{IV-B}}]{harremoes2011pairs} to the total variation bounds obtained through the KR duality.
Compared to the bound from~\eqref{eq:chi_squared_via_kl}, obtained after applying first Pinsker's inequality and then the joint range strategy, Equation~\eqref{eq:chi_squared_variational} becomes looser as soon as 
\begin{equation*}
	\chi^2(P_{W|Z_i},P_W) \geq \frac{-1}{\alpha} \Big(\texttt{W}\big(-\alpha e^{-\alpha} \big) + \alpha \Big),
\end{equation*} 
where $\texttt{W}$ denotes the Lambert or product-log function and $\alpha = \textnormal{Var}[\ell(W',Z_i)] / 2$.
In the extreme case where $\textnormal{Var}[\ell(W',Z_i)] = L^2/4$, Equation~\eqref{eq:chi_squared_via_kl} is tighter than~\eqref{eq:chi_squared_variational} as soon as $\chi^2(P_{W|Z_i},P_W) \gtrapprox 2.51$, a restriction that becomes more favorable to~\eqref{eq:chi_squared_variational} as the variance decreases.
More precisely, the bound obtained from the variational representation of the $\chi^2$-divergence is tighter than~\eqref{eq:chi_squared_via_kl} in the range where both bounds are non-vacuous, i.e., when $\textnormal{Var}[\ell(W',Z_i)] \leq (e^2 - 1)^{-1} L^2$.

\end{document}